\documentclass[final]{styles/colt2021/colt2021} 

\usepackage{times}
\usepackage{booktabs}       
\usepackage{amsfonts}       
\usepackage{nicefrac}       
\usepackage{enumitem}
\usepackage{nicefrac}

\usepackage{amsmath,amssymb,mathrsfs}

\usepackage{footnote}
\usepackage{algorithm}
\usepackage{algorithmic}

\allowdisplaybreaks

\makeatletter
\def\set@curr@file#1{\def\@curr@file{#1}} 
\makeatother

\usepackage{graphicx}
\usepackage{mathtools}
\usepackage{footnote}
\usepackage{float}
\usepackage{xspace}
\usepackage{multirow}
\usepackage{wrapfig}
\usepackage{framed}
\usepackage{bbm}
\usepackage{footnote}
\usepackage{nicefrac}
\usepackage{makecell}
\usepackage[T1]{fontenc}
\makesavenoteenv{tabular}
\makesavenoteenv{table}

\definecolor{Green}{rgb}{0.13, 0.65, 0.3}
\usepackage[]{color-edits}

\addauthor{HL}{red}
\addauthor{LC}{cyan}
\addauthor{CW}{green}

\newcommand{\calA}{{\mathcal{A}}}
\newcommand{\calB}{{\mathcal{B}}}

\newcommand{\calD}{{\mathcal{D}}}
\newcommand{\calE}{{\mathcal{E}}}

\newcommand{\calG}{{\mathcal{G}}}
\newcommand{\calI}{{\mathcal{I}}}

\newcommand{\calK}{{\mathcal{K}}}
\newcommand{\calL}{{\mathcal{L}}}

\newcommand{\calP}{{\mathcal{P}}}

\newcommand{\calS}{{\mathcal{S}}}

\newcommand{\calZ}{{\mathcal{Z}}}

\newcommand{\one}{\boldsymbol{1}}
\newcommand{\zero}{\boldsymbol{0}}

\newcommand{\KL}{\text{\rm KL}}

\DeclareMathOperator*{\argmin}{argmin}
\DeclareMathOperator*{\argmax}{argmax}
\DeclarePairedDelimiter\abs{\lvert}{\rvert}

\newcommand{\eat}[1]{}

\newcommand{\inner}[2]{\left\langle #1,#2 \right\rangle}
\newcommand{\inners}[2]{\langle #1,#2\rangle}

\newcommand{\rbr}[1]{\left(#1\right)}
\newcommand{\sbr}[1]{\left[#1\right]}
\newcommand{\cbr}[1]{\left\{#1\right\}}

\newcommand{\abr}[1]{\left|#1\right|}
\newcommand{\bigO}[1]{\order\left( #1 \right)}
\newcommand{\tilO}[1]{\otil\left( #1 \right)}

\newcommand{\bigo}[1]{\order( #1 )}
\newcommand{\tilo}[1]{\otil( #1 )}

\DeclarePairedDelimiter\ceil{\lceil}{\rceil}

\newcommand{\mgeq}{\succcurlyeq}
\newcommand{\mleq}{\preccurlyeq}


\newcommand{\cmin}{\ensuremath{c_{\min}}}

\newcommand{\reg}{\textsc{Reg}}



\newcommand{\lgT}{\lceil \log_2 T\rceil}
\newcommand{\cmax}{c_{\max}}

\newcommand{\tilB}{\widetilde{B}}

\newcommand{\wstar}{w^{\star}}
\newcommand{\mss}{\calS_{\text{\rm MS}}}
\newcommand{\ONSbase}{\calE_{\text{\rm ONS}}}
\newcommand{\GDbase}{\calE_{\text{\rm GD}}}
\newcommand{\AdaGradbase}{\calE_{\text{\rm AG}}}
\newcommand{\MetaGradbase}{\calE_{\text{\rm MG}}}
\newcommand{\KLbase}{\calE_{\text{\rm KL}}}
\newcommand{\URbase}{\calE_{\text{\rm UR}}}
\newcommand{\switchbase}{\calE_{\text{\rm switch}}}
\newcommand{\msbase}{\calE_{\text{\rm MS}}}
\newcommand{\OMLProd}{\textsc{Optimistic-Adapt-ML-Prod}\xspace}
\newcommand{\MLProd}{\textsc{Adapt-ML-Prod}\xspace}
\newcommand{\ABProd}{$(\calA, \calB)$-\textsc{Prod}\xspace}
\newcommand{\Prod}{\textsc{Prod}\xspace}
\newcommand{\istar}{{i_\star}}
\newcommand{\kstar}{{k_\star}}
\newcommand{\etastar}{\eta_\star}
\newcommand{\Astar}{\calA_\star}

\newcommand{\master}{\textsc{MsMwC-Master}\xspace}
\newcommand{\alg}{\textsc{MsMwC}\xspace}
\newcommand{\msmw}{\textsc{MsMw}\xspace}
\newcommand{\kl}{f_{\text{\rm KL}}}
\newcommand{\simplex}[1][d]{\Delta_{#1}}
\newcommand{\sumj}{\sum_{j=1}^S}
\newcommand{\sumi}{\sum_{i=1}^d}
\newcommand{\sumt}{\sum_{t=1}^T}

\newcommand{\REG}{\textrm{REG}}


\newcommand{\field}[1]{\mathbb{#1}}

\newcommand{\fR}{\field{R}}

\newcommand{\fZ}{\field{Z}}
\newcommand{\E}{\field{E}}

\newcommand{\norm}[1]{\left\|{#1}\right\|}
\newcommand{\dnorm}[1]{\left\|{#1}\right\|_\star}
\newcommand{\trace}[1]{\text{\rm tr}\left({#1}\right)}

\newcommand{\defeq}{\stackrel{\rm def}{=}}

\newcommand{\order}{\ensuremath{\mathcal{O}}}

\newcommand{\otil}{\ensuremath{\tilde{\mathcal{O}}}}

\usepackage{prettyref}
\newcommand{\pref}[1]{\prettyref{#1}}
\newcommand{\pfref}[1]{Proof of \prettyref{#1}}
\newcommand{\savehyperref}[2]{\texorpdfstring{\hyperref[#1]{#2}}{#2}}
\newrefformat{eq}{\savehyperref{#1}{Eq.~\textup{(\ref*{#1})}}}
\newrefformat{eqn}{\savehyperref{#1}{Equation~\ref*{#1}}}
\newrefformat{lem}{\savehyperref{#1}{Lemma~\ref*{#1}}}
\newrefformat{def}{\savehyperref{#1}{Definition~\ref*{#1}}}
\newrefformat{line}{\savehyperref{#1}{Line~\ref*{#1}}}
\newrefformat{thm}{\savehyperref{#1}{Theorem~\ref*{#1}}}
\newrefformat{corr}{\savehyperref{#1}{Corollary~\ref*{#1}}}
\newrefformat{cor}{\savehyperref{#1}{Corollary~\ref*{#1}}}
\newrefformat{sec}{\savehyperref{#1}{Section~\ref*{#1}}}
\newrefformat{subsec}{\savehyperref{#1}{Section~\ref*{#1}}}
\newrefformat{app}{\savehyperref{#1}{Appendix~\ref*{#1}}}
\newrefformat{assum}{\savehyperref{#1}{Assumption~\ref*{#1}}}
\newrefformat{ex}{\savehyperref{#1}{Example~\ref*{#1}}}
\newrefformat{fig}{\savehyperref{#1}{Figure~\ref*{#1}}}
\newrefformat{alg}{\savehyperref{#1}{Algorithm~\ref*{#1}}}
\newrefformat{rem}{\savehyperref{#1}{Remark~\ref*{#1}}}
\newrefformat{conj}{\savehyperref{#1}{Conjecture~\ref*{#1}}}
\newrefformat{prop}{\savehyperref{#1}{Proposition~\ref*{#1}}}
\newrefformat{proto}{\savehyperref{#1}{Protocol~\ref*{#1}}}
\newrefformat{prob}{\savehyperref{#1}{Problem~\ref*{#1}}}
\newrefformat{claim}{\savehyperref{#1}{Claim~\ref*{#1}}}
\newrefformat{que}{\savehyperref{#1}{Question~\ref*{#1}}}
\newrefformat{op}{\savehyperref{#1}{Open Problem~\ref*{#1}}}
\newrefformat{fn}{\savehyperref{#1}{Footnote~\ref*{#1}}}
\newrefformat{tab}{\savehyperref{#1}{Table~\ref*{#1}}}
\newrefformat{fig}{\savehyperref{#1}{Figure~\ref*{#1}}}

\DeclareOldFontCommand{\rm}{\normalfont\rmfamily}{\mathrm}
\DeclareOldFontCommand{\it}{\normalfont\itshape}{\mathit}
\title[Impossible Tuning Made Possible]{Impossible Tuning Made Possible: \\ A New Expert Algorithm and Its Applications}
\usepackage{times}
\coltauthor{%
 \Name{Liyu Chen\nametag{\thanks{Authors are listed in alphabetical order.}}} \Email{liyuc@usc.edu}\\
 \addr University of Southern California
 \AND
 \Name{Haipeng Luo\nametag{\footnotemark[1]}} \Email{haipengl@usc.edu}\\
 \addr University of Southern California
 \AND
 \Name{Chen-Yu Wei\nametag{\footnotemark[1]}} \Email{chenyu.wei@usc.edu}\\
 \addr University of Southern California
}

\begin{document}
\SetAlgoVlined
\DontPrintSemicolon
\maketitle

\begin{abstract}
We resolve the long-standing ``impossible tuning'' issue for the classic expert problem and show that, it is in fact possible to achieve regret $\otil\Big(\sqrt{(\ln d)\sumt \ell_{t,i}^2}\Big)$ simultaneously for all expert $i$ in a $T$-round $d$-expert problem where $\ell_{t,i}$ is the loss for expert $i$ in round $t$.
Our algorithm is based on the Mirror Descent framework with a correction term and a weighted entropy regularizer.
While natural, the algorithm has not been studied before and requires a careful analysis.
We also generalize the bound to $\otil\Big(\sqrt{(\ln d)\sumt (\ell_{t,i}-m_{t,i})^2}\Big)$ for any prediction vector $m_t$ that the learner receives,
and recover or improve many existing results by choosing different $m_t$.
Furthermore, we use the same framework to create a master algorithm that combines a set of base algorithms and learns the best one with little overhead.
The new guarantee of our master allows us to derive many new results for both the expert problem and more generally Online Linear Optimization.
\end{abstract}

\section{Introduction}

In the classic expert problem~\citep{freund1997decision}, a learner interacts with an adversary for $T$ rounds,
where in each round $t$, the learner first decides a distribution $w_t \in \simplex$ over a fixed set of $d$ experts, and then the adversary decides a loss vector $\ell_t \in \fR^d$.
The learner suffers loss $\inner{w_t}{\ell_t}$ and observes $\ell_t$ at the end of round $t$. The regret against a fixed strategy $u \in \simplex$ is defined as $\reg(u) = \sumt \inner{w_t - u}{\ell_t}$.
Many existing algorithms achieve $\max_u \reg(u) = \max_i \reg(e_i) = \bigo{\sqrt{T\ln d}}$, which is known to be minimax optimal.

In particular, both the \Prod algorithm~\citep{cesa2007improved}, which sets $w_{t+1,i} \propto w_{t, i}(1-\eta \ell_{t,i})$, and a variant of the classic multiplicative-weight~\citep{steinhardt2014adaptivity}, which sets $w_{t+1,i} \propto w_{t, i}e^{-\eta\ell_{t,i} - \eta^2 \ell_{t,i}^2}$, achieve a regret bound $\reg(e_i) \leq \frac{\ln d}{\eta} + \eta\sumt \ell_{t,i}^2$ for some learning rate $\eta$.
With the optimal tuning of $\eta$, this gives an adaptive bound $\reg(e_i) = \order\Big(\sqrt{(\ln d) \sumt \ell_{t,i}^2}\Big)$, potentially much better than the minimax bound.
However, since different expert $i$ requires a different tuning, no method is known to achieve this bound {\it simultaneously for all $i$}.
Several works discuss the difficulty of doing so even with different $\eta$ for different experts and why all standard tuning techniques fail~\citep{cesa2007improved, hazan2010extracting}.
Indeed, the problem is so challenging that it has been referred to as the ``impossible tuning'' issue~\citep{gaillard2014second}.

Our first main contribution is to show that, perhaps surprisingly, this impossible tuning is in fact {\it possible} (up to an additional $\ln T$ factor), via an algorithm combining ideas that mostly appear before already.
More concretely, we achieve this via Mirror Descent with a correction term similar to~\citep{steinhardt2014adaptivity} and a weighted negative entropy regularizer with different learning rates for each expert (and each round) similar to~\citep{bubeck2017online}.
Note that while natural, this algorithm has not been studied before,\footnote{%
Except that a simpler version is used in a concurrent work~\citep{chen2020minimax} by the same authors for a different problem (learning stochastic shortest path).
}
 and is {\it not equivalent} to using different learning rates for different experts in \Prod or multiplicative-weight, as it does not admit a closed ``proportional'' form (and instead needs to be computed via a line search).
Crucially, our analysis carefully utilizes a negative term in the regret bound to achieve the claimed result.

\renewcommand{\arraystretch}{1.2}
\begin{table}[t]
	\centering
	\caption{Summary of main results. $w_t\in\fR^d$ is the decision of the learner, $\ell_t$ is the loss vector, $m_t$ is a prediction for $\ell_t$, $\calL_T = \sumt (\ell_t-m_t)(\ell_t-m_t)^\top$, and $r$ is the rank of $\calL_T$.}
	\label{tab:main}
	\resizebox{.94\textwidth}{!}{%
	\begin{tabular}{|l|l|l|}
		\hline
		 & \multicolumn{1}{c|}{Results} & \multicolumn{1}{c|}{Notes} \\
		 \hline
		 \multirow{5}{*}{\makecell{Expert \\ $\reg(e_i)$}} & \multicolumn{1}{c|}{$\otil\Big(\sqrt{(\ln d)\sumt (\ell_{t,i}-m_{t,i})^2}\Big)$}   & \multirow{5}{*}{\makecell[l]{$\bullet$\; $\ln d$ can be generalized to $\KL(u, \pi)$ \\ \quad for competitor $u$ and prior $\pi$ \\ $\bullet$\; all results generalize to switching \\ \quad regret and unknown loss range \\ $\bullet$\; analogue for interval regret or bandits \\ \quad is impossible }} \\
		 & \multicolumn{1}{l|}{With different $m_t$, $(\ell_{t,i}-m_{t,i})^2$ becomes:} &   \\
		 & \multicolumn{1}{l|}{$\bullet$\;$\ell_{t,i}^2$ \hspace{1.8cm} $\bullet$\;$(\ell_{t,i}-\frac{1}{T}\sum_{s=1}^T \ell_{s,i})^2$} & \\
		 & \multicolumn{1}{l|}{$\bullet$\;$(\ell_{t,i}-\ell_{t-1,i})^2$ \hspace{0.11cm}$\bullet$\;$(\ell_{t,i}-\ell_{t,1})^2$} &  \\
		 & \multicolumn{1}{l|}{$\bullet$\;$\inner{w_t-e_i}{\ell_t}^2$ \hspace{0.18cm} $\bullet$\;$\inner{w_t-e_i}{\ell_t-m_t}^2$} &  \\
		 \hline
		 \multirow{5}{*}{\makecell{OLO \\ $\reg(u)$}} & \multicolumn{1}{l|}{$\otil\Big(\sqrt{r\sumt\inner{u}{\ell_t-m_t}^2}\Big)$}  & \multirow{3}{*}{\makecell{$\bullet$\; first three bounds hold simultaneously}} \\
		 & \multicolumn{1}{l|}{$\otil\Big(\norm{u}\sqrt{\sumt\dnorm{\ell_t-m_t}^2}\Big)$} &  \\
		 & \multicolumn{1}{l|}{$\otil\Big(\sqrt{\big(\norm{u}_2^2+u^\top\calL_T^{1/2}u \big)\text{\rm tr}\big(\calL_T^{1/2}\big)}\Big)$} &  \multirow{2}{*}{\makecell[l]{$\bullet$\; all results generalize to unconstrained \\ \quad learning and unknown Lipschitzness}}\\
		 & \multicolumn{1}{l|}{$\otil\Big(\sqrt{r\sumt\inner{u-w_t}{\ell_t-m_t}^2}\Big)$} &  \\
		 \hline
	\end{tabular}
	}
\end{table}

\tolerance 1414
\hbadness 1414
\emergencystretch 1.5em
\hfuzz 0.3pt
\widowpenalty=10000
\vfuzz \hfuzz
\raggedbottom

We present our result in a more general setting where the learner receives a predicted loss vector $m_t$ before deciding $w_t$~\citep{rakhlin2013optimization}, and show a bound $\reg(e_i)=\otil\Big(\sqrt{(\ln d)\sumt (\ell_{t,i}-m_{t,i})^2}\Big)$ simultaneously for all $i$
(setting $m_t=0$ resolves the original impossible tuning issue).
Using different $m_t$, we achieve various regret bounds summarized in \pref{tab:main}, which either recover the guarantees of existing algorithms such as \ABProd~\citep{sani2014exploiting}, \MLProd~\citep{gaillard2014second}, \OMLProd~\citep{wei2016tracking}, or improve over existing variance/path-length bounds in~\citep{steinhardt2014adaptivity}. 
We also show that the bound $\otil\Big(\sqrt{(\ln d)\sumt \inner{w_t-e_i}{\ell_t-\ell_{t-1}}^2}\Big)$, obtained by \citep{wei2016tracking} and our work, simultaneously ensures the ``fast rate'' consequences discussed in~\citep{koolen2016combining} for stochastic settings and the path-length bound useful for fast convergence in games~\citep{syrgkanis2015fast}.
See \pref{subsec:impossible_tuning} for detailed discussions.

Our second main contribution is to use the same algorithmic framework to create a master algorithm that combines a set of base algorithms and learns the best for different environments (\pref{subsec:master}).
Although similar ideas appear in many prior works with different masters~\citep{koolen2014learning, van2016metagrad, foster2017parameter, cutkosky2019combining, bhaskara2020online},
the new guarantee of our master allows us to derive many new results that cannot be achieved before, for both the expert problem and more generally Online Linear Optimization (OLO).

Specifically, for the expert problem, 
using the master to combine different instances of {\it itself}, 
we further generalize the aforementioned bound from different aspects,
including replacing the $\ln d$ factor with $\KL(u,\pi)$ when competing against $u$ with a prior distribution $\pi$,
adapting to the scale of each expert,
extending the results to switching regret, and dealing with unknown loss range.
These results improve over~\citep{luo2015achieving, koolen2015second}, \citep{bubeck2017online,foster2017parameter, cutkosky2018black}, \citep{cesa2012mirror}, and~\citep{mhammedi2019lipschitz} respectively.
See \pref{sec:app_expert} for detailed discussions.

Next, we consider the more general OLO problem where the learner's decision set generalizes from $\simplex$ to an arbitrary closed convex set $\calK \subset \fR^d$
(other than this change, the learning protocol and the regret definition remain the same).
Using our master to combine different types of base algorithms, we achieve four different and incomparable bounds on $\reg(u)$ simultaneously for all $u$, listed in \pref{tab:main}.
Importantly, the first three bounds can be achieved at the same time {\it with one single algorithm}.
These bounds improves over a line of recent advances in OLO~\citep{van2016metagrad, cutkosky2018black, cutkosky2019artificial, cutkosky2019combining, mhammedi2019lipschitz, mhammedi2020lipschitz, cutkosky2020better}.
See \pref{sec:app_OLO} for detailed discussions.


\paragraph{Notation}
Throughout the paper,
$\simplex$ denotes the $d-1$ dimensional simplex; $e_i, \zero, \one \in \fR^d$ are respectively the $i$-th standard basis vector, the all-zero vector, and the all-one vector;
$[n]$ denotes the set $\{1, \ldots, n\}$;
$\KL(\cdot, \cdot)$ denotes the KL divergence;
$\norm{u}_A=\sqrt{u^\top A u}$ is the quadratic norm with respect to a matrix $A$;
$D_{\psi}(u, w)=\psi(u)-\psi(w)-\inner{\nabla\psi(w)}{u-w}$ is the Bregman divergence of $u$ and $w$ with respect to a convex function $\psi$,
and $\tilo{\cdot}$ hides logarithmic dependence on $T$.

\section{An Algorithmic Framework}\label{sec:framework}

Consider the expert problem and recall that the learner sequentially decides a distribution $w_t \in \Delta_d$ (with the help of a prediction $m_t \in \fR^d$) and then observes the loss vector $\ell_t \in \fR^d$.
Note that we do not make the typical assumption $\ell_{t,i}\in [0,1]$ or $|\ell_{t,i}|\leq 1$; instead, the requirement (if any) on the range of the losses will be stated either explicitly or implicitly in the conditions of each lemma or theorem.

We start by proposing a general algorithmic framework called Multi-scale Multiplicative-weight with Correction (\alg), shown in \pref{alg:framework}.
In \pref{subsec:impossible_tuning}, we instantiate the framework in a specific way to resolve the impossible tuning issue, and in \pref{subsec:master}, we instantiate it differently to obtain a new master algorithm, with more applications discussed in following sections.

\alg is a variation of the standard Optimistic-Mirror-Descent (OMD) framework, which maintains two sequences $w_1, \ldots, w_T$ and $w_1', \ldots, w_T'$ updated according to \pref{line:OMD1} and \pref{line:OMD2}.
The key new ingredients are the following.
First, we adopt a time-varying decision subset $\Omega_t \subseteq \simplex$ to which $w_t$ and $w_{t+1}'$ belong.
This is decided at the beginning of each round $t$ and is useful for applications discussed in \pref{subsec:unknown_range} and \pref{app:unconstrained}, where we need to eliminate some experts on-the-fly.
(For other applications, $\Omega_t$ is either $\Delta_d$ or its truncated version throughout all $T$ rounds.)

Second, our regularizer $\psi_t(w) = \sum_{i=1}^d \frac{1}{\eta_{t, i}}w_i\ln w_i$ is negative entropy with individual and time-varying learning rate $\eta_{t,i}$ for each expert $i$.
For most applications, $\eta_{t,i}$ is the same for all $t$, in which case our regularizer is the same as that used in the \msmw algorithm of~\citep{bubeck2017online}.

Finally, we adopt a second-order correction term $a_t$ added to the loss vector $\ell_t$ in the update of $w_{t+1}'$ (\pref{line:OMD2}), which is the most important difference compared to \msmw~\citep{bubeck2017online}.
Similar correction terms have been used in prior works such as~\citep{hazan2010extracting, steinhardt2014adaptivity, wei2018more} and are known to be important to achieving a regret bound that depends on quantities only related to the expert being compared to.

One can see that essentially all ingredients of \alg appear before in the literature.
However, the specific combination of these ingredients (which has not been studied before) and a careful analysis enable us to resolve the impossible tuning issue as well as developing other new results.

\begin{algorithm}[t]
\caption{Multi-scale Multiplicative-weight with Correction (\alg)}
\label{alg:framework}
	\textbf{Initialize:} $w_1' \in \simplex$. 
		
	\For{$t=1,\ldots,T$}{
	     \nl Receive prediction $m_t \in \fR^d$.
	     
	     \nl Decide a compact convex decision subset $\Omega_t \subseteq \simplex$ and learning rates $\eta_{t} \in \fR^d_{\geq 0}$. 
	
	     \nl Compute $w_t = \argmin_{w\in\Omega_t}\inner{w}{m_t} + D_{\psi_t}(w, w_t')$ where $\psi_t(w) = \sum_{i=1}^d \frac{1}{\eta_{t, i}}w_i\ln w_i$. \label{line:OMD1}
	     
	     \nl Play $w_t$, receive $\ell_t$, and construct correction term $a_t \in \fR^d$ with $a_{t,i} = 32\eta_{t,i}(\ell_{t,i}-m_{t,i})^2$.
	     
		\nl Compute $w_{t+1}' = \argmin_{w\in\Omega_t}\inner{w}{\ell_t+a_t} + D_{\psi_t}(w, w_t')$. \label{line:OMD2}
	}
\end{algorithm}

We present a general lemma on the regret guarantee of \alg below, which holds under a condition on the magnitude of $\eta_{t,i}|\ell_{t,i}-m_{t,i}|$; see \pref{app:framework} for the proof.
We also note that the last negative term in the regret bound is particularly important for some of the applications.

\begin{lemma}\label{lem:framework}
Define $\kl(a,b) = a\ln\frac{a}{b} - a + b$ for $a, b \in [0,1]$.\footnote{Define $\kl(0,b)=b$ for all $b\in [0,1]$.}
Suppose that for all $t\in[T]$, $32\eta_{t,i}|\ell_{t,i}-m_{t,i}| \leq 1$ holds for all $i$ such that $w_{t,i}>0$. 
Then \alg ensures for any $u \in \bigcap_{t=1}^T \Omega_t$,
\begin{equation}\label{eq:general_bound}
\begin{split}
\reg(u) &\leq \sumi \frac{1}{\eta_{1,i}} \kl(u_i, w_{1,i}')
 + \sum_{t=2}^T \sumi \rbr{\frac{1}{\eta_{t,i}}-\frac{1}{\eta_{t-1,i}}}\kl(u_i, w_{t,i}') \\
&\qquad\qquad + 32 \sum_{t=1}^T\sumi \eta_{t,i} u_i (\ell_{t,i} - m_{t,i})^2
 - 16  \sum_{t=1}^T\sumi \eta_{t,i}w_{t,i}(\ell_{t,i} - m_{t,i})^2.
\end{split}
\end{equation}
\end{lemma}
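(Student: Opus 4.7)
The plan is to follow the standard optimistic mirror descent / FTRL template, with extra care to handle the two new features of \alg simultaneously: the time-varying per-coordinate learning rates $\eta_{t,i}$ baked into $\psi_t$, and the correction term $a_t$ that is injected only into the secondary update $w_{t+1}'$. For each round $t$, I would invoke two three-point inequalities. Applying the one for $w_{t+1}' = \argmin_{w\in\Omega_t}\{\langle w,\ell_t+a_t\rangle + D_{\psi_t}(w,w_t')\}$ against the competitor $u\in\Omega_t$ gives
\[
\langle w_{t+1}'-u,\,\ell_t+a_t\rangle \;\le\; D_{\psi_t}(u,w_t')-D_{\psi_t}(u,w_{t+1}')-D_{\psi_t}(w_{t+1}',w_t'),
\]
and applying the one for $w_t = \argmin_{w\in\Omega_t}\{\langle w,m_t\rangle + D_{\psi_t}(w,w_t')\}$ against the reference $w_{t+1}'\in\Omega_t$ gives
\[
\langle w_t-w_{t+1}',\,m_t\rangle \;\le\; D_{\psi_t}(w_{t+1}',w_t')-D_{\psi_t}(w_{t+1}',w_t)-D_{\psi_t}(w_t,w_t').
\]
Adding these to the algebraic identity $\langle w_t-u,\ell_t\rangle = \langle w_t-w_{t+1}',\ell_t-m_t\rangle + \langle w_t-w_{t+1}',m_t\rangle + \langle w_{t+1}'-u,\ell_t+a_t\rangle - \langle w_{t+1}'-u,a_t\rangle$, cancelling the two copies of $D_{\psi_t}(w_{t+1}',w_t')$, and discarding the nonpositive $-D_{\psi_t}(w_t,w_t')$, I obtain the one-step bound $\langle w_t-u,\ell_t\rangle \le S_t + [D_{\psi_t}(u,w_t')-D_{\psi_t}(u,w_{t+1}')] + \langle u,a_t\rangle$, where $S_t \defeq \langle w_t-w_{t+1}',\ell_t-m_t\rangle - D_{\psi_t}(w_{t+1}',w_t) - \langle w_{t+1}',a_t\rangle$ isolates all ``stability'' contributions.

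Next, I sum over $t$. Since $D_{\psi_t}(u,w) = \sum_i (1/\eta_{t,i})\,\kl(u_i,w_i)$, the Bregman gaps telescope in the standard changing-regularizer form: $\sum_t[D_{\psi_t}(u,w_t')-D_{\psi_t}(u,w_{t+1}')]$ equals $D_{\psi_1}(u,w_1') + \sum_{t\ge 2}[D_{\psi_t}(u,w_t')-D_{\psi_{t-1}}(u,w_t')] - D_{\psi_T}(u,w_{T+1}')$, and dropping the last nonpositive term produces exactly the first two summations of \eqref{eq:general_bound}. Unfolding $\sum_t\langle u,a_t\rangle = 32\sum_{t,i}\eta_{t,i}u_i(\ell_{t,i}-m_{t,i})^2$ produces the third. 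The remaining task is therefore to establish $\sum_t S_t \le -16\sum_{t,i}\eta_{t,i}w_{t,i}(\ell_{t,i}-m_{t,i})^2$, round by round.

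The key reduction is algebraic: writing $\langle w_{t+1}',a_t\rangle = \langle w_t,a_t\rangle + \langle w_{t+1}'-w_t,a_t\rangle = 32\sum_i\eta_{t,i}w_{t,i}(\ell_{t,i}-m_{t,i})^2 + \langle w_{t+1}'-w_t,a_t\rangle$, the quantity $S_t$ becomes $\langle w_t-w_{t+1}',\ell_t-m_t+a_t\rangle - D_{\psi_t}(w_{t+1}',w_t) - 32\sum_i\eta_{t,i}w_{t,i}(\ell_{t,i}-m_{t,i})^2$. It therefore suffices to prove the stability inequality
\[
\langle w_t-w_{t+1}',\,\ell_t-m_t+a_t\rangle - D_{\psi_t}(w_{t+1}',w_t) \;\le\; 16\sum_i \eta_{t,i}\,w_{t,i}(\ell_{t,i}-m_{t,i})^2.
\]
My approach here is componentwise: Taylor's theorem supplies the local lower bound $D_{\psi_t}(w_{t+1}',w_t) \ge \tfrac12\sum_i(w_{t+1,i}'-w_{t,i})^2/(\eta_{t,i}\xi_i)$ for some $\xi_i$ between $w_{t,i}$ and $w_{t+1,i}'$, which is matched by the Fenchel--Young split $(w_{t,i}-w_{t+1,i}')(\ell_{t,i}-m_{t,i}+a_{t,i}) \le (w_{t,i}-w_{t+1,i}')^2/(2\eta_{t,i}\xi_i) + (\eta_{t,i}\xi_i/2)(\ell_{t,i}-m_{t,i}+a_{t,i})^2$. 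The quadratic terms cancel and leave a residual $\tfrac12\sum_i\eta_{t,i}\xi_i(\ell_{t,i}-m_{t,i}+a_{t,i})^2$, which I plan to control by the hypothesis $32\eta_{t,i}|\ell_{t,i}-m_{t,i}|\le 1$ in two ways: first, it gives $\eta_{t,i}|\ell_{t,i}-m_{t,i}|\le 1/32$ and $\eta_{t,i}a_{t,i}\le 1/32$, so $(\ell_{t,i}-m_{t,i}+a_{t,i})^2\le 4(\ell_{t,i}-m_{t,i})^2$; second, applied to the multiplicative form $w_{t+1,i}'/w_{t,i} = \exp(-\eta_{t,i}(\ell_{t,i}-m_{t,i}+a_{t,i}))\cdot(Z_t/Z_{t+1}')$ (and hence $\xi_i\in[\min,\max](w_{t,i},w_{t+1,i}')$), it gives $\xi_i\le c\,w_{t,i}$ for an absolute constant $c$.

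The main obstacle is exactly this multiplicative-stability step $\xi_i\le c\,w_{t,i}$: on the full simplex it is a short calculation using the closed-form exponential update and a bound on the partition-function ratio, but for a general Bregman-projected subset $\Omega_t\subseteq\simplex$ the argument requires a KKT/Lagrangian step to verify that the projection preserves the $O(1)$ multiplicative relation between $w_{t+1,i}'$ and $w_{t,i}$ on the active coordinates (those with $w_{t,i}>0$, which are the only ones appearing in the condition and in the negative quadratic term). Once that is in place, the constants are generous, the residual is at most $4\sum_i\eta_{t,i}w_{t,i}(\ell_{t,i}-m_{t,i})^2$ which sits well inside the budget of $16$, and assembling all pieces yields \eqref{eq:general_bound}.
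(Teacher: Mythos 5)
Your skeleton matches the paper's: the round-by-round decomposition via two three-point inequalities, the cancellation of the $D_{\psi_t}(w'_{t+1},w'_t)$ terms, the telescoping of $D_{\psi_t}(u,w'_t)-D_{\psi_t}(u,w'_{t+1})$ into the two \textup{kl} sums, and the reduction of everything to the per-round stability inequality
\[
\langle w_t-w_{t+1}',\,\ell_t-m_t+a_t\rangle - D_{\psi_t}(w_{t+1}',w_t) \;\le\; 16\sum_i \eta_{t,i}\,w_{t,i}(\ell_{t,i}-m_{t,i})^2.
\]
That much is essentially identical to the paper's derivation. Where you diverge is in how you plan to prove this stability inequality, and there is a genuine gap in your route.

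Your plan --- lower-bounding $D_{\psi_t}(w_{t+1}',w_t)$ by Taylor at an intermediate point $\xi_i$ and matching with Fenchel--Young --- only closes if $\xi_i \le c\,w_{t,i}$ for a small absolute $c$. As you note yourself, this is a multiplicative-stability claim about $w'_{t+1}$ versus $w_t$, and the ``closed-form exponential update with a partition-function ratio'' that you invoke is only available when $\Omega_t=\simplex$ (there is no $Z_t/Z'_{t+1}$ factorization for a generic Bregman projection). You flag that for general $\Omega_t\subseteq\simplex$ this ``requires a KKT/Lagrangian step,'' but you do not supply it, and it is exactly the hard part: the paper's own multiplicative-stability lemma is proved only for truncated simplices $\{w\in\Delta_d: w_i\ge b_i\}$, whereas \pref{lem:framework} must hold for arbitrary compact convex $\Omega_t\subseteq\simplex$. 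Leaving this as a ``KKT step to verify'' is a missing step, not a detail, and it is not obvious that it goes through with a uniform constant for arbitrary convex subsets.

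The paper avoids this issue entirely, and the observation is worth internalizing: it never compares $w'_{t+1}$ to $w_t$ at all. Instead it upper-bounds the stability term by replacing the constrained point $w'_{t+1}\in\Omega_t$ with the \emph{unconstrained} maximizer
\[
\wstar=\argmax_{w\in\fR^d_+}\Big\{\inner{w_t-w}{\ell_t-m_t+a_t}-D_{\psi_t}(w,w_t)\Big\},
\]
which trivially dominates the value at $w'_{t+1}$. The point $\wstar$ has the normalization-free closed form $\wstar_i=w_{t,i}e^{-\eta_{t,i}(\ell_{t,i}-m_{t,i}+a_{t,i})}$, so $D_{\psi_t}(w_t,\wstar)$ is computed exactly and bounded coordinate-wise via $e^{-x}-1+x\le x^2$ for $x\ge -1$ (the hypothesis $32\eta_{t,i}|\ell_{t,i}-m_{t,i}|\le 1$ guarantees $|x|\le 1/16$). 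No statement about $w'_{t+1}$ is ever needed, and the argument works for any compact convex $\Omega_t\subseteq\simplex$ without case analysis. If you keep your Taylor/Fenchel--Young style, the fix is to apply it to $\wstar$ rather than to $w'_{t+1}$, where multiplicative closeness to $w_t$ is immediate from the closed form; but at that point the exact computation the paper does is both shorter and sharper.

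Two smaller points: the constant bookkeeping in your sketch (``residual at most $4\sum_i\eta_{t,i}w_{t,i}(\ell_{t,i}-m_{t,i})^2$'') tacitly assumes $\xi_i\le 2\,w_{t,i}$, which you have not established and should not assert before the stability step is in place; and the claim ``$\xi_i\in[\min,\max](w_{t,i},w'_{t+1,i})$'' from Taylor is fine, but it transfers the whole burden to the multiplicative bound on $w'_{t+1,i}$, which is the very thing in question.
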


\subsection{Impossible tuning made possible}\label{subsec:impossible_tuning}

To resolve the impossible tuning issue, we instantiate \alg in the following way with the decision sets fixed to a truncated simplex and the learning rates tuned using data observed so far.

\begin{theorem}\label{thm:impossible_tuning}
Suppose $|\ell_{t,i}|$ and $|m_{t,i}|$ are bounded by $1$ for all $t \in [T]$ and $i \in [d]$.
Then \alg with $w_1'=\frac{1}{d}\one$, $\Omega_1 = \cdots = \Omega_T  = \{w \in \Delta_d: w_i \geq \frac{1}{dT}\}$, and $\eta_{t,i} = \min\cbr{\sqrt{\frac{\ln(dT)}{\sum_{s<t} (\ell_{s,i}-m_{s,i})^2}}, \frac{1}{64}}$ ensures for all $\istar \in [d]$,
$
\reg(e_\istar) = \order\Big(\ln(dT) + \sqrt{\ln(dT)\sumt (\ell_{t,\istar} - m_{t,\istar})^2}\Big)
$.
\end{theorem}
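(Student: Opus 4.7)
The plan is to invoke \pref{lem:framework} with $u = e_\istar$ and bound each of the four terms on the RHS of \eqref{eq:general_bound}. The precondition $32\eta_{t,i}|\ell_{t,i}-m_{t,i}|\le 1$ holds since $|\ell_{t,i}-m_{t,i}|\le 2$ and the tuning enforces $\eta_{t,i}\le 1/64$. With $u = e_\istar$, the term $\kl(u_i,w_{t,i}')$ equals $\kl(1,w_{t,\istar}') \le \ln(1/w_{t,\istar}') \le \ln(dT)$ when $i=\istar$ (using $w_{t,\istar}'\ge 1/(dT)$ from the choice of $\Omega_t$), and $\kl(0,w_{t,i}') = w_{t,i}'$ when $i\ne\istar$. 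This cleanly splits the first two terms in \eqref{eq:general_bound} into a diagonal part (from $i=\istar$) and an off-diagonal part; the latter is the main technical hurdle.

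For the diagonal part, since $\{1/\eta_{t,\istar}\}_t$ is non-decreasing, the contribution $\tfrac{1}{\eta_{1,\istar}} + \sum_{t\ge 2}\big(\tfrac{1}{\eta_{t,\istar}}-\tfrac{1}{\eta_{t-1,\istar}}\big)$ telescopes to $\tfrac{1}{\eta_{T,\istar}}$, giving $\tfrac{\ln(dT)}{\eta_{T,\istar}} = O\big(\ln(dT) + \sqrt{\ln(dT)\sum_t(\ell_{t,\istar}-m_{t,\istar})^2}\big)$ by the definition of $\eta_{t,\istar}$. The positive correction term $32\sum_t\eta_{t,\istar}(\ell_{t,\istar}-m_{t,\istar})^2$ is of the same order by the standard inequality $\sum_t x_t/\sqrt{\alpha + \sum_{s\le t}x_s} = O(\sqrt{\alpha + \sum_t x_t})$ applied to $x_t = (\ell_{t,\istar}-m_{t,\istar})^2$ with $\alpha = \ln(dT)$. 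Combined, these already match the claimed rate.

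The hard part is the off-diagonal contribution $\sum_{i\ne\istar}\sum_{t\ge 1}\big(\tfrac{1}{\eta_{t,i}}-\tfrac{1}{\eta_{t-1,i}}\big) w_{t,i}'$ (setting $1/\eta_{0,i}:=0$), which naively bounded by $\sum_{i\ne\istar}\tfrac{1}{\eta_{T,i}}$ is far too large. The plan is to cancel it against the negative term $-16\sum_t\sum_i \eta_{t,i}w_{t,i}(\ell_{t,i}-m_{t,i})^2$ in \eqref{eq:general_bound}. First, using the concavity of $\sqrt{\cdot}$ together with the explicit form of $\eta_{t,i}$ (and handling both clipped and unclipped regimes), I would show $\tfrac{1}{\eta_{t,i}}-\tfrac{1}{\eta_{t-1,i}} \le \tfrac{\eta_{t-1,i}(\ell_{t-1,i}-m_{t-1,i})^2}{2\ln(dT)}$; after re-indexing this bounds the off-diagonal contribution by $\tfrac{1}{2\ln(dT)}\sum_t\sum_{i\ne\istar}\eta_{t,i}(\ell_{t,i}-m_{t,i})^2 w_{t+1,i}'$. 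The delicate step is a stability estimate $w_{t+1,i}' = O(w_{t,i})$: both $w_{t+1}'$ and $w_t$ are mirror-descent iterates from the common anchor $w_t'$ under step-sizes $\eta_t$ against loss vectors $\ell_t+a_t$ and $m_t$ whose coordinatewise difference satisfies $\eta_{t,i}|\ell_{t,i}+a_{t,i}-m_{t,i}| = O(1)$ by the lemma's precondition. Using the closed-form exponential-weight representation and a sandwich argument on the normalizing Lagrange multipliers (which is necessary precisely because $\eta_{t,i}$ varies across $i$), the ratio $w_{t+1,i}'/w_{t,i}$ is bounded by a universal constant. Substituting back, the off-diagonal contribution becomes $\le \tfrac{O(1)}{\ln(dT)}\sum_t\sum_i \eta_{t,i} w_{t,i}(\ell_{t,i}-m_{t,i})^2$, which for any $d,T\ge 2$ is absorbed by the negative $-16\sum_t\sum_i\eta_{t,i}w_{t,i}(\ell_{t,i}-m_{t,i})^2$. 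Combining with the diagonal bound yields the theorem.
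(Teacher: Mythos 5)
Your overall strategy matches the paper's: telescope the diagonal $i=\istar$ contribution to $\tfrac{\ln(dT)}{\eta_{T,\istar}}$, bound the positive correction term by the standard sum-integral inequality, relate $\tfrac{1}{\eta_{t,i}}-\tfrac{1}{\eta_{t-1,i}}$ to $\tfrac{\eta_{t-1,i}(\ell_{t-1,i}-m_{t-1,i})^2}{\ln(dT)}$ via the definition of $\eta_{t,i}$, establish multiplicative stability $w_{t+1,i}'=O(w_{t,i})$ (the paper's \pref{lem: stability for truncated one} gives $w_{t+1,i}'\in[\tfrac{1}{\sqrt{2}}w_{t,i}',\sqrt{2}w_{t,i}']$ and $w_{t,i}\in[\tfrac{1}{\sqrt{2}}w_{t,i}',\sqrt{2}w_{t,i}']$, hence $w_{t+1,i}'\leq 2w_{t,i}$), and absorb the re-indexed off-diagonal sum into the negative term of \pref{eq:general_bound}.

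There is, however, one genuine gap at the very first step: you apply \pref{lem:framework} with $u=e_\istar$, but the lemma is stated only for $u\in\bigcap_{t=1}^T\Omega_t$, and with $\Omega_t=\cbr{w\in\Delta_d:w_i\geq\tfrac{1}{dT}}$ the vertex $e_\istar$ is not feasible, since its off-support coordinates are zero. This is not a technicality: the chain of OMD inequalities behind \pref{lem:framework} (via \pref{lem:omd} and \pref{lem:oomd}) rests on the first-order optimality of the constrained minimizer over $\Omega_t$, which compares only against $u\in\Omega_t$, so no regret bound against $e_\istar$ follows directly. The paper's fix, which you should adopt, is to invoke the lemma with the feasible shifted comparator $u=(1-\tfrac{1}{T})e_\istar+\tfrac{1}{T}w_1'\in\bigcap_t\Omega_t$ and then pay an additive $\order(1)$ via $\reg(e_\istar)\leq\reg(u)+2$ (since $u-e_\istar=\tfrac{1}{T}(w_1'-e_\istar)$ and $\|\ell_t\|_\infty\leq 1$). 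With this substitution the rest of your argument carries over: the diagonal bound $u_\istar\ln(u_\istar/w_{t,\istar}')\leq\ln(dT)$ still uses $w_{t,\istar}'\geq\tfrac{1}{dT}$, and your exact identity $\kl(0,w_{t,i}')=w_{t,i}'$ for $i\neq\istar$ is replaced by the one-sided estimate $\kl(u_i,w_{t,i}')\leq w_{t,i}'$, which holds since $u_i=\tfrac{1}{dT}\leq w_{t,i}'$.
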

\begin{proof}[sketch]
We apply \pref{eq:general_bound} with $u = (1-\frac{1}{T})e_\istar + \frac{1}{T}w_1' \in \bigcap_{t=1}^T \Omega_t$, so that $\reg(e_\istar) \leq \reg(u) + 2$.
Most calculation is straightforward, and the most important part is to realize that $\rbr{\frac{1}{\eta_{t,i}}-\frac{1}{\eta_{t-1,i}}}w_{t,i}'$, a term from $\rbr{\frac{1}{\eta_{t,i}}-\frac{1}{\eta_{t-1,i}}}\kl(u_i, w_{t,i}')$, can be bounded as:
\[
\rbr{\frac{1}{\eta_{t,i}}-\frac{1}{\eta_{t-1,i}}}w_{t,i}'
= \frac{\frac{1}{\eta_{t,i}^2}-\frac{1}{\eta_{t-1,i}^2}}{\frac{1}{\eta_{t,i}}+\frac{1}{\eta_{t-1,i}}}w_{t,i}'
\leq \eta_{t-1,i}w_{t,i}'\rbr{\frac{1}{\eta_{t,i}^2}-\frac{1}{\eta_{t-1,i}^2}},
\]
which is further bounded by $\frac{1}{\ln(dT)}\eta_{t-1,i}w_{t,i}'(\ell_{t-1,i} - m_{t-1,i})^2$ using the definition of $\eta_{t,i}$, and thus can
be canceled by the last negative term in \pref{eq:general_bound} (since $w_{t,i}'$ and $w_{t-1,i}$ are close).
The complete proof can be found in \pref{app:framework}.
\end{proof}

When $m_t=0$, our bound exactly resolves the original impossible tuning issue (up to a $\ln T$ term). 
Below we discuss more implications of our bound by choosing different $m_t$.


\paragraph{Implication 1: improved variance or path-length bounds.}
Similarly to~\citep{steinhardt2014adaptivity}, by setting $m_t$ to be the running average of the loss vectors $\frac{1}{t-1}\sum_{s<t}\ell_s$, we obtain a bound that depends only on the variance of expert $\istar$: $\order\Big(\sqrt{\ln(dT)\sumt (\ell_{t,\istar} - \mu_{\istar})^2}\Big)$ where $\mu_\istar = \frac{1}{T}\sumt \ell_{t, \istar}$.
On the other hand, by setting $m_t = \ell_{t-1}$ (define $\ell_0 = \zero$), we obtain a bound that depends only on the ``path-length'' of expert $\istar$: $\order\Big(\sqrt{\ln(dT)\sumt (\ell_{t,\istar} - \ell_{t-1,\istar})^2}\Big)$.
The algorithm of~\citep{steinhardt2014adaptivity} uses a fixed learning rate and only achieves these bounds with an oracle tuning of the fixed learning rate, while our algorithm is completely adaptive and parameter-free. \\

In the next few implications, we 
make use of a trick similar to~\citep{wei2018more}: if all coordinates of $m_t$ are the same, then $\inner{w}{m_t}$ is a constant independent of $w \in \simplex$ and thus $w_t = \argmin_{w\in\Omega_t}\inner{w}{m_t} + D_{\psi_t}(w, w_t') = \argmin_{w\in\Omega_t} D_{\psi_t}(w, w_t')$,
meaning that the algorithm and its guarantee are valid {\it even if $m_t$ is set in terms of $\ell_t$} which is unknown at the beginning of round $t$.

\paragraph{Implication 2: recovering \ABProd guarantee.}
If we set $m_t = \ell_{t,1}\one$, then the regret against expert $1$ becomes a constant $\bigO{\ln(dT)}$ (while the regret against others remains $\bigo{\sqrt{T\ln(dT)}}$).
This is exactly the guarantee of the \ABProd algorithm~\citep{sani2014exploiting}, useful for combining a set of base algorithms where one of them enjoys a regret bound significantly better than $\sqrt{T}$.

\paragraph{Implication 3: recovering \MLProd guarantee.}
Next, we set $m_t = \inner{w_t}{\ell_t}\one$ (again, valid even if unknown at the beginning of round $t$),
leading to a bound $\order\Big(\sqrt{\ln(dT)\sumt r_{t,\istar}^2}\Big)$ where $r_{t,i} = \inner{w_t - e_i}{\ell_t}$ is the instantaneous regret to expert $i$.
A regret bound in terms of $\sqrt{\sumt r_{t,\istar}^2}$ is first achieved by the  \MLProd algorithm~\citep{gaillard2014second} (and later improved in~\citep{koolen2015second, wintenberger2017optimal}),
and it has important consequences in achieving fast rates in stochastic settings; see~\citep{koolen2016combining} for in-depth discussions.

\paragraph{Implication 4: recovering \OMLProd guarantee.}
By the same reason, it is also valid to set $m_t = m_t' + \inner{w_t}{\ell_t - m_t'}\one$ for some prediction $m_t' \in [-1,+1]^d$ received at the beginning of round $t$.\footnote{%
This is because $w_t = \argmin_{w\in\Omega_t}\inner{w}{m_t} + D_{\psi_t}(w, w_t') = \argmin_{w\in\Omega_t} \inner{w}{m_t'} + D_{\psi_t}(w, w_t')$.
One caveat is that $m_{t,i}$ is now in the range of $[-3,+3]$, breaking the condition of \pref{thm:impossible_tuning}, but this can be simply addressed by changing the constant $64$ in the definition of $\eta_{t,i}$ to $128$ so that the condition of \pref{lem:framework} still holds.
}
Doing so leads to a bound $\order\Big(\sqrt{\ln(dT)\sumt r_{t,\istar}^{'2}}\Big)$ where $r_{t,i}' = \inner{w_t - e_i}{\ell_t - m_t'}$ is the instantaneous regret to expert $i$ measured with respect to the prediction difference $\ell_t - m_t'$.  This bound first appears in \OMLProd \citep{wei2016tracking} under the special choice of $m_t'=\ell_{t-1}$. 
In the following, we show that this bound preserves the fast rate consequences of the vanilla \MLProd guarantee~\citep{gaillard2014second} (especially when $m_t'$ is set to $\ell_{t-1}$) in stochastic settings, while improving upon it whenever the predictions are accurate.

\begin{theorem}\label{thm:fast_rates}
Suppose that $\ell_1, \ldots, \ell_T$ are generated randomly,
and let $\E_t$ denote the conditional expectation given $\ell_1, \ldots, \ell_{t-1}$.
Then the algorithm described in Implication 4 satisfies the following:
\begin{itemize}
\item
If there exist $\Delta > 0$ and $\istar$ such that $\E_t[\ell_{t,i}-\ell_{t,\istar}] \geq \Delta$ for all $t$ and $i \neq \istar$, then with any $m_t' \in [-1,+1]^d$, $\reg(e_\istar) = \bigO{\frac{\ln(dT)}{\Delta}}$ holds both in expectation and with high probability.

\item
If there exist $\kappa \in [0,1]$, $\Delta > 0$ and $\istar$ such that $\E_t[\ell_{t,i}-\ell_{t,\istar}]^\kappa \geq \Delta \E_t[(\ell_{t,i}-\ell_{t,\istar})^2]$ for all $t$ and $i \neq \istar$, then with $m_t' = \ell_{t-1}$, $\reg(e_\istar) = \bigO{\rbr{\frac{\ln(dT)}{\Delta}}^{\frac{1}{2-\kappa}}T^{\frac{1-\kappa}{2-\kappa}}}$ holds both in expectation and with high probability.
\end{itemize}
\end{theorem}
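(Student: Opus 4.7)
The plan is to combine the Implication~4 regret guarantee
$$\reg(e_\istar) \leq \order\Big(\ln(dT)+\sqrt{\ln(dT)\sumt r_{t,\istar}^{'2}}\Big),\qquad r_{t,i}':=\inner{w_t-e_i}{\ell_t-m_t'},$$
with a Koolen--Gr\"unwald--van Erven style self-bounding argument driven by the stochastic condition, and to convert conditional expectations into realized sums via Freedman's inequality. Writing $r_{t,i}:=\inner{w_t-e_i}{\ell_t}$ and using $w_t-e_\istar=\sum_{i\neq\istar}w_{t,i}(e_i-e_\istar)$ repeatedly, the task reduces to upper bounding $V:=\sumt r_{t,\istar}^{'2}$ in terms of $\reg(e_\istar)$ itself.

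\emph{Part 1 (gap).} The above decomposition together with $\|\ell_t-m_t'\|_\infty\leq 2$ yields the pointwise bound $r_{t,\istar}^{'2}\leq 16(1-w_{t,\istar})$. The gap condition gives $\E_t[r_{t,\istar}]\geq \Delta(1-w_{t,\istar})$, so $V\leq (16/\Delta)\sumt \E_t[r_{t,\istar}]$. Taking expectations and using Jensen reduces the regret bound to $\E[\reg(e_\istar)]\leq O(\ln(dT)+\sqrt{\ln(dT)\,\E[\reg(e_\istar)]/\Delta})$, whose solution is $\bigO{\ln(dT)/\Delta}$. For the high-probability version, Freedman's inequality applied to the martingale $\E_t[r_{t,\istar}]-r_{t,\istar}$, whose conditional variance is itself controlled by the gap ($\E_t[r_{t,\istar}^2]\leq 4\E_t[r_{t,\istar}]/\Delta$), gives $\sumt \E_t[r_{t,\istar}]\leq 2\reg(e_\istar)+\bigO{\ln(1/\delta)/\Delta}$, and the same quadratic-solving closes the argument.

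\emph{Part 2 (Bernstein with $m_t'=\ell_{t-1}$).} For $\kappa<1$ the pointwise $16(1-w_{t,\istar})$ bound is too crude. Writing $r_{t,\istar}'=r_{t,\istar}-\inner{w_t-e_\istar}{\ell_{t-1}}=r_{t,\istar}-r_{t-1,\istar}-\inner{w_t-w_{t-1}}{\ell_{t-1}}$, we get the decomposition $r_{t,\istar}^{'2}\leq 2r_{t,\istar}^2+4r_{t-1,\istar}^2+4\inner{w_t-w_{t-1}}{\ell_{t-1}}^2$. The last ``stability'' term is controlled by the one-step stability of \alg's entropic update, which in turn follows from the negative local-norm term already present in \pref{lem:framework}, so that its total contribution is lower order. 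The main terms $\sumt r_{t,\istar}^2$ and $\sumt r_{t-1,\istar}^2$ are each bounded via Cauchy--Schwarz $r_{t,\istar}^2\leq (1-w_{t,\istar})\sum_{i\neq\istar}w_{t,i}(\ell_{t,i}-\ell_{t,\istar})^2$, the Bernstein condition $\E_t[(\ell_{t,i}-\ell_{t,\istar})^2]\leq \E_t[\ell_{t,i}-\ell_{t,\istar}]^\kappa/\Delta$, and Jensen on the concave $x\mapsto x^\kappa$ applied to the normalized weights $w_{t,i}/(1-w_{t,\istar})$, giving $\E_t[r_{t,\istar}^2]\leq (1-w_{t,\istar})^{2-\kappa}\E_t[r_{t,\istar}]^\kappa/\Delta$.

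Finally, H\"older's inequality with exponents $1/(1-\kappa)$ and $1/\kappa$ yields
$$\sumt (1-w_{t,\istar})^{2-\kappa}\E_t[r_{t,\istar}]^\kappa \leq \Big(\sumt (1-w_{t,\istar})^{(2-\kappa)/(1-\kappa)}\Big)^{1-\kappa}\Big(\sumt \E_t[r_{t,\istar}]\Big)^\kappa \leq T^{1-\kappa}\reg(e_\istar)^\kappa,$$
using $(1-w_{t,\istar})^{(2-\kappa)/(1-\kappa)}\leq 1$ and either Jensen (expectation version) or Freedman (high-probability version) to replace $\sumt \E_t[r_{t,\istar}]$ by $\reg(e_\istar)$. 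Substituting back into the regret bound produces the self-bounding inequality $\reg(e_\istar)\leq \tilO{\sqrt{\ln(dT)/\Delta}\cdot T^{(1-\kappa)/2}\cdot \reg(e_\istar)^{\kappa/2}}$, whose solution is exactly $\reg(e_\istar)=\tilO{(\ln(dT)/\Delta)^{1/(2-\kappa)}T^{(1-\kappa)/(2-\kappa)}}$, as claimed. The main obstacle is Part~2's lag term: controlling $\sumt \inner{w_t-w_{t-1}}{\ell_{t-1}}^2$ to lower order requires carefully exploiting the per-coordinate learning rates inside $\psi_t$ (so that the entropic update is stable even though different coordinates move at different rates) together with the negative term in \pref{lem:framework}; choosing the correct H\"older exponents $1/(1-\kappa)$ and $1/\kappa$ is also essential, since the more natural choice $2/(2-\kappa)$ and $2/\kappa$ loses a power of $T$ and yields the suboptimal rate $T^{(2-\kappa)/(4-\kappa)}$.
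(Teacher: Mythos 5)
Your Part~1 and your overall self-bounding/Freedman strategy mirror the paper's argument; the paper only proves the expectation versions and defers the high-probability ones ``for simplicity,'' so your Freedman sketch is a reasonable addendum. Your Hölder step is done term-wise over $t$ with a normalized inner Jensen rather than jointly over $(t,i)$ as in the paper, but both are correct.

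The real problem is Part~2's decomposition. You write $r'_{t,\istar}=r_{t,\istar}-r_{t-1,\istar}-\inner{w_t-w_{t-1}}{\ell_{t-1}}$ and then assert that $\sumt\inner{w_t-w_{t-1}}{\ell_{t-1}}^2$ is ``lower order'' via one-step stability and the negative term of \pref{lem:framework}. Neither claim survives scrutiny as stated. The multiplicative stability result the paper actually proves (\pref{lem: stability for truncated one}) gives only $w_{t,i}\in[2^{-3/2}w_{t-1,i},\,2^{3/2}w_{t-1,i}]$, so $|w_{t,i}-w_{t-1,i}|$ can be a constant fraction of $w_{t-1,i}$; hence $\inner{w_t-w_{t-1}}{\ell_{t-1}}$ can be $\Theta(1)$ each round and the sum $\Theta(T)$, which is the same order as the main term you are trying to beat. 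And the negative term $-16\sumt\sumi\eta_{t,i}w_{t,i}(\ell_{t,i}-m_{t,i})^2$ in \pref{eq:general_bound} is not of the right form to cancel $\inner{w_t-w_{t-1}}{\ell_{t-1}}^2$; invoking it here is not a proof. You acknowledge this as ``the main obstacle,'' which is honest, but it means the argument as written has a genuine gap.

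The paper avoids the lag term altogether. It never isolates $w_t-w_{t-1}$: it applies Jensen first so that $\inner{w_t-e_\istar}{\ell_{t-1}}^2\le \sumi w_{t,i}(\ell_{t-1,i}-\ell_{t-1,\istar})^2$, \emph{then} applies the multiplicative stability to the nonnegative summands, $w_{t,i}(\ell_{t-1,i}-\ell_{t-1,\istar})^2\le 8\,w_{t-1,i}(\ell_{t-1,i}-\ell_{t-1,\istar})^2$, and shifts the time index so this is absorbed into the same type of quantity at $t-1$. This is both simpler and actually licensed by \pref{lem: stability for truncated one}, because it only needs the constant-factor comparison of $w_{t,i}$ to $w_{t-1,i}$ coordinate-wise on nonnegative terms, not any claim that $w_t-w_{t-1}$ is ``small.'' If you keep your decomposition, you would instead have to prove a quantitative one-step stability bound scaling with $\eta_{t,i}$ (something like $|w_{t,i}-w_{t-1,i}|\lesssim \eta_{t,i}w_{t-1,i}$ up to a Lagrange multiplier) and then argue that summing $(\sumi\eta_{t,i}w_{t,i})^2$ over $t$ is $\tilo{1}$ using the decay of the per-coordinate learning rates; that is a nontrivial extra lemma the paper does not need.
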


The second condition in \pref{thm:fast_rates} is called the Bernstein condition and covers many interesting scenarios~\citep{koolen2016combining}.
Note that in this case with $m_t' = \ell_{t-1}$, the algorithm {\it simultaneously} ensures a path-length bound $\order\Big(\sqrt{\ln(dT)\sumt \|\ell_t - \ell_{t-1}\|_\infty}\Big)$ (since $r_{t,i}' \leq 2\|\ell_t - \ell_{t-1}\|_\infty$), which is useful for slowly changing environments such as some game playing settings~\citep{rakhlin2013optimization, syrgkanis2015fast}.
In \pref{sec:app_OLO}, we also give an application for OLO.

We close this subsection with the following two remarks.

\paragraph{Differences in algorithms.}
We note that most existing algorithms discussed above are variants of either \Prod~\citep{sani2014exploiting, gaillard2014second} or ``tilted exponential weight''~\citep{koolen2015second, wintenberger2017optimal},\footnote{The name ``tilted exponential weight'' is taken from~\citep{van2016metagrad}.}
which are somewhat similar to OMD with entropy regularizer.
However, even if some of them adopt individual time-varying learning rates as well,
they are different from our algorithm, as evidenced by the fact that these algorithm all take a closed ``proportional'' form, while our algorithm does not even when $\Omega_t=\simplex$ (see~\citep{bubeck2017online}).
We are also only able to obtain our guarantee with a general $m_t$ using this OMD framework but not the other methods (even though they achieve the bound for some special $m_t$ as discussed). We conjecture that there are some subtle but fundamental differences between these algorithms.

\paragraph{Indeed impossible for bandits.}
It is natural to ask if the similar impossible tuning is in fact also possible for the more challenging multi-armed bandit problem~\citep{auer2002nonstochastic}, where the minimax regret is $\bigo{\sqrt{dT}}$.
In other words, is it possible to achieve $\reg(e_i) = \tilO{\sqrt{d\sumt \ell_{t,i}^2}}$ for all $i$ in multi-armed bandits?
It turns out that this is {\it indeed impossible}, as a bound in this form would violate the multi-scale lower bound 
shown in~\citep[Theorem 23]{bubeck2017online}.

\subsection{A new master algorithm}\label{subsec:master}
Next, we instantiate \alg differently to obtain a master algorithm \master that combines a set of base algorithms and adaptively learns the best one (see \pref{alg:master}).
We will apply this master to both the expert problem (\pref{sec:app_expert}) and more generally the OLO problem (\pref{sec:app_OLO}) where the decision set generalizes from $\simplex$ to an arbitrary closed convex set $\calK$.

The instantiation still leaves the choices of $\Omega_t$ open for now and simply fixes the learning rate for each expert to be the same value over the $T$ rounds.
Since we will use this master, which itself deals with an expert problem with different base algorithms as experts, to deal with another expert/OLO problem, we adopt a different set of notations for the master.
Specifically, the set of expert is denoted by $\calE$, which consists of pairs in the form $(\eta, \calA)$ where $\eta$ is the learning rate for this expert and $\calA$ is a base algorithm.
For each expert $k = (\eta, \calA) \in \calE$, we use $\eta_k$ to denote the corresponding learning rate $\eta$.

\master maintains two sequences of distributions $p_1, \ldots, p_T$ and $p_1', \ldots, p_T'$ over the set of experts.
We use $\simplex[\calE]$ to denote the set of such distributions and $p_{t,k}$ to denote the weight assigned to expert $k$ by $p_t$.
We fix a specific initial distribution $p_1'$ such that $p_{1,k}' \propto \eta_k^2$.
Upon receiving the prediction $m_t \in \fR^d$ for the expert/OLO problem we are trying to solve, we feed it to all base algorithms, receive their decisions $\{w_t^k\}_{k\in\calE}$, and then define the prediction $h_t \in \fR^\calE$ for the master expert problem with $h_{t,k} = \inner{w_t^k}{m_t}$, that is, the predicted loss of the decision $w_t^k$.
Next, \master decides a subset $\Lambda_t \in \simplex[\calE]$ and performs the OMD update with the regularizer $\psi(p) = \sum_{k\in\calE} \frac{1}{\eta_{k}}p_k\ln p_k$ to compute $p_t$; note that the regularizer is now fixed over time.

With $p_t$, \master aggregates the decisions of all base algorithms by playing the convex combination $\sum_{k\in\calE} p_{t,k}w_t^k$.
After seeing the loss vector $\ell_t$ and feeding it to all base algorithms,
\master naturally defines the loss vector $g_t \in \fR^\calE$ for its own expert problem with $g_{t,k} = \inner{w_t^k}{\ell_t}$ and the corresponding correction term $b_t$ with $b_{t,k} = 32\eta_{k}(g_{t,k}-h_{t,k})^2$.
Finally, $p_{t+1}'$ is calculated according to the OMD update rule using $g_t + b_t$.

\setcounter{AlgoLine}{0}
\begin{algorithm}[t]
\caption{\master}
\label{alg:master}
	\textbf{Input:} a set of (learning rate, base algorithm) pairs $\calE$. 
	
	\textbf{Initialize:} $p_1' \in \simplex[\calE]$ such that $p_{1,k}' \propto \eta_k^2$ for each $k \in \calE$.
		
	\For{$t=1,\ldots,T$}{
	     Receive prediction $m_t \in \fR^d$ and feed it to all base algorithms.
	     
	     For each $k\in \calE$, receive decision $w_t^k \in \calK$ from the base algorithm and define $h_{t,k} = \inner{w_t^k}{m_t}$.
	    	     
	     Decide a compact convex decision subset $\Lambda_t \subseteq \simplex[\calE]$. 
	
	     Compute $p_t = \argmin_{p\in\Lambda_t}\inner{p}{h_t} + D_{\psi}(p, p_t')$ where $\psi(p) = \sum_{k\in\calE} \frac{1}{\eta_{k}}p_k\ln p_k$. 
	     
	     Play $w_t = \sum_{k\in\calE} p_{t,k}w_t^k \in \calK$, receive $\ell_t$ and feed it to all base algorithms.
	     
	     For each $k \in \calE$, define $g_{t,k} = \inner{w_t^k}{\ell_t}$ and $b_{t, k} = 32\eta_{k}(g_{t,k}-h_{t,k})^2$.
	     
		Compute $p_{t+1}' = \argmin_{p\in\Lambda_t}\inner{p}{g_t+b_t} + D_{\psi}(p, p_t')$. 
	}
\end{algorithm}

To use \master, one simply designs a set of base algorithms with corresponding learning rates (and decides the subset $\Lambda_t$ which is usually the set of distributions over some or all of the experts).
These base algorithms are usually different instances of the same algorithm with different parameters such as a different learning rate, which usually coincides with the learning rate $\eta_k$ for this expert.
The point of having this construction is that \master can then learn the best parameter setting of the base algorithm automatically.
Indeed, with $\reg_\calA$ being the regret of base algorithm $\calA$, we have the following guarantee that is a direct corollary of \pref{lem:framework}.

\begin{theorem}\label{thm:master}
Suppose that for all $t$, $32\eta_k |\inner{w_t^k}{\ell_t-m_t}| \leq 1$ holds for all $k\in \calE$ with $p_{t,k}>0$. 
Then for any $\kstar = (\etastar, \Astar) \in \calE$ such that $e_{\kstar} \in \bigcap_{t=1}^T \Lambda_t$, \master ensures 
\begin{equation}\label{eq:master_bound}
\forall u \in\calK, \;\;\reg(u) \leq \reg_{\Astar}(u) + \frac{1}{\etastar}\ln\rbr{\frac{\sum_k \eta_k^2}{\etastar^2}} + \frac{\sum_k \eta_k}{\sum_k \eta_k^2}
+32\etastar \sumt \inner{w_t^\kstar}{\ell_t - m_t}^2.
\end{equation}
\end{theorem}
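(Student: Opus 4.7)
The master algorithm is, by construction, exactly an instance of \alg run on the ``meta expert'' set $\calE$ with loss vector $g_t$, prediction vector $h_t$, correction term $b_t$, regularizer $\psi$ (with time-invariant learning rates $\eta_k$), decision subset $\Lambda_t$, and initial iterate $p_1'$. So the plan is to apply \pref{lem:framework} with the substitutions $d\mapsto|\calE|$, $\ell_{t,i}\mapsto g_{t,k}$, $m_{t,i}\mapsto h_{t,k}$, $\eta_{t,i}\mapsto\eta_k$, and comparator $u=e_{\kstar}$, which by hypothesis lies in $\bigcap_t\Lambda_t$. The condition of \pref{lem:framework} becomes $32\eta_k|g_{t,k}-h_{t,k}|=32\eta_k|\inner{w_t^k}{\ell_t-m_t}|\leq 1$ whenever $p_{t,k}>0$, which is exactly the hypothesis of the theorem. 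Because the $\eta_k$ are constant in $t$, the telescoping sum $\sum_{t=2}^T\sum_k(\tfrac{1}{\eta_k}-\tfrac{1}{\eta_k})\kl(\cdot,\cdot)$ vanishes, and the negative variance term in \pref{eq:general_bound} can simply be dropped.

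Next, I would relate the master's ``meta regret'' $\sum_t\inner{p_t-e_{\kstar}}{g_t}$ to the true regret $\reg(u)$ against any $u\in\calK$. By linearity and the fact that $w_t=\sum_k p_{t,k}w_t^k$ and $g_{t,k}=\inner{w_t^k}{\ell_t}$,
\[
\reg(u)=\sumt\inner{w_t-u}{\ell_t}=\sumt\inner{p_t-e_{\kstar}}{g_t}+\sumt\inner{w_t^{\kstar}-u}{\ell_t}=\sumt\inner{p_t-e_{\kstar}}{g_t}+\reg_{\Astar}(u).
\]
So the master regret bound from \pref{lem:framework} directly yields a bound on $\reg(u)$ of the form $\reg_{\Astar}(u)+(\text{initialization term})+32\etastar\sumt(g_{t,\kstar}-h_{t,\kstar})^2$, where the last sum equals $\sumt\inner{w_t^{\kstar}}{\ell_t-m_t}^2$.

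The only non-routine step is unpacking the initialization term $\sum_{k}\frac{1}{\eta_k}\kl((e_{\kstar})_k,p_{1,k}')$. Using the definition $\kl(a,b)=a\ln(a/b)-a+b$ together with the convention $\kl(0,b)=b$, I split this into the $k=\kstar$ contribution and the $k\neq\kstar$ contribution:
\[
\frac{1}{\etastar}\bigl(-\ln p_{1,\kstar}'+p_{1,\kstar}'-1\bigr)+\sum_{k\neq\kstar}\frac{1}{\eta_k}p_{1,k}'.
\]
Now I plug in $p_{1,k}'=\eta_k^2/\sum_j\eta_j^2$: the first bracket becomes $\ln\bigl(\sum_j\eta_j^2/\etastar^2\bigr)$ after dropping the non-positive $p_{1,\kstar}'-1$, and $\frac{1}{\eta_k}p_{1,k}'=\eta_k/\sum_j\eta_j^2$, so the second sum is bounded by $\sum_k\eta_k/\sum_k\eta_k^2$. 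Combining these two pieces yields exactly $\frac{1}{\etastar}\ln\bigl(\sum_k\eta_k^2/\etastar^2\bigr)+\frac{\sum_k\eta_k}{\sum_k\eta_k^2}$, which matches the statement.

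The main conceptual obstacle, such as it is, is just the careful bookkeeping for the $k\neq\kstar$ contributions to the initialization term—these are often zero when the comparator is a full distribution, but here $u=e_{\kstar}$ is a vertex and $\kl(0,b)=b$ matters; this is precisely what produces the extra $\sum_k\eta_k/\sum_k\eta_k^2$ correction. Everything else is a direct substitution into \pref{lem:framework} followed by the regret decomposition above.
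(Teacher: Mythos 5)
Your proposal is correct and follows the same route as the paper's proof: decompose $\reg(u)$ into the master's regret against $e_{\kstar}$ in the meta expert problem plus $\reg_{\Astar}(u)$, apply \pref{lem:framework} with comparator $e_{\kstar}$, note the telescoping term vanishes, drop the negative variance term, and unpack $\sum_k\frac{1}{\eta_k}\kl((e_{\kstar})_k,p_{1,k}')$ using $p_{1,k}'\propto\eta_k^2$. Your bookkeeping of the initialization term (dropping $p_{1,\kstar}'-1\leq 0$ and bounding $\sum_{k\neq\kstar}$ by $\sum_k$) is an equivalent rearrangement of what the paper does.
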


The proof is deferred to \pref{app:framework}.
In all our applications, the learning rates are chosen from an exponential grid such that $\sum_k \eta_k$ and $\sum_k \eta_k^2$ are both constants.
Moreover, the term $32\etastar \sumt \inners{w_t^\kstar}{\ell_t - m_t}^2$ can usually be {\it canceled} by the a negative term from $\reg_{\Astar}(u)$, making the overhead of the master simply be $\bigo{\frac{1}{\etastar}\ln\frac{1}{\etastar}}$, which is rather small.
We remark that the idea of combining a set of base algorithms or more specifically ``learning the learning rate'' has appeared in many prior works such as~\citep{koolen2014learning, van2016metagrad, foster2017parameter, cutkosky2019combining, bhaskara2020online}.
However, the special regret guarantee of \alg that does not exist before allows us to derive new applications as shown in the next two sections.

\section{Applications to the Expert Problem}\label{sec:app_expert}

In this section, we apply \master to derive yet another four new results for the expert problem (thus $\calK = \simplex$ throughout this section).
These results improve over the guarantee of \pref{thm:impossible_tuning} by respectively adapting to an arbitrary competitor and a prior, the scale of each expert, a switching sequence of competitors, and unknown loss ranges.\footnote{%
While we present all results using the master with appropriate base algorithms,
it is actually possible to ``flatten'' this two-layer structure to just one layer by duplicating each expert and assigning each copy a different learning rate.
We omit the details since this approach does not generalize to OLO.
}

\subsection{Adapting to an arbitrary competitor}\label{subsec:KL}
Typical regret bounds for the expert problem compete with an individual expert and pay for a $\sqrt{\ln d}$ factor.
Several works generalize this by replacing $\ln d$ with $\KL(u, \pi)$ when competing with an arbitrary competitor $u\in\simplex$, where $\pi$ is a fixed prior distribution over the experts~\citep{luo2015achieving, koolen2015second}.
Importantly, the bound holds simultaneously for all $u$. 
Inspired by these works, our goal here is to make the same generalization for 
\pref{thm:impossible_tuning}.
%
To do so, we again instantiate \alg differently to create a set of base algorithms, each with a fixed learning rate across all $i$ and $t$ (so both the master and the base algorithms are instances of \alg). 
Specifically, consider the following set of $\bigO{\ln T}$ experts:
\begin{equation}\label{eq:KL_base_alg}
\begin{split}
\KLbase &= \Big\{ (\eta_k, \calA_k): \forall k =1, \ldots, \lceil\log_2 T \rceil, \eta_k = \tfrac{1}{32\cdot 2^{k}}, \text{$\calA_k$ is \alg with $w_1' = \pi$,} \\
&\qquad\qquad \text{$\Omega_t = \simplex$, and $\eta_{t,i}=2\eta_k$ for all $t$ and $i$} \Big\}.
\end{split}
\end{equation}
By \pref{lem:framework}, we know that $\calA_k$ guarantees for all $u\in \simplex$:
\begin{equation}\label{eq:KL_base_alg_bound}
\reg_{\calA_k}(u) \leq \frac{\KL(u, \pi)}{2\eta_k} + 64\eta_k\sumt\sumi u_i(\ell_{t,i}-m_{t,i})^2 - 32\eta_k\sumt\sumi w_{t,i}^k(\ell_{t,i} - m_{t,i})^2.
\end{equation}
\master can then learn the best $\eta_k$ to achieve the optimal tuning.
Indeed, directly combining the guarantee of \master from \pref{thm:master} and noting that, importantly, the last term in \pref{eq:master_bound} can be canceled by the last negative term in \pref{eq:KL_base_alg_bound} by Cauchy-Schwarz inequality, we obtain the following result (full proof deferred to \pref{app:expert}).

\begin{theorem}\label{thm:KL}
Suppose $\|\ell_{t} - m_{t}\|_\infty \leq 1, \forall t$.
Then for any $\pi \in \simplex$, \master with expert set $\KLbase$ 
and $\Lambda_t = \simplex[\KLbase]$ ensures
$
\reg(u) = \bigO{\KL(u, \pi)+\ln V(u) + \sqrt{(\KL(u, \pi)+\ln V(u)) V(u)}}
$
for all $u\in\simplex$, where $V(u) = \max\Big\{3, \sumt\sumi u_i(\ell_{t,i}-m_{t,i})^2\Big\}$.
\end{theorem}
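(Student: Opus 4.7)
The plan is to directly compose the guarantee of \master from \pref{thm:master} with the per-base-algorithm bound \pref{eq:KL_base_alg_bound}, exploiting the negative term in the latter to cancel the overhead contributed by the master. First I would verify the preconditions: since $\eta_k \leq 1/64$ for every $k \in \KLbase$ and $\|\ell_t-m_t\|_\infty \leq 1$, both $32\eta_k|\inner{w_t^k}{\ell_t-m_t}| \leq 1/2$ (required by \pref{thm:master}) and $32 \cdot 2\eta_k|\ell_{t,i}-m_{t,i}| \leq 1$ (needed to apply \pref{lem:framework} to each $\calA_k$) hold. Applying \pref{lem:framework} to $\calA_k$ with $w_1'=\pi$ and constant $\eta_{t,i}=2\eta_k$ produces exactly \pref{eq:KL_base_alg_bound}: the initial penalty is $\KL(u,\pi)/(2\eta_k)$ because $\sumi \kl(u_i,\pi_i)=\KL(u,\pi)$ on the simplex, and the telescoping term in \pref{eq:general_bound} vanishes since the learning rate does not change with $t$.

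For any candidate $\kstar=(\etastar,\Astar)\in\KLbase$ (and noting that $e_\kstar\in\Lambda_t=\simplex[\KLbase]$ trivially), chaining \pref{eq:master_bound} with \pref{eq:KL_base_alg_bound} produces a bound containing the positive term $+32\etastar\sumt\inner{w_t^\kstar}{\ell_t-m_t}^2$ from the master and the negative term $-32\etastar\sumt\sumi w_{t,i}^\kstar(\ell_{t,i}-m_{t,i})^2$ from $\Astar$. The key step is Cauchy-Schwarz applied to the distribution $w_t^\kstar$ on $[d]$, giving $\inner{w_t^\kstar}{\ell_t-m_t}^2 \leq \sumi w_{t,i}^\kstar(\ell_{t,i}-m_{t,i})^2$, so the two terms cancel. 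Since $\eta_k = 1/(32\cdot 2^k)$ is a geometric sequence, $\sum_k\eta_k$ and $\sum_k\eta_k^2$ are absolute constants, and the residual bound collapses to
\[
\reg(u) \;\leq\; \frac{\KL(u,\pi)}{2\etastar} + 64\,\etastar\, V(u) + \bigO{\tfrac{\ln(1/\etastar)}{\etastar}} + \bigO{1}.
\]

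It remains to pick the best $\kstar$. The unconstrained optimum of the right-hand side is $\bar\eta := \sqrt{(\KL(u,\pi)+\ln V(u))/V(u)}$, which simultaneously balances $\KL/\etastar$ against $\etastar V(u)$ and absorbs the logarithmic overhead because $\ln(1/\bar\eta)=\bigO{\ln V(u)}\leq \bigO{\KL(u,\pi)+\ln V(u)}$. The geometric grid $\{\eta_k\}$ spans $[\,1/(64T),\,1/64\,]$, so if $\bar\eta$ lies in this range I take $k=\lceil\log_2(1/(32\bar\eta))\rceil$, giving $\etastar=\eta_k\in[\bar\eta/2,\bar\eta]$ and a bound of order $\sqrt{(\KL(u,\pi)+\ln V(u))V(u)}$. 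If instead $\bar\eta>1/64$ I fall back to $k=1$ and $\etastar=1/64$, which produces the additive $\bigO{\KL(u,\pi)+\ln V(u)}$ piece of the claimed bound; the other boundary $\bar\eta<1/(64T)$ cannot occur since $V(u)\leq T$ (using $\|\ell_t-m_t\|_\infty\leq 1$) while $\KL(u,\pi)+\ln V(u)\geq \ln 3>0$. The main obstacle is really this grid-discretization bookkeeping across the different regimes of $V(u)$ relative to $\KL(u,\pi)$; the rest of the argument is mechanical once Cauchy-Schwarz has cleared the master-base cross term.
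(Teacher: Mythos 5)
Your proposal is correct and follows essentially the same route as the paper: chain \pref{eq:master_bound} with \pref{eq:KL_base_alg_bound}, cancel the master's cross term $32\etastar\sumt\inner{w_t^\kstar}{\ell_t-m_t}^2$ against the negative term $-32\etastar\sumt\sumi w_{t,i}^\kstar(\ell_{t,i}-m_{t,i})^2$ via $\inner{w_t^\kstar}{\ell_t-m_t}^2\leq\sumi w_{t,i}^\kstar(\ell_{t,i}-m_{t,i})^2$, then pick $\kstar$ on the geometric grid near $\bar\eta=\sqrt{(\KL(u,\pi)+\ln V(u))/V(u)}$. The paper's proof condenses the grid bookkeeping into the single sentence $\eta_\kstar\leq\min\{1/64,\bar\eta\}\leq 2\eta_\kstar$ and bounds the master overhead by $\bigO{V(u)}$ inside the log, but the argument is the same.
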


This result recovers the guarantee in \pref{thm:impossible_tuning} when $u = e_\istar$ and $\pi$ is uniform (in fact, it also improves the $\ln T$ factor to $\ln V(e_\istar)$).\footnote{However, we believe that the result of \pref{thm:impossible_tuning} is
still valuable since the algorithm does not require maintaining multiple base algorithms and is more computationally efficient and practical.}
Note that the implications discussed in \pref{subsec:impossible_tuning} by selecting different $m_t$ still apply here with the same improvement (from $\ln(dT)$ to $\KL(u, \pi)+\ln V(u)$).
In particular, this means that our results recover and improve those of~\citep{luo2015achieving, koolen2015second} (which only cover the case with $m_t = \inner{w_t}{\ell_t}\one$).

\subsection{Adapting to Multiple Scales}
Consider the ``multi-scale'' expert problem~\citep{bubeck2017online, foster2017parameter, cutkosky2018black} where each expert $i$ has a different loss range $c_i>0$ such that $|\ell_{t,i}|\leq c_i$ (and naturally $|m_{t,i}|\leq c_i$) for all $t$.
Previous works all achieve a bound $\reg(e_\istar)=\tilo{c_\istar\sqrt{T\ln d}}$, scaling only in terms of $c_\istar$.
The main term of our bound in \pref{thm:impossible_tuning} is already {\it strictly better} since the term $\sqrt{\sumt (\ell_{t,\istar}-m_{t,\istar})^2} \leq 2c_\istar\sqrt{T}$ inherently only scales with $c_\istar$.
The issue is that the lower-order term in the bound is in fact in terms of $\max_i c_i$.
To improve it to $c_\istar$, we apply similar ideas of \pref{subsec:KL} and again use \master to learn the best learning rate for the base algorithm \alg.
To this end, first define a set $\calS=\left\{k\in\fZ: \exists i\in [d], 
c_i \leq 2^{k-2} \leq c_i\sqrt{T}\right\}$ so that $\{\tfrac{1}{32\cdot 2^{k}}\}_{k\in \calS}$ contains all the learning rates we want to search over.
Then define expert set:
\begin{equation}\label{eq:ms_base_alg}
\begin{split}
\msbase &= \Big\{ (\eta_k, \calA_k): \forall k \in \calS, \eta_k = \tfrac{1}{32\cdot 2^{k}}, \text{$\calA_k$ is \alg with $w_1'$ being uniform over $\calZ(k)$,} \\
&\qquad\qquad \text{$\Omega_t = \cbr{w \in \Delta_d: w_i=0, \forall i \notin \calZ(k) }$, and $\eta_{t,i}=2\eta_k$ for all $t$ and $i$} \Big\},
\end{split}
\end{equation}
where 
$\calZ(k)=\{i\in [d]: c_i  \leq 2^{k-2}\}$.
Compared to \pref{eq:KL_base_alg}, another difference is that we restrict each base algorithm $\calA_k$ to work with only a subset $\calZ(k)$ of arms, which ensures the condition $32\eta_{t,i}|\ell_{t,i}-m_{t,i}| \leq 128\eta_k c_i \leq 1$ (for $i$ with $w_{t,i}^k > 0$)
of \pref{lem:framework} and similarly the condition of \pref{thm:master}.
With this construction, we can then automatically learn the best instance and achieve the following multi-scale bound that is a strict improvement of aforementioned previous works.
\begin{theorem}
	\label{thm:MS}
	Suppose for all $t$, $|\ell_{t,i}|\leq c_i$ and $|m_{t,i}|\leq c_i$ for some $c_i > 0$. Define $\cmin=\min_i c_i$ and $\Gamma_i=\ln(\frac{dTc_i}{\cmin})$.
	Then \master with expert set $\msbase$ defined in \pref{eq:ms_base_alg} and $\Lambda_t=\Delta_{\mss}$ ensures:
$
		\reg(e_{\istar}) = \otil\Big( c_{\istar}\Gamma_{\istar} + \sqrt{\Gamma_{\istar}\sumt(\ell_{t,\istar}-m_{t,\istar})^2}\Big)
$ for all $\istar \in [d]$.
\end{theorem}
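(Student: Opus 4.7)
The plan is to invoke the master guarantee \pref{thm:master} with a carefully chosen base expert $\kstar\in\mss$, bound $\reg_{\Astar}(e_\istar)$ via the framework \pref{lem:framework} applied to the \alg instance $\calA_\kstar$, and exploit the Jensen-type cancellation between the master overhead term $32\etastar\sumt\inner{w_t^\kstar}{\ell_t-m_t}^2$ and the negative term $-16\sumt\sumi\eta_{t,i}w_{t,i}(\ell_{t,i}-m_{t,i})^2$ inside $\reg_{\Astar}$. Since $\eta_{t,i}=2\eta_\kstar$ is constant across $t$ and $i$ and $w_{t,i}^\kstar$ is a distribution, applying $\inner{w_t^\kstar}{\ell_t-m_t}^2\le\sumi w_{t,i}^\kstar(\ell_{t,i}-m_{t,i})^2$ lets the two terms wipe each other out exactly (the factor of $2$ in $\eta_{t,i}=2\eta_k$ is precisely what makes this cancellation clean, mirroring the argument in \pref{subsec:KL}).

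First I would check that both \pref{thm:master} and \pref{lem:framework} apply. For any $k\in\mss$ and any $i\in\calZ(k)$, $|\ell_{t,i}-m_{t,i}|\le 2c_i\le 2^{k-1}$, so $32\eta_k|\inner{w_t^k}{\ell_t-m_t}|\le 32\eta_k\cdot 2^{k-1}=\tfrac12$, and similarly $32\eta_{t,i}|\ell_{t,i}-m_{t,i}|\le 1$ for $i$ with $w_{t,i}^k>0$. Next, apply \pref{lem:framework} to $\calA_\kstar$ with competitor $u=e_\istar$: provided $\istar\in\calZ(\kstar)$, the KL terms outside $\calZ(\kstar)$ vanish (by the $\kl(0,0)=0$ convention), the $t\ge 2$ sum vanishes because $\eta_{t,i}$ is time-invariant, and the initialization term is $\tfrac{1}{2\eta_\kstar}\kl(1,1/|\calZ(\kstar)|)=\tfrac{\ln|\calZ(\kstar)|}{2\eta_\kstar}\le\tfrac{\ln d}{2\eta_\kstar}$. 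Combining with \pref{thm:master} and the cancellation above gives
\[
\reg(e_\istar)\;\lesssim\;\frac{\ln d}{\eta_\kstar}+\eta_\kstar L_\istar+\frac{1}{\eta_\kstar}\ln\!\frac{\sum_k\eta_k^2}{\eta_\kstar^2}+\frac{\sum_k\eta_k}{\sum_k\eta_k^2},
\]
where $L_\istar=\sumt(\ell_{t,\istar}-m_{t,\istar})^2$.

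Then I would estimate the grid sums. The smallest $k\in\mss$ satisfies $2^k\gtrsim c_{\min}$, so geometric-series bounds give $\sum_k\eta_k=O(1/c_{\min})$ and $\sum_k\eta_k^2=O(1/c_{\min}^2)$; hence $\sum_k\eta_k/\sum_k\eta_k^2=O(c_{\min})\le O(c_\istar)$, and $\ln(\sum_k\eta_k^2/\eta_\kstar^2)=O(\ln(2^\kstar/c_{\min}))=O(\Gamma_\istar)$ because any admissible $\kstar$ has $2^\kstar\lesssim c_\istar\sqrt T$. So the bound collapses to $O(\Gamma_\istar/\eta_\kstar+\eta_\kstar L_\istar+c_\istar)$.

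Finally I would pick $\kstar$. The set of $k$ with $c_\istar\le 2^{k-2}\le c_\istar\sqrt T$ is a dyadic grid of $\Theta(\log T)$ values lying inside $\mss$ (so $\istar\in\calZ(\kstar)$ is automatic), and the corresponding $\eta_\kstar$ sweeps $[\Theta(1/(c_\istar\sqrt T)),\Theta(1/c_\istar)]$ in factor-$2$ steps. If $\sqrt{\Gamma_\istar/L_\istar}\le 1/(128c_\istar)$, pick $\kstar$ so $\eta_\kstar$ is within a factor $2$ of $\sqrt{\Gamma_\istar/L_\istar}$ to get $O(\sqrt{\Gamma_\istar L_\istar})$; otherwise pick $\kstar$ with $\eta_\kstar\asymp 1/c_\istar$, which contributes $O(c_\istar\Gamma_\istar)$ and a smaller $\eta_\kstar L_\istar$ term. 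Both cases combine to the claimed $\otil(c_\istar\Gamma_\istar+\sqrt{\Gamma_\istar L_\istar})$. The main obstacle I anticipate is checking that the discretized grid is simultaneously (i) fine enough to approximate the continuous optimum within a constant and (ii) restricted enough that every $\kstar$ I might choose satisfies the set-membership condition $\istar\in\calZ(\kstar)$ forced by the truncated $\Omega_t$; the definition of $\mss$ is engineered precisely so that both hold for each $\istar$ simultaneously.
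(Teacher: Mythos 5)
Your proposal is correct and mirrors the paper's proof of \pref{thm:MS} essentially step for step: verify the conditions of \pref{lem:framework} and \pref{thm:master} using $c_i\le 2^{k-2}$ for $i\in\calZ(k)$, bound the base regret via \pref{lem:framework}, cancel $32\eta_\kstar\sumt\inner{w_t^\kstar}{\ell_t-m_t}^2$ against the negative term by Jensen, estimate $\sum_k\eta_k=\Theta(1/\cmin)$ and $\sum_k\eta_k^2=\Theta(1/\cmin^2)$, and choose $\kstar$ as the element of the grid nearest $\min\{1/(128 c_\istar),\sqrt{\Gamma_\istar/L_\istar}\}$, with $\istar\in\calZ(\kstar)$ guaranteed by the definition of $\calS$. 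The one detail you gesture at but do not fully close is that in the case $\sqrt{\Gamma_\istar/L_\istar}\le 1/(128c_\istar)$ you must also check that $\sqrt{\Gamma_\istar/L_\istar}$ does not fall \emph{below} the smallest $\eta_k$ on the $\istar$-admissible grid, namely $\Theta(1/(c_\istar\sqrt T))$; this follows from $L_\istar\le 4c_\istar^2 T$ and $\Gamma_\istar\ge 1$, and is implicit in the paper's ``there exists $\kstar$'' statement as well.
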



\subsection{Adapting to a switching sequence}\label{subsec:switching}
So far, the regret measure we have considered compares with a fixed competitor across all $T$ rounds.
A more challenging notion of regret, called {\it switching regret}, compares with a sequence of changing competitors with a certain number of switches, which is a much more appropriate measure for non-stationary environments.
Specifically, we use $\calI$ to denote an interval of rounds (that is, a subset of $[T]$ in the form of $\{s, s+1, \ldots, t-1, t\}$) and $\reg^\calI(u) = \sum_{t\in\calI}\inner{w_t - u}{\ell_t}$ to denote the regret against $u$ on this interval.
For a partition $\calI_1, \ldots, \calI_S$ of $[T]$ and competitors $u_1, \ldots, u_S \in \simplex$, the corresponding switching regret is then $\sumj \reg^{\calI_j}(u_j)$.

Now, we show that almost the same construction as in \pref{subsec:KL} generalizes our result in \pref{thm:impossible_tuning} to switching regret as well.
Specifically, we deploy the following expert set:
\begin{equation}\label{eq:switch_base_alg}
\begin{split}
\switchbase &= \Big\{ (\eta_k, \calA_k): \forall k =1, \ldots, \lceil\log_2 T \rceil, \eta_k = \tfrac{1}{32\cdot 2^{k}}, \text{$\calA_k$ is \alg with $w_1' = \tfrac{1}{d}\one$,} \\
&\qquad\qquad \text{$\Omega_t = \cbr{w \in \Delta_d: w_i \geq \tfrac{1}{dT}}$, and $\eta_{t,i}=2\eta_k$ for all $t$ and $i$} \Big\},
\end{split}
\end{equation}
where the only essential difference compared to $\KLbase$ is the the use of a truncated simplex for $\Omega_t$.
We then have the following new switching regret guarantee.

\begin{theorem}\label{thm:switch}
If $\|\ell_{t} - m_{t}\|_\infty \leq 1$ holds for all $t \in [T]$,
then \master with expert set $\switchbase$ defined in \pref{eq:switch_base_alg} and $\Lambda_t = \cbr{p\in \simplex[\KLbase]: p_k \geq \frac{1}{T}}$ ensures for any partition $\calI_1, \ldots, \calI_S$ of $[T]$ and competitors $u_1, \ldots, u_S \in \simplex$,
\begin{equation}\label{eq:switch_bound}
\sumj \reg^{\calI_j}(u_j) = \bigO{S\ln(dT) + \sumj \sqrt{\ln(dT)\sum_{t\in\calI_j}\sumi u_{j,i}(\ell_{t,i}-m_{t,i})^2 }}.
\end{equation}
\end{theorem}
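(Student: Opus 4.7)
The plan is to decompose the switching regret interval-by-interval and then apply a two-layer analysis on each interval that pairs the master's regret against a ``best'' base algorithm with that base's regret against the interval's competitor. For each $j$, let $V_j = \sum_{t\in\calI_j}\sumi u_{j,i}(\ell_{t,i}-m_{t,i})^2$ and choose the grid index $k_j^\star$ so that $\eta_{k_j^\star}$ approximately solves $\min_\eta\{\ln(dT)/\eta + \eta V_j\}$; since $\switchbase$ contains a geometric grid ranging from $\Theta(1/T)$ up to $1/64$, such a $k_j^\star$ exists and yields $O\big(\sqrt{\ln(dT)\, V_j}\big)$ when $V_j$ is large and $O(\ln(dT))$ per interval otherwise. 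Writing $w_t^{k_j^\star}$ for the base's decision, I decompose
\begin{equation*}
\sum_{t\in\calI_j}\inner{w_t - u_j}{\ell_t} = \underbrace{\sum_{t\in\calI_j}\inner{w_t - w_t^{k_j^\star}}{\ell_t}}_{\text{master-interval regret}} + \underbrace{\sum_{t\in\calI_j}\inner{w_t^{k_j^\star} - u_j}{\ell_t}}_{\text{base-interval regret}}.
\end{equation*}

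For the base-interval regret, I would invoke \pref{lem:framework} restricted to rounds in $\calI_j$: its proof is telescopic per round, so the bound localizes to $\frac{\KL(u_j,\, w_{s_j}^{'k_j^\star})}{2\eta_{k_j^\star}} + 64\eta_{k_j^\star} V_j - 32\eta_{k_j^\star}\sum_{t\in\calI_j}\sumi w_{t,i}^{k_j^\star}(\ell_{t,i}-m_{t,i})^2$, where $s_j$ is the first round of $\calI_j$. The truncation $\Omega_t=\{w: w_i\geq 1/(dT)\}$ is exactly what keeps $\KL(u_j,\, w_{s_j}^{'k_j^\star})\leq \ln(dT)$ regardless of the base algorithm's state at the start of the new interval. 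The master-interval regret is bounded by the same recipe via \pref{lem:framework} applied to \master, with competitor a near-$e_{k_j^\star}$ point $q_j = (1-\delta)e_{k_j^\star} + \delta\,\pi \in \bigcap_t \Lambda_t$ for $\delta=\Theta(|\switchbase|/T)$ and $\pi$ the uniform distribution over $\switchbase$; this perturbation introduces only a lower-order $\bigo{\log T}$ total slack. The startup KL cost becomes $\bigo{\ln T/\eta_{k_j^\star}}$ thanks to the constraint $p_k\geq 1/T$ in $\Lambda_t$, and the positive variance term equals $32\eta_{k_j^\star}\sum_{t\in\calI_j}\inner{w_t^{k_j^\star}}{\ell_t - m_t}^2$.

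The crucial cancellation is Jensen's inequality $\inner{w_t^{k_j^\star}}{\ell_t-m_t}^2 \leq \sumi w_{t,i}^{k_j^\star}(\ell_{t,i}-m_{t,i})^2$, which makes the master's positive variance term absorbed by the negative variance term from the base-interval bound. Adding the two bounds and summing over $j=1,\ldots,S$ yields $\sum_{j} \bigo{\ln(dT)/\eta_{k_j^\star} + \eta_{k_j^\star} V_j} = \bigo{S\ln(dT)} + \sum_j \bigo{\sqrt{\ln(dT)\, V_j}}$, which is exactly \pref{eq:switch_bound}.

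The main obstacle is the care needed when invoking \pref{lem:framework} on an arbitrary interval rather than on $[1,T]$ and with a competitor that lies strictly in the interior of $\Lambda_t$ rather than being $e_{k_j^\star}$ itself: one has to verify that the per-round telescoping underlying the proof of \pref{lem:framework} localizes cleanly to $\calI_j$ (with the ``initial'' term $w_{s_j}'$ or $p_{s_j}'$ playing the role of $w_1'$ or $p_1'$) and that the $q_j$-versus-$e_{k_j^\star}$ slack is genuinely lower-order. Once those points are handled, the positive-master/negative-base variance cancellation and the exponential-grid tuning are mechanical.
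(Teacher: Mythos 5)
Your decomposition into base-interval regret plus master-interval regret, and your use of Jensen's inequality to cancel the master's positive variance term against the base's negative variance term, both match the paper's proof. However, your claim that the master-interval KL cost is $\bigo{\ln T/\eta_{k_j^\star}}$ is where the argument breaks, and not just at the level of care---it points to the one genuinely new idea in the paper's proof that your plan misses.

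Expanding the definition, $D_\psi(q_j, p'_{s_j}) = \sum_k \frac{1}{\eta_k}\kl(q_{j,k}, p'_{s_j,k})$, and $\kl(a,b) = a\ln\frac{a}{b} - a + b$. The constraint $p_k \geq 1/T$ only controls the logarithmic piece $q_{j,k}\ln\frac{q_{j,k}}{p'_{s_j,k}}$, giving the $\ln T/\eta_{k_j^\star}$ you want. It does nothing for the piece $\sum_k \frac{1}{\eta_k}\bigl(p'_{s_j,k} - q_{j,k}\bigr)$. Since the grid of learning rates in $\switchbase$ runs down to $\eta_k = \Theta(1/T)$ (i.e., $1/\eta_k$ up to $\Theta(T)$) and $p'_{s_j}$ for $j \geq 2$ is whatever the master has evolved to by the start of $\calI_j$, there is no a priori interval-local bound on $\sum_k p'_{s_j,k}/\eta_k$: it can be $\Theta(T)$. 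So the per-interval master bound you assert cannot be established.

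The paper handles this by \emph{keeping} $\sum_k \frac{1}{\eta_k}\bigl(p'_{s_j,k} - p'_{s_{j+1},k}\bigr)$ as an explicit residual in the interval bound (\pref{eq:interval_regret_telescoping}) rather than trying to absorb it into the per-interval $\bigO{\ln(dT)/\eta_r}$ cost; these residuals then telescope when summed over $j$, collapsing to $\sum_k p'_{1,k}/\eta_k$, which is $\bigO{1}$ precisely because $p'_{1,k} \propto \eta_k^2$. This is not a cosmetic restructuring: the paper explicitly proves in \pref{app:impossible_interval} that a genuinely per-interval bound of the shape you are claiming (which would follow from a $\bigO{\ln(dT)/\eta_{k_j^\star} + \eta_{k_j^\star}V_j}$ bound holding for each interval separately) is \emph{impossible} for general $m_t$. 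Your proposal, if carried through, would contradict that impossibility result, which confirms the step cannot be repaired as written. The fix is to track the Bregman difference $D_\psi(\overline{e}_r, p'_{s_j}) - D_\psi(\overline{e}_r, p'_{s_{j+1}})$ at the interval level and only sum---and telescope---at the very end.
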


Our bound is never worse than the typical one $\bigo{\sqrt{ST\ln(dT)}}$ (due to Cauchy-Schwarz inequality) and significantly improves over previous works such as~\citep{cesa2012mirror, luo2015achieving} by again choosing different $m_t$ according to the discussions in \pref{subsec:impossible_tuning}. It also resolves an open problem raised by \cite{lu2019adaptive} on the possibility of making the switching regret bound adapt to the path length of the comparator sequence. 
The proof of \pref{thm:switch} requires a more general version of \pref{lem:framework} and is deferred to \pref{app:expert}.

\paragraph{Impossibility for interval regret.}
Looking at \pref{eq:switch_bound}, one might wonder whether the natural bound $\reg^{\calI_j}(u_j) = \order\big(\ln(dT) + \sqrt{\ln(dT)\sum_{t\in\calI_j}\sumi u_{j,i}(\ell_{t,i}-m_{t,i})^2}\big)$ holds for each interval $\calI_j$ separately.
Indeed, \pref{eq:switch_bound} could be derived from this (by summing over $S$ intervals).
It turns out that, even if its special case with $m_t = \inner{w_t}{\ell_t}$ is achievable~\citep{luo2015achieving}, this cannot hold in general as shown in \pref{app:impossible_interval}.
We find this intriguing (given that \pref{eq:switch_bound} is achievable) and reminiscent of the impossibility result for interval regret in bandits~\citep{daniely2015strongly}.

\subsection{Adapting to unknown loss ranges}\label{subsec:unknown_range}
The recent work of~\citep{mhammedi2019lipschitz} improves~\citep{koolen2015second} by adapting to the unknown loss range $\|\ell_t\|_\infty$.
Here, we show that \master is readily capable of dealing with such cases as well.
The high-level idea is to have each base algorithm to deal with a different possible loss range --- a larger loss range is handled by a smaller learning rate.
Once the loss becomes larger than what a base algorithm can handle, we remove this algorithm from the expert set, simply implemented by defining $\Lambda_t$ to be a subset of distributions that put zero weight on this base algorithm.
The removal of these base algorithms is necessary to ensure that the condition $32\eta_k  |\inner{w_t^k}{\ell_t-m_t}| \leq 1$ of \pref{thm:master} always holds.
We defer the details to \pref{app:unknown_range}, which include some additional techniques similar to those of~\citep{mhammedi2019lipschitz} such as feeding the algorithm with truncated fake losses and a restarting scheme.
Our final result is summarized below.

\begin{theorem}\label{thm:unknown_range}
Let $\max_t \|\ell_t - m_t\|_\infty$ be unknown.
For any prior $\pi \in \simplex$, 
\pref{alg:unknown_range} (with input \pref{eq:expert_unknown_range} and $B_0$) ensures
$
\reg(u) = \bigO{B(\KL(u, \pi)+\ln T) + \sqrt{(\KL(u, \pi)+\ln T) V(u)}}, \;\forall u\in\simplex
$,
where $V(u) = \max\cbr{3, \sumt\sumi u_i(\ell_{t,i}-m_{t,i})^2}$ and $B=\max\{B_0, \max_t \|\ell_t - m_t\|_\infty\}$. 
\end{theorem}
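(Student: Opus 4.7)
The plan is to lift the construction of \pref{thm:KL} to unknown loss ranges by (i) replacing the fixed grid of learning rates with one parametrized by a current guess $B$ on the range, (ii) excluding from $\Lambda_t$ any base algorithm whose learning rate violates the magnitude condition of \pref{thm:master} at the current round, and (iii) restarting with a doubled $B$ whenever no base algorithm can absorb the realized loss. This mirrors the doubling-trick style of \citep{mhammedi2019lipschitz}, but it is implemented here through the elimination mechanism already built into \pref{alg:master} via the choice of $\Lambda_t$.

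More concretely, I would take base algorithms analogous to those in $\KLbase$, but with learning rates $\eta_k = \frac{1}{32 B \cdot 2^{k}}$ for $k = 0,\ldots,\lceil\log_2 T\rceil$, where $B$ is the current global estimate (initially $B_0$). Each base algorithm would be fed a truncated fake loss $\tilde\ell_t$ obtained by clipping $\ell_t$ to the range permitted by its own scale, so that the internal precondition of \pref{lem:framework} is always satisfied. At the master level, any expert $k$ with $32\eta_k |\inner{w_t^k}{\ell_t-m_t}| > 1$ is dropped from $\Lambda_t$ for the remainder of the epoch; when no active $k$ remains (in particular when $\|\ell_t - m_t\|_\infty > B$, so even $k=0$ fails) we restart with $B \leftarrow 2B$. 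Since $B$ only doubles, at most $O(\log(B/B_0))$ epochs occur. Within the final epoch, pick $\kstar$ to match the true $B$: this expert is never eliminated, so $e_{\kstar}\in\bigcap_t\Lambda_t$ and \pref{thm:master} applies. Its leading term is the regret of $\calA_{\kstar}$, which by the argument of \pref{thm:KL} (scaled by $B$) equals the target $\bigO{B(\KL(u,\pi)+\ln T) + \sqrt{(\KL(u,\pi)+\ln T)V(u)}}$. The master overhead $\frac{1}{\etastar}\ln(\cdots)$ contributes $O(B\ln T)$, absorbed into the first term, while the $32\etastar \sumt \inner{w_t^\kstar}{\ell_t-m_t}^2$ overhead is cancelled by the negative quadratic term in the base's bound via Cauchy--Schwarz, exactly as in the proof of \pref{thm:KL}.

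The hard part will be arguing that feeding truncated losses to the base algorithms does not damage the guarantee measured against the true losses, and that the across-epoch summation does not accumulate more than an $O(B\ln T)$ penalty. The truncation discrepancy matters only when a base algorithm's scale is just barely insufficient, and each such event either causes its immediate removal from $\Lambda_t$ or triggers a restart, so the cumulative slack can be telescoped across the $O(\log(B/B_0))$ epochs. A second delicate point is verifying that on each restart the prior $\pi$ and the decision subsets can be reinitialized without polluting the $\sqrt{(\KL(u,\pi)+\ln T)V(u)}$ term: one must argue that the per-epoch square-root terms aggregate (via Cauchy--Schwarz over $O(\ln T)$ epochs) into a single square-root of the same form up to constants, since $V(u)$ in the statement is a cumulative quantity over all $T$ rounds. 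The remaining verifications — that $\sum_k\eta_k$ and $\sum_k\eta_k^2$ are geometric and contribute only constants beyond the $1/B$ scaling, and that the master/base overhead cancellation goes through verbatim — are routine.
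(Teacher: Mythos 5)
Your high-level strategy — the $\KLbase$-style grid of base algorithms, a master that eliminates over-aggressive learning rates via $\Lambda_t$, truncated losses, and restarts — matches the paper's. But two specific choices leave real gaps.

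First, your doubling rule $B\leftarrow 2B$ on restart gives $\Theta(\log(B/B_0))$ epochs, and that quantity is \emph{unbounded}: $B/B_0$ is governed only by the adversary, not by $T$. Your plan to combine the per-epoch $\sqrt{(\KL(u,\pi)+\ln T)\,V_j(u)}$ terms by Cauchy--Schwarz over the epochs then inflates the square-root term by $\sqrt{\#\text{epochs}}$, which does not collapse to a constant when $B/B_0$ is large. The paper avoids this by a different restart rule: it restarts only when $B_t / \tilB > T$ and then sets $\tilB \leftarrow B_t$ (not $2\tilB$), so between restarts the scale grows by at least a factor $T$. This gives two consequences that your version lacks: (i) only $O(\log_T(B/B_0))$ epochs, and (ii) — the crucial step — the cumulative regret up through the second-to-last restart time $\tau_1$ is bounded crudely by $\sum_{t\le\tau_1}\|\ell_t - m_t\|_\infty \le B_{\tau_1} T < B_{\tau_2} \le B$ exactly because a restart requires a factor-$T$ growth (this is \pref{eq:first restart interval} in the appendix). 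So only the last two epochs contribute substantively, and the paper needs no Cauchy--Schwarz over a growing number of intervals.

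Second, you propose to feed each base algorithm its \emph{own} clipped loss "to the range permitted by its own scale." This breaks the master--base regret decomposition behind \pref{thm:master}: that decomposition relies on $g_{t,k} = \inner{w_t^k}{\ell_t}$ and $\sum_k p_{t,k} w_t^k = w_t$ producing $\inner{p_t - e_\kstar}{g_t} = \inner{w_t - w_t^\kstar}{\ell_t}$, which requires every base algorithm and the master to be reasoning about the \emph{same} loss vector. The paper instead applies a single global scalar truncation $\bar\ell_t = m_t + \frac{B_{t-1}}{B_t}(\ell_t - m_t)$ to everyone, and eliminates base algorithms only by zeroing their weight in $\Lambda_t$; the discrepancy between $\ell_t$ and $\bar\ell_t$ is then handled once, globally, by $\sum_t \norm{\ell_t - \bar\ell_t}_\infty \le \sum_t (B_t - B_{t-1}) \le B$. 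You will need to switch to this shared-truncation mechanism (or at minimum, prove an alternative decomposition) for the argument to go through.
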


Note that $B$ is in terms of the maximum range of the predicted error as opposed to $\max_t \|\ell_t\|_\infty$ used in~\citep{mhammedi2019lipschitz},
and could be much smaller when the prediction is accurate.
Besides, \citep{mhammedi2019lipschitz} only achieves the bound with $m_t = \inner{w_t}{\ell_t}\one$ in $V(u)$. 

\section{Applications to Online Linear Optimization}\label{sec:app_OLO}

We next discuss applications of \master to general OLO.
For simplicity, we assume that $\calK$ is a compact convex set such that $\norm{w} \leq D$ for all $w\in\calK$, 
and also $\max_t \norm{\ell_t-m_t} \leq 1$, 
where $\norm{\cdot}$ is $L_2$ norm (extensions to general primal-dual norm are straightforward).
In \pref{app:unconstrained}, we show that all our results can be generalized to the unconstrained setting where $\calK$ is unbounded and also the unknown Lipschitzness setting where $\max_t \norm{\ell_t-m_t}$ is unknown ahead of time.


\paragraph{Application 1: combining Online Newton Step}
It is a folklore that one can reduce OLO to the expert problem by discretizing the decision set $\calK$ into $\bigO{T^d}$ points and treating each point as an expert.
With this reduction, our result in \pref{thm:impossible_tuning} immediately implies a bound $\reg(u) = \tilo{\sqrt{d\sum_t \inner{u}{\ell_t - m_t}^2}}$ for OLO.
Of course, the caveat is that the reduction is computationally inefficient.\footnote{%
The reduction is efficient when $d=1$ though.
This gives an alternative algorithm with the same guarantee as~\citep[Theorem 1]{cutkosky2018black} and is useful already with their reduction from general $d$ to $d=1$.
}
Below, we show that the same (or even better) bound can be achieved efficiently by using \master with a variant of Online Newton Step (ONS)~\citep{hazan2007logarithmic} as the base algorithm.
Specifically, the ONS variant (denoted by $\calA_k$ and parameterized by a fixed learning rate $\eta$) can be presented in the OMD framework again using an auxiliary cost function $c_t(w) = \inner{w}{\ell_t}+32\eta\inner{w}{\ell_t - m_t}^2$ and a time-varying regularizer $\psi_t(w) = \frac{1}{2}\norm{w}_{A_t}^2$ where $A_t = \eta\rbr{2I+\sum_{s<t}(\nabla_s-m_s)(\nabla_s-m_s)^\top}$ and $\nabla_s = \nabla c_s(w_s^k)$.
This variant is similar to that in~\citep{cutkosky2018black}, but incorporates the prediction $m_t$ as well.
We defer the details to \pref{app:ONS}, which shows: $\calA_k$ ensures
(with $r$ being the rank of $\calL_T =\sumt(\ell_t-m_t)(\ell_t-m_t)^\top$)
\begin{equation}\label{eq:ONS_bound}
\reg(u) \leq \tilO{\frac{r}{\eta}+\eta \sumt\inner{u}{\ell_t-m_t}^2} - 16\eta\sumt\inner{w_t^k}{\ell_t -m_t}^2.
\end{equation}
Therefore, using \master to learn the best learning rate and noting that the last negative term in \pref{eq:ONS_bound} cancels the last term in \pref{eq:master_bound}, we obtain the following result.
\begin{theorem}\label{thm:ONS}
Let $r \leq d$ be the rank of $\calL_T =\sumt(\ell_t-m_t)(\ell_t-m_t)^\top$.
\master with expert set $\ONSbase$ defined in \pref{eq:ONS_base_alg} and $\Lambda_t = \simplex[\ONSbase]$ ensures
\begin{equation}\label{eq:ONS_master_bound}
\forall u \in \calK, \;\;  \reg(u) = \tilO{r\norm{u}+ \sqrt{r\sumt\inner{u}{\ell_t-m_t}^2}}. 
\end{equation}
\end{theorem}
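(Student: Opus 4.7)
The plan is to combine the master bound \pref{eq:master_bound} with the per-base ONS guarantee \pref{eq:ONS_bound} and then optimize over the grid of learning rates. Fix any $\kstar=(\etastar,\Astar)\in\ONSbase$ and substitute \pref{eq:ONS_bound} into \pref{eq:master_bound}. The crucial observation is that the negative term $-16\etastar\sumt\inner{w_t^\kstar}{\ell_t-m_t}^2$ provided by the ONS bound absorbs, with a factor of two to spare, the $+32\etastar\sumt\inner{w_t^\kstar}{\ell_t-m_t}^2$ overhead added by \master. After this cancellation,
$$\reg(u)\;\leq\;\tilO{\tfrac{r}{\etastar}+\etastar\sumt\inner{u}{\ell_t-m_t}^2}+\tfrac{1}{\etastar}\ln\tfrac{\sum_{k}\eta_k^2}{\etastar^2}+\tfrac{\sum_k\eta_k}{\sum_k\eta_k^2}.$$

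Next, I would design $\ONSbase$ so that the learning rates form a geometric grid of length $O(\log T)$, with the largest rate at most the value allowed by the boundedness hypothesis $32\etastar|\inner{w_t^\kstar}{\ell_t-m_t}|\leq 1$ of \pref{thm:master}, and with the smallest rate small enough to cover every relevant scale of $\sumt\inner{u}{\ell_t-m_t}^2\leq D^2 T$. Standard geometric-series estimates make $\sum_k\eta_k$ and $\sum_k\eta_k^2$ both of order $O(1/\eta_{\max})$ and $O(1)$ respectively, so the master overhead collapses to $\tilO{1/\etastar}$ up to an additive constant. Since the displayed bound holds for every $\kstar$ simultaneously, I then pick the grid point closest to the balanced value $\hat\eta(u)=\sqrt{r/\sumt\inner{u}{\ell_t-m_t}^2}$, which (up to factor-two rounding) produces the leading term $\tilO{\sqrt{r\sumt\inner{u}{\ell_t-m_t}^2}}$. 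When $\hat\eta(u)$ exceeds the top of the grid, I instead take $\etastar$ equal to the top rate, which contributes a lower-order $\tilO{r/\etastar}$ summand together with the $O(1/\etastar)$ overhead.

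The main obstacle I expect is sharpening this lower-order term from the naive $\tilO{rD}$ to the advertised $\tilO{r\norm{u}}$, since the master's condition only directly forces $\etastar\lesssim 1/D$. The idea is to exploit the freedom to choose $\kstar$ as a function of $u$: by augmenting $\ONSbase$ with ONS instances whose iterates are restricted to balls of several radii $R$ (paired with learning rates of order $1/R$), for any comparator $u$ there is some $\kstar$ with $\norm{w_t^\kstar}\lesssim\norm{u}$, at which the master's boundedness condition permits $\etastar$ of order $1/\norm{u}$; plugging this choice back into the bound then produces $\tilO{r\norm{u}}$ for the constant piece. Once this refined grid is in place, the bookkeeping of the previous paragraphs goes through verbatim and delivers the stated $\tilO{r\norm{u}+\sqrt{r\sumt\inner{u}{\ell_t-m_t}^2}}$ guarantee; all remaining work is routine verification that the geometric-series estimates and the boundedness hypothesis hold on each branch of the grid.
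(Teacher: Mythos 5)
Your plan is essentially the paper's proof: instantiate \master with ONS bases, cancel the master's $+32\etastar\sumt\inner{w_t^\kstar}{\ell_t-m_t}^2$ overhead against the negative term in the base ONS bound, use a geometric grid of learning rates so the lower-order master terms collapse to $\tilO{1/\etastar}$, and --- the key move you correctly anticipate --- index the ONS instances by ball radii $2^{d_k}$ paired with learning rates $\Theta(2^{-d_k})$ so that picking $2^{d_\kstar}\approx\norm{u}$ makes the master's boundedness condition admit $\etastar\lesssim 1/\norm{u}$ and thereby turns the constant piece into $\tilO{r\norm{u}}$ rather than $\tilO{rD}$. This is exactly the structure of $\ONSbase$ in \pref{eq:ONS_base_alg}.

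One arithmetic point to fix: as written, $-16\etastar\sumt\inner{w_t^\kstar}{\ell_t-m_t}^2$ from \pref{eq:ONS_bound} does \emph{not} absorb $+32\etastar\sumt\inner{w_t^\kstar}{\ell_t-m_t}^2$ (it leaves a positive $+16\etastar$). The cancellation only works because the paper sets the internal ONS learning rate to $\eta=3\eta_k$, not $\eta=\eta_k=\etastar$, so the negative term coming out of \pref{lem:ONS} is proportional to $3\etastar$ and dominates $32\etastar$. Relatedly, for a geometric grid $\sum_k\eta_k=\Theta(\eta_{\max})$ and $\sum_k\eta_k^2=\Theta(\eta_{\max}^2)$, not $O(1/\eta_{\max})$ and $O(1)$; the term $\frac{\sum_k\eta_k}{\sum_k\eta_k^2}$ is then $\Theta(1/\eta_{\max})$, which is small because the paper chooses $\eta_{\max}=\Theta(dT)$. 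Finally, you should handle the edge case $\norm{u}\leq 1/(dT)$ (the grid bottoms out) by comparing to a comparator $u'$ of norm $1/(dT)$ and bounding the shift $\sumt\inner{u'-u}{\ell_t}=\tilO{1}$, as the paper does.
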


Similar bounds appear before but only with $m_t=0$~\citep{cutkosky2018black, cutkosky2020better}, and we are not able to incorporate general $m_t$ into their algorithms.
Our bound has no explicit dependence on $D$ at all, and its dependence on $\ell_t - m_t$ is only through its projection on $u$. 

\paragraph{Application 2: combining Gradient Descent}
Another natural choice of base algorithm is Optimistic Gradient Descent, which guarantees $\reg(u) = \order\big(\frac{\norm{u}^2}{\eta}+\eta \sumt\norm{\ell_t-m_t}^2\big)$ (see \pref{app:GD}).
Combining instances with different learning rates that operate over subsets of $\calK$ of different sizes (necessary to ensure $32\eta_k  |\inner{w_t^k}{\ell_t-m_t}| \leq 1$ for \pref{thm:master}), we obtain:

\begin{theorem}\label{thm:GD}
\master with expert set $\GDbase$ defined in \pref{eq:GD_base_alg} and $\Lambda_t = \simplex[\GDbase]$ ensures
\begin{equation}\label{eq:GD_master_bound}
\forall u \in \calK, \;\; \reg(u) = \tilO{\norm{u}+\norm{u}\sqrt{\sumt\norm{\ell_t-m_t}^2}}. 
\end{equation}
\end{theorem}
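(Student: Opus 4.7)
The plan is to instantiate \master with $\GDbase$ consisting of Optimistic Gradient Descent (OGD) instances on a geometric grid of learning rates, each restricted to a truncated ball so as to meet the Lipschitz-type hypothesis of \pref{thm:master}. Concretely, I would take $\GDbase=\{(\eta_k,\calA_k): k=0,1,\ldots,\lceil\log_2 T\rceil\}$ with $\eta_k=2^{-k}/C$ for a suitable constant $C$, letting $\calA_k$ be Optimistic GD with learning rate $\eta_k$ (initialized at a convenient interior point of $\calK$) operating on the truncated set $\calK_k=\calK\cap\{w:\norm{w}\leq 1/(32\eta_k)\}$. This truncation makes $32\eta_k\abs{\inner{w_t^k}{\ell_t-m_t}}\leq 1$ hold automatically, as required.

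Next, I would verify, for each base algorithm, an optimistic-OMD regret bound analogous to \pref{eq:ONS_bound} but in Euclidean geometry: for every $u\in\calK_k$,
\[
\reg_{\calA_k}(u)\;\leq\;O\!\rbr{\frac{\norm{u}^2}{\eta_k}+\eta_k\sumt\norm{\ell_t-m_t}^2}\;-\;16\eta_k\sumt\inner{w_t^k}{\ell_t-m_t}^2.
\]
The negative correction term is the crux: it exactly offsets the $+32\etastar\sumt\inner{w_t^\kstar}{\ell_t-m_t}^2$ overhead coming from \pref{eq:master_bound}. To produce it I would mimic the ONS construction of \pref{app:ONS}, running OGD on the linearized auxiliary cost $c_t(w)=\inner{w}{\ell_t}+32\eta_k\inner{w}{\ell_t-m_t}^2$; the convexity slack $c_t(w_t^k)-c_t(u)\leq\inner{w_t^k-u}{\nabla c_t(w_t^k)}$ extracts $-32\eta_k\inner{w_t^k}{\ell_t-m_t}^2+32\eta_k\inner{u}{\ell_t-m_t}^2$, the former furnishing the desired negative piece and the latter being absorbed into $O(\eta_k\sumt\norm{\ell_t-m_t}^2)$ via Cauchy--Schwarz together with the norm constraint $\norm{u}\leq 1/(32\eta_k)$.

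Finally, I would invoke \pref{thm:master}: because the grid is geometric, $\sum_k\eta_k$ and $\sum_k\eta_k^2$ are both $O(1)$, so the master overhead collapses to $\tilO{1/\etastar}$ plus the fourth $+32\etastar\sumt\inner{w_t^\kstar}{\ell_t-m_t}^2$ term that the base negative piece cancels. For any target $u\in\calK$ I pick $\kstar$ so that $u\in\calK_\kstar$ and $\etastar$ is within a factor of two of $\min\{\norm{u}/\sqrt{1+\sumt\norm{\ell_t-m_t}^2},\,1/(32\norm{u})\}$; summing and plugging this $\etastar$ into $\norm{u}^2/\etastar+\etastar\sumt\norm{\ell_t-m_t}^2+\tilO{1/\etastar}$ yields the claimed $\tilO{\norm{u}+\norm{u}\sqrt{\sumt\norm{\ell_t-m_t}^2}}$.

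\textbf{Main obstacle.} The subtle step is the construction of the base-regret negative term: naive Optimistic-OGD only yields a path-stability negative term quadratic in the iterate differences, which is not directly in the form needed to cancel the master's $+\etastar\sumt\inner{w_t^\kstar}{\ell_t-m_t}^2$. The correction-term trick resolves this but forces the truncated decision sets $\calK_k$, and one must then confirm that the grid still covers every relevant $\norm{u}$ and that the final tuning of $\etastar$ produces both the $\norm{u}$ and the $\norm{u}\sqrt{\sumt\norm{\ell_t-m_t}^2}$ pieces of the target bound --- in particular that the master's $\tilO{1/\etastar}$ overhead does not leave a residual additive $\sqrt{\sumt\norm{\ell_t-m_t}^2}/\norm{u}$ term, which requires a careful case split according to whether $\norm{u}^2$ is smaller or larger than $\sqrt{\sumt\norm{\ell_t-m_t}^2}$.
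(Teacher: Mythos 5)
Your construction couples three quantities through a single grid index $k$: the master learning rate $\eta_k$, the base OGD step size (which you also set to $\eta_k$), and the base domain radius $1/(32\eta_k)$. This over-constrains the tuning and makes the scheme break in both of your two cases. Write $V=\sumt\norm{\ell_t-m_t}^2$ and suppose $\norm{u}\leq 1/(32\etastar)$ so that $u\in\calK_{\kstar}$. In the regime $\etastar\approx\norm{u}/\sqrt{V}$, the master's $\frac{1}{\etastar}\ln(\cdot)$ overhead is $\tilO{\sqrt{V}/\norm{u}}$, which exceeds the target $\tilO{\norm{u}\sqrt{V}}$ by a factor $1/\norm{u}^2$ whenever $\norm{u}<1$ (and $\norm{u}$ as small as $1/T$ is a legitimate comparator). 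In the opposite regime $\etastar\approx 1/(32\norm{u})$, the base OGD bound $\norm{u}^2/\eta_{\text{base}}\approx 32\norm{u}^3$ dwarfs the target $\tilO{\norm{u}+\norm{u}\sqrt{V}}$ as soon as $V\ll\norm{u}^4$ (e.g., $V=O(1)$ and $\norm{u}$ large). No case split fixes this: with a single knob you cannot set the base step size to the near-optimal $\norm{u}/\sqrt{V}$ while simultaneously keeping $u$ inside the ball of radius $1/(32\etastar)$ and keeping the master overhead $1/\etastar$ at most $\tilO{\norm{u}(1+\sqrt{V})}$.

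The paper's construction (\pref{eq:GD_base_alg}) avoids this by indexing experts with a two-dimensional grid $k=(d_k,s_k)$: the domain radius is $2^{d_k}$ (so containment of $u$ is decided by $d_k$ alone), the master learning rate is $\eta_k=\tfrac{1}{32\cdot 2^{d_k+s_k}}$ (so the constraint $32\eta_k\abs{\inner{w_t^k}{\ell_t-m_t}}\leq 1$ is met), but the base step size is set separately to $4^{d_k}\eta_k=\tfrac{2^{d_k}}{32\cdot 2^{s_k}}$, which can equal $\norm{u}/\sqrt{V}$ by choosing $s_{\kstar}\approx\log_2\sqrt{V}$ while $2^{d_{\kstar}}\approx\norm{u}$. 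This yields $1/\eta_{\kstar}=\Theta(\norm{u}\max\{1,\sqrt{V}\})$, which is exactly the target order. Your second idea — adding a second-order correction to the base cost to produce a negative term that cancels the master's $+32\etastar\sumt\inner{w_t^{\kstar}}{\ell_t-m_t}^2$ — is not used in the paper's proof of this particular theorem and is also not the bottleneck: the paper bounds that master term directly via $|\inner{w_t^{\kstar}}{\ell_t-m_t}|\leq 2^{d_{\kstar}}$ and $\eta_{\kstar}2^{d_{\kstar}}\leq 1/64$, absorbing it into $\tilO{\norm{u}\sqrt{V}}$. The correction-term trick does nothing to shrink the $\frac{1}{\etastar}\ln(\cdot)$ overhead, which is where your scheme fails.
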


This bound appears before first with $m_t=0$ in~\citep{cutkosky2018black} and later with general $m_t$ in~\citep{cutkosky2019combining}.
We recover the bound easily with our framework.
Similar to \pref{eq:ONS_master_bound}, this bound adapts to the size of the competitor $u$ (with no dependence on $D$).
An advantage of \pref{eq:GD_master_bound} is that it is dimension-free, while \pref{eq:ONS_master_bound} is potentially large for high-dimensional data.

\paragraph{Application 3: combining AdaGrad}
Inspired by the recent work of~\citep{cutkosky2020better} that provides an improved guarantee of the full-matrix version of AdaGrad~\citep{duchi2011adaptive},
we next design an optimistic version of AdaGrad and combine instances with different parameters to obtain the following new result.

\begin{theorem}\label{thm:AdaGrad}
\master with expert set $\AdaGradbase$ defined in \pref{eq:AdaGrad_base_alg} and $\Lambda_t = \simplex[\AdaGradbase]$ ensures
\begin{equation}\label{eq:AdaGrad_master_bound}
\forall u \in \calK, \;\; \reg(u) = \tilO{\norm{u} + \sqrt{\rbr{u^\top(I+\calL_T)^{1/2}u }\trace{\calL_T^{1/2}}}}.
\end{equation}
\end{theorem}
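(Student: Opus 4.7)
The plan is to mirror the recipe of Applications 1 and 2: design an optimistic full-matrix AdaGrad variant as the base algorithm $\calA_\eta$, collect a logarithmic grid of such instances (with appropriately restricted feasible subsets) into $\AdaGradbase$, and invoke \pref{thm:master}. The grid is spaced by constant factors so that $\sum_k\eta_k$ and $\sum_k\eta_k^2$ are both bounded by constants, and the subsets are matched to $\eta_k$ so that the precondition $32\eta_k\abs{\inner{w_t^k}{\ell_t-m_t}}\leq 1$ of \pref{thm:master} holds for every instance --- analogous to the construction of $\GDbase$.

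For the base algorithm, I would run OMD on the auxiliary losses $c_t(w)=\inner{w}{\ell_t}+32\eta\inner{w}{\ell_t-m_t}^2$ with the time-varying matrix regularizer $\psi_t(w)=\frac{1}{2\eta}\norm{w}_{A_t}^2$, where $A_t=\bigl(\epsilon I+\sum_{s<t}(\ell_s-m_s)(\ell_s-m_s)^\top\bigr)^{1/2}$ for a tiny $\epsilon$ chosen so that $\sqrt{\epsilon}\,d$ is negligible relative to $\trace{\calL_T^{1/2}}$, and incorporate $m_t$ in the optimistic step in the standard way. The standard OMD analysis combined with the matrix-AdaGrad telescoping bound $\sumt\norm{\nabla c_t(w_t^k)}_{A_t^{-1}}^2\lesssim\trace{A_T}$ yields
\begin{align*}
\reg_{\calA_\eta}(u) \;\leq\; \tilO{\frac{u^\top A_T u}{\eta}+\eta\cdot\trace{A_T}} \;-\; 16\eta\sumt\inner{w_t^k}{\ell_t-m_t}^2,
\end{align*}
where the negative term is produced by the quadratic correction.

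Plugging this into \pref{eq:master_bound} of \pref{thm:master}, the $32\etastar\sumt\inner{w_t^\kstar}{\ell_t-m_t}^2$ overhead of the master is cancelled by the negative term, and the $\frac{1}{\etastar}$ master overhead combines with the base bound to give $\reg(u)\leq\tilO{(1+u^\top A_T u)/\etastar+\etastar\trace{A_T}}$. Choosing $\etastar$ from the grid to balance these, together with the operator inequalities $A_T\preceq(I+\calL_T)^{1/2}+\sqrt{\epsilon}\,I$ and $\trace{A_T}\leq\trace{\calL_T^{1/2}}+\sqrt{\epsilon}\,d$ (both of which follow from the scalar bound $\sqrt{\epsilon+x}\leq\sqrt{\epsilon}+\sqrt{x}$ applied to the eigenvalues), yields $\tilO{\sqrt{(u^\top(I+\calL_T)^{1/2}u)\trace{\calL_T^{1/2}}}}$ as the main term; the $\tilO{\norm{u}}$ lower-order contribution absorbs the constant pieces and the grid-discretization error incurred when the optimal $\etastar$ falls between two grid points (exactly as in Application 2).

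The hard part is the OMD-plus-correction analysis with a matrix regularizer and a general prediction $m_t$: I need to verify that the quadratic correction cleanly produces the $-16\eta\sumt\inner{w_t^k}{\ell_t-m_t}^2$ term despite the Hessian of $c_t$ also entering $A_{t+1}$, and that the matrix-AdaGrad telescoping lemma survives the optimistic split between $\ell_t$ and $m_t$ so that the trace on the right-hand side involves only the cumulative prediction errors $\calL_T^{1/2}$ rather than $\bigl(\sum_t\nabla c_t(w_t^k)\nabla c_t(w_t^k)^\top\bigr)^{1/2}$. The remaining design of the feasible-subset restrictions in $\AdaGradbase$ is a routine adaptation of the $\GDbase$ construction.
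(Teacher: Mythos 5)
Your plan follows the paper's high-level recipe, but there is a genuine gap in the base algorithm: you use a \emph{single} parameter $\eta$ both for the AdaGrad step size (in $\psi_t(w)=\tfrac{1}{2\eta}\norm{w}_{A_t}^2$) and for the correction coefficient (in $c_t(w)=\inner{w}{\ell_t}+32\eta\inner{w}{\ell_t-m_t}^2$), whereas the paper deliberately \emph{decouples} them into $\eta$ and $\eta'$ (\pref{alg:AdaGrad}). This is not cosmetic. The quadratic correction contributes \emph{two} terms after rearranging $c_t(w_t)-c_t(u)$: the negative term $-16\eta\sumt\inner{w_t^k}{\ell_t-m_t}^2$ you mention, and a \emph{positive} term $+16\eta\sumt\inner{u}{\ell_t-m_t}^2 = 16\eta\, u^\top\calL_T u$, which your stated base bound $\tilO{\tfrac{u^\top A_T u}{\eta}+\eta\trace{A_T}}$ simply omits. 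That positive term cannot be absorbed into $\eta\trace{A_T}\approx\eta\trace{\calL_T^{1/2}}$ under a single-$\eta$ tuning: take a rank-one instance $\calL_T = T\,vv^\top$ with $\norm{v}=1$ and $u=v$ of unit norm, so $u^\top(I+\calL_T)^{1/2}u \approx \sqrt{T}$, $\trace{\calL_T^{1/2}}=\sqrt{T}$, and the stated balance $\eta^\star\sim\sqrt{u^\top(I+\calL_T)^{1/2}u/\trace{\calL_T^{1/2}}}=\Theta(1)$ gives a claimed regret $\Theta(T^{1/2})$, yet the omitted term $\eta^\star u^\top\calL_T u=\Theta(T)$ blows it up. The paper avoids this by requiring $\eta' \le \eta/\norm{u}^2_{A'_T}$, so the positive correction satisfies $\eta'\sumt\inner{u}{\ell_t-m_t}^2 \le \eta'\norm{u}^2_{A'_T}\sumt\norm{\ell_t-m_t}^2_{(A'_{t-1})^{-1}} = \bigO{\eta\trace{\calL_T^{1/2}}}$ and is genuinely absorbed. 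Because this constraint ties $\eta'$ to $\eta$ through the (unknown) competitor $u$, the expert set $\AdaGradbase$ in the paper is a \emph{three}-dimensional grid $k=(d_k,t_k,l_k)$, with the extra index $l_k$ searching the internal AdaGrad step $\eta=2^{l_k+1}\eta_k$ independently of the master/correction rate $\eta'=2\eta_k$; your proposal, modeling the construction on the two-dimensional $\GDbase$, has no mechanism for this extra search.

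Contrast with Application 1 (ONS): there a single $\eta$ works precisely because the positive correction $\eta\sumt\inner{u}{\ell_t-m_t}^2$ is \emph{exactly the linear-in-$\eta$ term being balanced} against $r/\eta$. For AdaGrad the linear-in-$\eta$ term is $\eta\trace{\calL_T^{1/2}}$, which has no a priori relationship to $\eta\, u^\top\calL_T u$, so the decoupling is essential and your proof as written does not go through.
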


All details are deferred to \pref{app:AdaGrad}.
\citet{cutkosky2020better} achieves \pref{eq:AdaGrad_master_bound} for $m_t=0$ (again, we are not able to extend their algorithm to deal with general $m_t$).
The three types of bounds we have shown in \pref{eq:ONS_master_bound}, \pref{eq:GD_master_bound}, and \pref{eq:AdaGrad_master_bound} are {\it incomparable}, that is, there are cases for each one to be the smallest;
see \citep{cutkosky2020better} for in-depth discussions with $m_t=0$.
However, since the configuration of \master is the same in all these three results (other than the expert set), we can in fact achieve {\it the best of three worlds} by feeding the union of these three expert set to \master, summarized in the following corollary.

\begin{corollary}[Best-of-three-worlds]\label{cor:combining}
\master with expert set $\calE = \ONSbase \cup \GDbase \cup \AdaGradbase$ and $\Lambda_t = \simplex[\calE]$ ensures regret bounds \pref{eq:ONS_master_bound}, \pref{eq:GD_master_bound}, and \pref{eq:AdaGrad_master_bound} simultaneously.
\end{corollary}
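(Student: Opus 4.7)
The plan is to observe that the master's regret bound \pref{eq:master_bound} is quantified per individual expert $\kstar \in \calE$: for any single $\kstar$ with $e_{\kstar} \in \bigcap_t \Lambda_t$, the bound yields $\reg(u) \leq \reg_{\Astar}(u) + \frac{1}{\etastar}\ln\rbr{\frac{\sum_k \eta_k^2}{\etastar^2}} + \frac{\sum_k \eta_k}{\sum_k \eta_k^2} + 32\etastar\sumt\inner{w_t^{\kstar}}{\ell_t - m_t}^2$. So to recover any one of the three target bounds, I would simply pick $\kstar$ from the corresponding sub-set of $\calE$ (which is legal since $\Lambda_t = \simplex[\calE] \supseteq \simplex[\calE']$ for each $\calE' \in \{\ONSbase, \GDbase, \AdaGradbase\}$), and replay the original analysis of the matching theorem.

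The three steps would be executed in parallel: for \pref{eq:ONS_master_bound}, choose $\kstar \in \ONSbase$ and apply \pref{eq:ONS_bound}; the $-16\eta\sumt\inner{w_t^{\kstar}}{\ell_t - m_t}^2$ term there cancels the master's $32\etastar\sumt\inner{w_t^{\kstar}}{\ell_t-m_t}^2$ overhead (this is exactly the cancellation already used in the proof of \pref{thm:ONS}). The same mechanism, with the negative term coming from the Optimistic-Gradient-Descent regret guarantee or the optimistic AdaGrad guarantee, yields \pref{eq:GD_master_bound} and \pref{eq:AdaGrad_master_bound} respectively by selecting $\kstar$ from $\GDbase$ or $\AdaGradbase$. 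Since the configuration of \master (the initial distribution $p_1' \propto \eta_k^2$, the regularizer, the update rule) is identical across the three original theorems, there is nothing to reconcile in the algorithm itself.

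The two bookkeeping items to verify are: (i) the condition of \pref{thm:master}, namely $32\eta_k |\inner{w_t^k}{\ell_t - m_t}| \leq 1$ for every $k \in \calE$, continues to hold --- this is immediate because each of the three base families was designed so that this condition holds on its own (e.g., by restricting the decision subset of each base algorithm to match its $\eta_k$), and the union simply inherits the conjunction of these three individually-verified conditions; and (ii) the overhead sums $\sum_{k\in\calE}\eta_k$ and $\sum_{k\in\calE}\eta_k^2$ remain $\tilo{1}$, which follows because each original expert set uses an exponentially-spaced grid of learning rates so that the per-family sums are already constants, and taking a union of three families only inflates them by a factor of three.

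The main (and honestly minor) obstacle is just the last check: confirming that the logarithmic overhead $\frac{1}{\etastar}\ln(\sum_k \eta_k^2 / \etastar^2)$ is still absorbed into the $\tilo{\cdot}$ notation of each target bound after $\sum_k \eta_k^2$ is enlarged. Since this quantity only changes by an additive $\bigo{\frac{1}{\etastar}}$, and $\frac{1}{\etastar}$ is already present in each of \pref{eq:ONS_master_bound}--\pref{eq:AdaGrad_master_bound} (through the $D$ or $\norm{u}$ first-order term that comes out of the optimal tuning $\etastar$), the three bounds are preserved verbatim. No new inequality is required; the corollary is essentially a ``for free'' consequence of the fact that \pref{thm:master}'s guarantee is a per-expert statement rather than a global one.
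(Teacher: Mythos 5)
Your argument is essentially the implicit proof the paper has in mind: \pref{thm:master} is a \emph{per-expert} guarantee, the master's configuration (initial weights $p'_{1,k}\propto \eta_k^2$, regularizer $\psi$, update rule) is identical across \pref{thm:ONS}, \pref{thm:GD}, and \pref{thm:AdaGrad}, and each of the three original proofs already exhibits a specific $\kstar$ together with the cancellation of $32\etastar\sumt\inner{w_t^\kstar}{\ell_t-m_t}^2$ by the base algorithm's negative term. Since $\Lambda_t=\simplex[\calE]$ puts every $e_k$, $k\in\calE$, inside $\bigcap_t\Lambda_t$, each of the three analyses goes through verbatim once one checks the two bookkeeping items you list.

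One minor inaccuracy in item (ii): the per-family sums are \emph{not} constants in these OLO constructions. For example $\ONSbase$ has $\sum_k\eta_k=\Theta(dT)$ and $\sum_k\eta_k^2=\Theta(d^2T^2)$, and $\GDbase$, $\AdaGradbase$ give $\Theta(T)$ and $\Theta(T^2)$ (unlike the expert-problem constructions of \pref{sec:app_expert}, whose grids sum to constants). This does not hurt you, but the correct justification is different from what you state: $\sum_k\eta_k^2$ enters the overhead only inside the logarithm $\frac{1}{\etastar}\ln(\sum_k\eta_k^2/\etastar^2)$, so enlarging it from one family to the union of three only adds $\bigo{1/\etastar}$, and the term $\sum_k\eta_k/\sum_k\eta_k^2$ stays $\bigo{1}$ because the largest learning rate dominates both sums. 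With that correction, the proposal is exactly the paper's argument.
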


We remark that the technique proposed in \citep{cutkosky2019combining} can similarly combine algorithm's guarantees with little overhead, but it only works for the unconstrained setting.
It is tempting to apply the unconstrained-to-constrained reduction from~\citep{cutkosky2018black} to lift this restriction, but that does not work generally as discussed in \citep[Section 4]{cutkosky2020better}.
All in all, we are not aware of any other methods capable of achieving this best-of-three-worlds result.

\paragraph{Application 4: combining MetaGrad's base algorithm}
Finally, we discuss how to recover and generalize the regret bound of MetaGrad~\citep{van2016metagrad} which depends on the sum of squared instantaneous regret and is the analogue of the \MLProd guarantee for the expert problem.
Our base algorithm is yet another variant of ONS that uses a different auxiliary cost function $c_t(w) = \inner{w}{\ell_t}+32\eta\inner{w-w_t}{\ell_t - m_t}^2$ with an extra offset in terms of $w_t$ (the decision of the master).
When $m_t = 0$ this is the same base algorithm used in~\citep{van2016metagrad}.
Compared to \pref{eq:ONS_bound}, this variant ensures the following
\begin{equation}\label{eq:MG_bound}
\reg(u) \leq \tilO{\frac{r}{\eta}+\eta \sumt\inner{u-w_t}{\ell_t-m_t}^2} - 16\eta\sumt\inner{w_t^k-w_t}{\ell_t -m_t}^2.
\end{equation}
Note that the last negative term is now slightly different from the last term in \pref{eq:master_bound}.
To make them match, we need to change the definition of $h_{t,k}$ in \master from $h_{t,k}' \defeq \inner{w_t^k}{m_t}$ to $h_{t,k}' + \inner{p_t}{g_t-h_t'}$, the same trick used in Implication 4 of \pref{subsec:impossible_tuning}
(this is also the reason why we cannot include this result in \pref{cor:combining} as well).
We defer the details to \pref{app:MetaGrad} and show the final bound below.

\begin{theorem}\label{thm:MetaGrad}
Let $r \leq d$ be the rank of $\sumt(\ell_t-m_t)(\ell_t-m_t)^\top$.
\master with the new definition of $h_t$ described above, expert set $\MetaGradbase$ defined in \pref{eq:MetaGrad_base_alg}, and $\Lambda_t = \simplex[\MetaGradbase]$ ensures
$\forall u \in \calK, \reg(u) = \Big(rD + \sqrt{r\sumt\inner{u-w_t}{\ell_t-m_t}^2}\Big)$. 
\end{theorem}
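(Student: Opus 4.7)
The plan is to follow the template of \pref{thm:ONS} with two adjustments that are specific to MetaGrad: (i) the base algorithm is an ONS variant whose auxiliary quadratic is centered at the master's iterate $w_t$, and (ii) the master uses the shifted prediction $h_{t,k} = \inner{w_t^k}{m_t} + \inner{p_t}{g_t - h_t'}$. First, I would establish the base algorithm's guarantee \pref{eq:MG_bound} by reworking the ONS analysis of \pref{app:ONS} with auxiliary cost $c_t(w) = \inner{w}{\ell_t} + 32\eta\inner{w - w_t}{\ell_t - m_t}^2$ in place of $\inner{w}{\ell_t} + 32\eta\inner{w}{\ell_t - m_t}^2$. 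Crucially, the curvature matrix $A_t = \eta(2I + \sum_{s<t}(\ell_s - m_s)(\ell_s - m_s)^\top)$ is unchanged because the offset $w_s$ drops out upon differentiation, so the log-determinant telescoping still yields the rank-$r$ leading term $\tilO{r/\eta}$; the offset only shows up in the quadratic terms $\eta\sumt\inner{u - w_t}{\ell_t - m_t}^2$ (positive, against the competitor) and $-16\eta\sumt\inner{w_t^k - w_t}{\ell_t - m_t}^2$ (negative, from the correction).

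Next, I would re-derive the master overhead under the shifted $h_t$. Since $\inner{p_t}{g_t - h_t'}\one$ is a scalar multiple of the all-ones vector, it contributes the same constant to $\inner{p}{h_t}$ for every $p \in \simplex[\calE]$ and therefore leaves the minimizer $p_t$ unchanged --- the same device exploited in Implication~4 of \pref{subsec:impossible_tuning}. A direct computation gives $g_{t,k} - h_{t,k} = \inner{w_t^k - w_t}{\ell_t - m_t}$, so the correction $b_{t,k} = 32\eta_k(g_{t,k} - h_{t,k})^2$ and the condition $32\eta_k|g_{t,k}-h_{t,k}|\leq 1$ from \pref{lem:framework} now both concern this shifted quantity. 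Feeding the new $g_t, h_t, b_t$ into \pref{lem:framework} (with $p$, $g$, and $h$ playing the roles of $w$, $\ell$, and $m$) produces the direct analogue of \pref{eq:master_bound} in which the last term becomes $32\etastar\sumt\inner{w_t^\kstar - w_t}{\ell_t - m_t}^2$.

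Combining the base and master bounds, this new master overhead term is exactly absorbed into the matching negative $-16\etastar\sumt\inner{w_t^\kstar - w_t}{\ell_t - m_t}^2$ in \pref{eq:MG_bound} (after the same mild rescaling of the correction constant that underlies the cancellation behind \pref{thm:ONS}). What remains is $\tilO{r/\etastar + \etastar\sumt\inner{u - w_t}{\ell_t - m_t}^2}$ plus the $O(1)$ overhead from the ratios $\sum_k\eta_k$ and $\sum_k\eta_k^2$. The set $\MetaGradbase$ will be a geometric grid of $\order{\log T}$ learning rates whose ceiling is $\eta_{\max} = \Theta(1/D)$ --- the largest value for which the shifted admissibility condition $32\eta_k|\inner{w_t^k - w_t}{\ell_t - m_t}| \leq 1$ is guaranteed, given $\|w_t^k - w_t\| \leq 2D$ and $\|\ell_t - m_t\|\leq 1$. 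Picking $\etastar$ closest to $\sqrt{r/\sumt\inner{u - w_t}{\ell_t - m_t}^2}$ then yields the $\sqrt{r\sumt\inner{u - w_t}{\ell_t - m_t}^2}$ part of the claimed bound whenever this value lies below $\eta_{\max}$, while the saturating regime $\etastar = \Theta(1/D)$ produces the $rD$ floor.

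The hard part is twofold. First, the rank-$r$ ONS analysis must be carried out with the $w_t$-offset in the quadratic: the standard MetaGrad argument centers the quadratic at zero, and handling the coupling between each base iterate $w_t^k$ and the master's $w_t$ requires care in the $A_t$-weighted norm telescoping, particularly to make the negative term emerge in exactly the form $\inner{w_t^k - w_t}{\ell_t - m_t}^2$ needed for cancellation against the master. Second, because $\|w_t^k - w_t\|$ can be as large as $2D$, admissibility forces $\eta_{\max} = \Theta(1/D)$, and it is precisely this ceiling (rather than any property of the competitor $u$) that produces the $rD$ term in the final bound; threading this dependence cleanly through the grid construction and the saturation analysis is the most delicate bookkeeping in the proof.
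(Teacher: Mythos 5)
Your plan matches the paper's proof: base algorithm = ONS variant with the quadratic centered at $w_t$ (yielding the analogue of \pref{eq:MG_bound}), master with the shifted $h_{t,k}$ so that $g_{t,k}-h_{t,k}=\inner{w_t^k-w_t}{\ell_t-m_t}$, cancellation of the master's overhead against the base's negative term, and a $\Theta(\log T)$-size learning-rate grid capped at $\Theta(1/D)$ which produces the $rD$ floor. One small imprecision: in the paper the regularizer matrix $A_t$ accumulates $(\nabla_s-m_s)(\nabla_s-m_s)^\top$ rather than $(\ell_s-m_s)(\ell_s-m_s)^\top$; the offset does not literally drop out of $\nabla_s$, but since $\nabla_s-m_s$ remains a bounded scalar multiple of $\ell_s-m_s$, the rank-$r$ log-determinant telescoping goes through exactly as you describe.
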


This bound generalizes the MetaGrad's guarantee from $m_t=0$ to general $m_t$ and is the analogue of the bound discussed in Implication 4 of \pref{subsec:impossible_tuning} for the expert problem.
Similarly to~\pref{thm:fast_rates}, when using $m_t=\ell_{t-1}$, our bound preserves all the fast rate consequences discussed in~\citep{van2016metagrad, koolen2016combining}, 
while ensuring a bound in terms of only the variation of the loss vectors $\sum_t \norm{\ell_t-\ell_{t-1}}^2$.
We remark that MetaGrad also uses a master algorithm to combine similar ONS variants, but the master is ``tilted exponential weight'' and cannot incorporate general $m_t$.

\section{Discussions and Open Problems}
We mention two open questions for the expert problem.
First, in the case when we are required to select one expert $i_t$ randomly in each round $t$, and the regret against $i$ is measured by $\sumt \ell_{t,i_t} -\ell_{t,i}$,
it is unclear how to achieve our bounds such as $\tilO{\sqrt{(\ln d) \sumt \ell_{t,i}^2}}$ with high probability (even though our results clearly imply this in expectation).
The difficulty lies in handling the deviation between $\sumt\inner{w_t}{\ell_t}$ and $\sumt \ell_{t, i_t}$ and bounding it in terms of only $\sumt\ell_{t,i}^2$.
We conjecture that impossible tuning might indeed be impossible in this case.

Second, note that even though we only focus on having one prediction sequence $\{m_t\}_{t\in [T]}$, we can in fact also deal with multiple sequences and learn the best via another expert algorithm, similarly to~\citep{rakhlin2013online}.
One caveat is that the trick we apply in Implications 2-4 of \pref{subsec:impossible_tuning} (that $m_t$ can depend on $\ell_t$ even though it is unknown) does not work anymore, since different experts might be using different sources of predictions and thus the calculation of $w_t$ does require knowing all predictions at the beginning of round $t$.
Due to this issue, we for example cannot achieve a bound in the form of 
\[
\forall i\in[d], \;\; \reg(e_i) = \tilO{\sqrt{(\ln d)\min\cbr{\sumt \ell_{t,i}^2, \sumt (\ell_{t,i} - \inner{w_{t}}{\ell_t})^2}}}.
\]
We leave the possibility of achieving such a bound as an open problem.

\acks{This work is supported by NSF Award IIS-1943607.}

\bibliography{ref}

\begin{thebibliography}{34}
\providecommand{\natexlab}[1]{#1}
\providecommand{\url}[1]{\texttt{#1}}
\expandafter\ifx\csname urlstyle\endcsname\relax
  \providecommand{\doi}[1]{doi: #1}\else
  \providecommand{\doi}{doi: \begingroup \urlstyle{rm}\Url}\fi

\bibitem[Auer et~al.(2002)Auer, Cesa-Bianchi, Freund, and
  Schapire]{auer2002nonstochastic}
Peter Auer, Nicolo Cesa-Bianchi, Yoav Freund, and Robert~E Schapire.
\newblock The nonstochastic multiarmed bandit problem.
\newblock \emph{SIAM journal on computing}, 32\penalty0 (1):\penalty0 48--77,
  2002.

\bibitem[Bhaskara et~al.(2020)Bhaskara, Cutkosky, Kumar, and
  Purohit]{bhaskara2020online}
Aditya Bhaskara, Ashok Cutkosky, Ravi Kumar, and Manish Purohit.
\newblock Online linear optimization with many hints.
\newblock \emph{Advances in neural information processing systems}, 2020.

\bibitem[Bubeck et~al.(2017)Bubeck, Devanur, Huang, and
  Niazadeh]{bubeck2017online}
S{\'e}bastien Bubeck, Nikhil~R Devanur, Zhiyi Huang, and Rad Niazadeh.
\newblock Online auctions and multi-scale online learning.
\newblock In \emph{Proceedings of the 2017 ACM Conference on Economics and
  Computation}, pages 497--514, 2017.

\bibitem[Cesa-Bianchi et~al.(2007)Cesa-Bianchi, Mansour, and
  Stoltz]{cesa2007improved}
Nicolo Cesa-Bianchi, Yishay Mansour, and Gilles Stoltz.
\newblock Improved second-order bounds for prediction with expert advice.
\newblock \emph{Machine Learning}, 66\penalty0 (2-3):\penalty0 321--352, 2007.

\bibitem[Cesa-Bianchi et~al.(2012)Cesa-Bianchi, Gaillard, Lugosi, and
  Stoltz]{cesa2012mirror}
Nicolo Cesa-Bianchi, Pierre Gaillard, G{\'a}bor Lugosi, and Gilles Stoltz.
\newblock Mirror descent meets fixed share (and feels no regret).
\newblock \emph{Advances in Neural Information Processing Systems},
  25:\penalty0 980--988, 2012.

\bibitem[Chen et~al.(2021)Chen, Luo, and Wei]{chen2020minimax}
Liyu Chen, Haipeng Luo, and Chen-Yu Wei.
\newblock Minimax regret for stochastic shortest path with adversarial costs
  and known transition.
\newblock \emph{Conference on Learning Theory}, 2021.

\bibitem[Cutkosky(2019{\natexlab{a}})]{cutkosky2019artificial}
Ashok Cutkosky.
\newblock Artificial constraints and hints for unbounded online learning.
\newblock In \emph{Conference on Learning Theory}, pages 874--894,
  2019{\natexlab{a}}.

\bibitem[Cutkosky(2019{\natexlab{b}})]{cutkosky2019combining}
Ashok Cutkosky.
\newblock Combining online learning guarantees.
\newblock \emph{Conference on Learning Theory}, 2019{\natexlab{b}}.

\bibitem[Cutkosky(2020)]{cutkosky2020better}
Ashok Cutkosky.
\newblock Better full-matrix regret via parameter-free online learning.
\newblock \emph{Advances in Neural Information Processing Systems}, 33, 2020.

\bibitem[Cutkosky and Orabona(2018)]{cutkosky2018black}
Ashok Cutkosky and Francesco Orabona.
\newblock Black-box reductions for parameter-free online learning in banach
  spaces.
\newblock In \emph{Conference on Learning Theory (COLT)}, pages 1493--1529,
  2018.

\bibitem[Daniely et~al.(2015)Daniely, Gonen, and
  Shalev-Shwartz]{daniely2015strongly}
Amit Daniely, Alon Gonen, and Shai Shalev-Shwartz.
\newblock Strongly adaptive online learning.
\newblock In \emph{International Conference on Machine Learning}, pages
  1405--1411, 2015.

\bibitem[Duchi et~al.(2011)Duchi, Hazan, and Singer]{duchi2011adaptive}
John Duchi, Elad Hazan, and Yoram Singer.
\newblock Adaptive subgradient methods for online learning and stochastic
  optimization.
\newblock \emph{Journal of machine learning research}, 12\penalty0 (7), 2011.

\bibitem[Foster et~al.(2017)Foster, Kale, Mohri, and
  Sridharan]{foster2017parameter}
Dylan~J Foster, Satyen Kale, Mehryar Mohri, and Karthik Sridharan.
\newblock Parameter-free online learning via model selection.
\newblock In \emph{Advances in Neural Information Processing Systems}, pages
  6020--6030, 2017.

\bibitem[Freund and Schapire(1997)]{freund1997decision}
Yoav Freund and Robert~E Schapire.
\newblock A decision-theoretic generalization of on-line learning and an
  application to boosting.
\newblock \emph{Journal of computer and system sciences}, 55\penalty0
  (1):\penalty0 119--139, 1997.

\bibitem[Gaillard et~al.(2014)Gaillard, Stoltz, and
  Van~Erven]{gaillard2014second}
Pierre Gaillard, Gilles Stoltz, and Tim Van~Erven.
\newblock A second-order bound with excess losses.
\newblock In \emph{Conference on Learning Theory}, pages 176--196, 2014.

\bibitem[Hazan and Kale(2010)]{hazan2010extracting}
Elad Hazan and Satyen Kale.
\newblock Extracting certainty from uncertainty: Regret bounded by variation in
  costs.
\newblock \emph{Machine learning}, 80\penalty0 (2-3):\penalty0 165--188, 2010.

\bibitem[Hazan et~al.(2007)Hazan, Agarwal, and Kale]{hazan2007logarithmic}
Elad Hazan, Amit Agarwal, and Satyen Kale.
\newblock Logarithmic regret algorithms for online convex optimization.
\newblock \emph{Machine Learning}, 69\penalty0 (2-3):\penalty0 169--192, 2007.

\bibitem[Koolen and Van~Erven(2015)]{koolen2015second}
Wouter~M Koolen and Tim Van~Erven.
\newblock Second-order quantile methods for experts and combinatorial games.
\newblock In \emph{Conference on Learning Theory}, pages 1155--1175, 2015.

\bibitem[Koolen et~al.(2014)Koolen, Van~Erven, and
  Gr{\"u}nwald]{koolen2014learning}
Wouter~M Koolen, Tim Van~Erven, and Peter Gr{\"u}nwald.
\newblock Learning the learning rate for prediction with expert advice.
\newblock \emph{Advances in neural information processing systems},
  27:\penalty0 2294--2302, 2014.

\bibitem[Koolen et~al.(2016)Koolen, Gr{\"u}nwald, and van
  Erven]{koolen2016combining}
Wouter~M Koolen, Peter Gr{\"u}nwald, and Tim van Erven.
\newblock Combining adversarial guarantees and stochastic fast rates in online
  learning.
\newblock \emph{Advances in Neural Information Processing Systems},
  29:\penalty0 4457--4465, 2016.

\bibitem[Koren and Livni(2017)]{koren2017affine}
Tomer Koren and Roi Livni.
\newblock Affine-invariant online optimization and the low-rank experts
  problem.
\newblock In \emph{Advances in Neural Information Processing Systems}, pages
  4747--4755, 2017.

\bibitem[Lu and Zhang(2019)]{lu2019adaptive}
Shiyin Lu and Lijun Zhang.
\newblock Adaptive and efficient algorithms for tracking the best expert.
\newblock \emph{arXiv preprint arXiv:1909.02187}, 2019.

\bibitem[Luo and Schapire(2015)]{luo2015achieving}
Haipeng Luo and Robert~E Schapire.
\newblock Achieving all with no parameters: Adanormalhedge.
\newblock In \emph{Conference on Learning Theory}, pages 1286--1304, 2015.

\bibitem[Mhammedi and Koolen(2020)]{mhammedi2020lipschitz}
Zakaria Mhammedi and Wouter~M Koolen.
\newblock Lipschitz and comparator-norm adaptivity in online learning.
\newblock \emph{Conference on Learning Theory}, 2020.

\bibitem[Mhammedi et~al.(2019)Mhammedi, Koolen, and
  Van~Erven]{mhammedi2019lipschitz}
Zakaria Mhammedi, Wouter~M Koolen, and Tim Van~Erven.
\newblock Lipschitz adaptivity with multiple learning rates in online learning.
\newblock \emph{Conference on Learning Theory}, 2019.

\bibitem[Rakhlin and Sridharan(2013{\natexlab{a}})]{rakhlin2013online}
Alexander Rakhlin and Karthik Sridharan.
\newblock Online learning with predictable sequences.
\newblock \emph{Conference on Learning Theory}, 2013{\natexlab{a}}.

\bibitem[Rakhlin and Sridharan(2013{\natexlab{b}})]{rakhlin2013optimization}
Alexander Rakhlin and Karthik Sridharan.
\newblock Optimization, learning, and games with predictable sequences.
\newblock \emph{Advances in Neural Information Processing Systems},
  26:\penalty0 3066--3074, 2013{\natexlab{b}}.

\bibitem[Sani et~al.(2014)Sani, Neu, and Lazaric]{sani2014exploiting}
Amir Sani, Gergely Neu, and Alessandro Lazaric.
\newblock Exploiting easy data in online optimization.
\newblock \emph{Advances in Neural Information Processing Systems},
  27:\penalty0 810--818, 2014.

\bibitem[Steinhardt and Liang(2014)]{steinhardt2014adaptivity}
Jacob Steinhardt and Percy Liang.
\newblock Adaptivity and optimism: An improved exponentiated gradient
  algorithm.
\newblock In \emph{International Conference on Machine Learning}, pages
  1593--1601, 2014.

\bibitem[Syrgkanis et~al.(2015)Syrgkanis, Agarwal, Luo, and
  Schapire]{syrgkanis2015fast}
Vasilis Syrgkanis, Alekh Agarwal, Haipeng Luo, and Robert~E Schapire.
\newblock Fast convergence of regularized learning in games.
\newblock \emph{Advances in Neural Information Processing Systems},
  28:\penalty0 2989--2997, 2015.

\bibitem[van Erven and Koolen(2016)]{van2016metagrad}
Tim van Erven and Wouter~M Koolen.
\newblock Metagrad: Multiple learning rates in online learning.
\newblock \emph{Advances in Neural Information Processing Systems}, 2016.

\bibitem[Wei and Luo(2018)]{wei2018more}
Chen-Yu Wei and Haipeng Luo.
\newblock More adaptive algorithms for adversarial bandits.
\newblock In \emph{Conference On Learning Theory}, pages 1263--1291, 2018.

\bibitem[Wei et~al.(2016)Wei, Hong, and Lu]{wei2016tracking}
Chen-Yu Wei, Yi-Te Hong, and Chi-Jen Lu.
\newblock Tracking the best expert in non-stationary stochastic environments.
\newblock \emph{Advances in neural information processing systems},
  29:\penalty0 3972--3980, 2016.

\bibitem[Wintenberger(2017)]{wintenberger2017optimal}
Olivier Wintenberger.
\newblock Optimal learning with bernstein online aggregation.
\newblock \emph{Machine Learning}, 106\penalty0 (1):\penalty0 119--141, 2017.

\end{thebibliography}
\newpage

\appendix

\section{Useful Lemmas Related to OMD}

\begin{lemma}\label{lem:omd}
	Define $w^\star=\argmin_{w\in\calK}\inner{w}{x} + D_{\psi}(w, w')$ for some compact convex set $\calK \subset \fR^d$, convex function $\psi$, an arbitrary point $x \in \fR^d$, and a point $w'\in \calK$. Then for any $u\in\calK$:
	\begin{align*}
		\inner{w^{\star}-u}{x} \leq D_{\psi}(u, w') - D_{\psi}(u, w^{\star}) - D_{\psi}(w^{\star}, w').
	\end{align*}
\end{lemma}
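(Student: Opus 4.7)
The plan is to use the standard first-order optimality characterization of the proximal step together with the three-point identity for Bregman divergences. First, since $w^{\star}$ minimizes the convex function $F(w) = \inner{w}{x} + D_{\psi}(w, w')$ over the compact convex set $\calK$, the first-order optimality condition gives, for every $u \in \calK$,
\[
\inner{\nabla F(w^{\star})}{u - w^{\star}} \geq 0,
\]
which upon computing $\nabla F(w) = x + \nabla \psi(w) - \nabla \psi(w')$ becomes
\[
\inner{x + \nabla \psi(w^{\star}) - \nabla \psi(w')}{u - w^{\star}} \geq 0.
\]
Rearranging yields $\inner{w^{\star} - u}{x} \leq \inner{\nabla \psi(w^{\star}) - \nabla \psi(w')}{u - w^{\star}}$.

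Second, I would invoke the well-known three-point identity for Bregman divergence, which says that for any $u, v, v'$,
\[
\inner{\nabla \psi(v) - \nabla \psi(v')}{u - v} = D_{\psi}(u, v') - D_{\psi}(u, v) - D_{\psi}(v, v').
\]
This identity follows directly from expanding each $D_{\psi}$ term using its definition and cancelling the $\psi(u), \psi(v), \psi(v')$ terms. Applying it with $v = w^{\star}$ and $v' = w'$ gives
\[
\inner{\nabla \psi(w^{\star}) - \nabla \psi(w')}{u - w^{\star}} = D_{\psi}(u, w') - D_{\psi}(u, w^{\star}) - D_{\psi}(w^{\star}, w'),
\]
and chaining this with the inequality from the first step yields the claim.

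There is no real obstacle here: the only subtlety is that $\psi$ needs to be differentiable on the relevant domain for the first-order condition and three-point identity to be stated in this clean form, which is the standard implicit regularity assumption in the OMD literature and is satisfied by all instantiations used in the paper (negative entropy, weighted negative entropy, quadratic forms). If $w^{\star}$ lies on the relative boundary of $\calK$, one works with the appropriate one-sided directional derivative, but the resulting inequality is identical. The entire argument is a two-line combination of optimality plus the Bregman identity, so I would present it in that order with the three-point identity either stated as a one-line computation or cited.
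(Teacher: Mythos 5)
Your proof is correct and matches the approach the paper gestures at: the paper's proof of \pref{lem:omd} is a pointer to \citep[Lemma~1]{wei2018more} described as ``direct calculations plus the first-order optimality condition of $w^\star$,'' which is precisely your two-step argument of first-order optimality combined with the Bregman three-point identity. You have simply written out what the paper cites.
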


\begin{proof}
This is shown for example in the proof of~\citep[Lemma~1]{wei2018more}, and is by direct calculations plus the first-order optimality condition of $w^\star$.
\end{proof}

\begin{lemma}
	\label{lem:oomd} 
	Let $w_t=\argmin_{w\in\calK}\inner{w}{m_t} + D_{\psi_t}(w, w'_t)$ and $w'_{t+1}=\argmin_{w\in\calK}\inner{w}{\ell_t} + D_{\psi_t}(w, w'_t)$
	 for some compact convex set $\calK \subset \fR^d$, convex function $\psi_t$,  arbitrary points $\ell_t, m_t \in \fR^d$, and a point $w_t'\in \calK$. 
	 Then, for any $u\in\calK$ we have
	\begin{align*}
		\inner{w_t - u}{\ell_t} \leq \inner{w_t - w'_{t+1}}{\ell_t - m_t} + D_{\psi_t}(u, w'_t) - D_{\psi_t}(u, w'_{t+1}) - D_{\psi_t}(w'_{t+1}, w_t) - D_{\psi_t}(w_t, w'_t). 
	\end{align*}
\end{lemma}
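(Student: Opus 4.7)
The plan is to split $\inner{w_t - u}{\ell_t}$ into three standard pieces and then invoke \pref{lem:omd} twice, once for each of the two OMD updates defining $w'_{t+1}$ and $w_t$. First I would write
\[
\inner{w_t - u}{\ell_t} = \inner{w_t - w'_{t+1}}{\ell_t - m_t} + \inner{w_t - w'_{t+1}}{m_t} + \inner{w'_{t+1} - u}{\ell_t},
\]
by adding and subtracting $w'_{t+1}$ inside the inner product and then splitting $\ell_t = (\ell_t - m_t) + m_t$ in the first piece. The ``innovation'' term $\inner{w_t - w'_{t+1}}{\ell_t - m_t}$ will appear as-is on the right-hand side of the target bound; the remaining two inner products need to be controlled by divergences.

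Next I would apply \pref{lem:omd} to each of those two inner products. For the update $w'_{t+1} = \argmin_{w\in\calK} \inner{w}{\ell_t} + D_{\psi_t}(w, w'_t)$ with competitor $u \in \calK$, \pref{lem:omd} yields
\[
\inner{w'_{t+1} - u}{\ell_t} \leq D_{\psi_t}(u, w'_t) - D_{\psi_t}(u, w'_{t+1}) - D_{\psi_t}(w'_{t+1}, w'_t).
\]
For the update $w_t = \argmin_{w\in\calK} \inner{w}{m_t} + D_{\psi_t}(w, w'_t)$, I would instead choose the competitor to be $w'_{t+1} \in \calK$, so that \pref{lem:omd} gives
\[
\inner{w_t - w'_{t+1}}{m_t} \leq D_{\psi_t}(w'_{t+1}, w'_t) - D_{\psi_t}(w'_{t+1}, w_t) - D_{\psi_t}(w_t, w'_t).
\]

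Finally I would sum the three pieces. The two copies of $D_{\psi_t}(w'_{t+1}, w'_t)$ (one with a $+$ sign from the $m_t$ bound, one with a $-$ sign from the $\ell_t$ bound) cancel exactly, and what remains is precisely the claimed inequality. There is no real obstacle beyond bookkeeping: the only non-mechanical choice is to apply \pref{lem:omd} to the $w_t$-update with competitor $w'_{t+1}$, which is the standard optimistic-OMD trick that ``links'' the two updates so that the single stability term $\inner{w_t - w'_{t+1}}{\ell_t - m_t}$ absorbs all the slack.
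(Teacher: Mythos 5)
Your proof is correct and follows essentially the same route as the paper's: both invoke \pref{lem:omd} twice with the same choices ($w^\star = w'_{t+1}$, competitor $u$ for the $\ell_t$-update; $w^\star = w_t$, competitor $w'_{t+1}$ for the $m_t$-update) and rely on the cancellation of $D_{\psi_t}(w'_{t+1}, w'_t)$. The only difference is cosmetic — you state the three-term decomposition up front, while the paper sums the two OMD inequalities and then rewrites the left-hand side — which is the same algebra in a different order.
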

\begin{proof}
	We apply \pref{lem:omd} with $w^\star=w_t, u=w'_{t+1}$ to obtain
	\begin{align*}
		\inner{w_t - w'_{t+1}}{m_t} &\leq D_{\psi_t}(w'_{t+1}, w'_t) - D_{\psi_t}(w'_{t+1}, w_t) - D_{\psi_t}(w_t, w'_t), 
	\end{align*}
	and then with $w^\star=w'_{t+1}$ to obtain:
	\begin{align*}
		\inner{w'_{t+1}-u}{\ell_t} &\leq D_{\psi_t}(u, w'_t) - D_{\psi_t}(u, w'_{t+1}) - D_{\psi_t}(w'_{t+1}, w'_t).
	\end{align*}
	Summing the two inequalities above, we have:
	\begin{align*}
		\inner{w_t-w'_{t+1}}{m_t} + \inner{w'_{t+1}-u}{\ell_t} 
		&\leq D_{\psi_t}(u, w'_t) - D_{\psi_t}(u, w'_{t+1}) - D_{\psi_t}(w'_{t+1}, w_t) - D_{\psi_t}(w_t, w'_t). 
	\end{align*}
	Also note that the left-hand side is equal to:
		\begin{align*}
		\inner{w_t-w'_{t+1}}{m_t} + \inner{w'_{t+1}-u}{\ell_t} &= \inner{w_t-w'_{t+1}}{m_t-\ell_t} + \inner{w_t-w'_{t+1}}{\ell_t} + \inner{w'_{t+1}-u}{\ell_t}\\
		&= \inner{w_t-w'_{t+1}}{m_t-\ell_t} + \inner{w_t-u}{\ell_t}.
	\end{align*}
	Combining and reorganizing terms, we get the desired result.
\end{proof}

\begin{lemma}
	\label{lem:stability}
	For any convex function $\psi$ defined on convex set $\calK \subset \fR^d$ and a point $x\in\fR^d$, define $F_x(w)=\inner{w}{x} + \psi(w)$
	and $w_x = \argmin_{w\in\calK} F_x(w)$.
	Suppose that for some $x, x' \in \fR^d$, there is a constant $c$ such that 
	for all $\xi$ on the segment connecting $w_x$ and $w_{x'}$, 
	$\nabla^2\psi(\xi)\mgeq c\nabla^2\psi(w_{x})$ holds (which means $\nabla^2\psi(\xi) - c\nabla^2\psi(w_{x})$ is positive semi-definite).
	Then, we have $\inner{w_x-w_{x'}}{x'-x}\geq 0$ and $\norm{w_x-w_{x'}}_{\nabla^{2}\psi(w_x)} \leq \frac{2}{c}\norm{x-x'}_{\nabla^{-2}\psi(w_x)}$.
\end{lemma}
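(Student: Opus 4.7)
The plan is to use the first-order optimality conditions of $w_x$ and $w_{x'}$ in tandem with the local strong convexity of $\psi$ along the segment between them. Since $w_x$ and $w_{x'}$ minimize $F_x$ and $F_{x'}$ respectively over $\calK$, we have
\[
\inner{x + \nabla\psi(w_x)}{w_{x'} - w_x} \geq 0, \qquad \inner{x' + \nabla\psi(w_{x'})}{w_x - w_{x'}} \geq 0.
\]
Adding these two inequalities and rearranging gives
\[
\inner{x' - x}{w_x - w_{x'}} \;\geq\; \inner{\nabla\psi(w_x) - \nabla\psi(w_{x'})}{w_x - w_{x'}}.
\]

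For the first claim, convexity of $\psi$ implies that the right-hand side is nonnegative (monotonicity of the gradient), so $\inner{w_x - w_{x'}}{x' - x} \geq 0$ immediately.

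For the second claim, I would lower bound the right-hand side more carefully using the Hessian assumption. By Taylor's theorem with integral remainder,
\[
\nabla\psi(w_x) - \nabla\psi(w_{x'}) = \paren{\int_0^1 \nabla^2\psi\big(w_{x'} + s(w_x - w_{x'})\big)\,ds}(w_x - w_{x'}),
\]
so that, using the hypothesis $\nabla^2\psi(\xi) \mgeq c\,\nabla^2\psi(w_x)$ for every $\xi$ on the segment,
\[
\inner{\nabla\psi(w_x) - \nabla\psi(w_{x'})}{w_x - w_{x'}} \;\geq\; c\,\norm{w_x - w_{x'}}_{\nabla^2\psi(w_x)}^2.
\]
Combining with the earlier inequality and applying Cauchy--Schwarz with respect to the pair of dual norms induced by $\nabla^2\psi(w_x)$ yields
\[
c\,\norm{w_x - w_{x'}}_{\nabla^2\psi(w_x)}^2 \;\leq\; \inner{x' - x}{w_x - w_{x'}} \;\leq\; \norm{x' - x}_{\nabla^{-2}\psi(w_x)}\,\norm{w_x - w_{x'}}_{\nabla^2\psi(w_x)},
\]
and dividing through by $\norm{w_x - w_{x'}}_{\nabla^2\psi(w_x)}$ (the degenerate case $w_x = w_{x'}$ being trivial) gives the advertised bound, in fact with a constant of $1/c$ rather than the looser $2/c$ stated.

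The only subtle point is justifying the Taylor expansion when $\psi$ is only defined on $\calK$ and $w_x, w_{x'}$ lie on the boundary; this is handled by the assumption that $\nabla^2\psi$ is well-defined on the entire segment between them, which is exactly what the hypothesis provides. Everything else is mechanical once the two optimality inequalities are written down, so I expect no real obstacle.
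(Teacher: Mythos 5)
Your proof is correct and takes a genuinely different route from the paper. The paper works at the level of \emph{function values}: it bounds $F_{x'}(w_x) - F_{x'}(w_{x'})$ from above by $\inner{w_x - w_{x'}}{x'-x}$ (using the optimality of $w_x$ for $F_x$) and from below by $\tfrac{c}{2}\norm{w_x - w_{x'}}_{\nabla^2\psi(w_x)}^2$ (using the optimality of $w_{x'}$ for $F_{x'}$ to drop the linear term, followed by a second-order Taylor expansion of $F_{x'}$ along the segment). The factor $\tfrac{1}{2}$ in the Taylor remainder is what produces the $\tfrac{2}{c}$ in the final bound. You instead work at the level of \emph{gradients}: you write down the first-order optimality (variational inequality) conditions at both $w_x$ and $w_{x'}$, add them to obtain $\inner{x'-x}{w_x - w_{x'}} \geq \inner{\nabla\psi(w_x) - \nabla\psi(w_{x'})}{w_x - w_{x'}}$, and then lower bound the right-hand side via the fundamental theorem of calculus applied to $\nabla\psi$ together with the Hessian hypothesis. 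Because the two optimality conditions are exploited symmetrically, no factor of $\tfrac{1}{2}$ appears, and you correctly end up with the tighter constant $\tfrac{1}{c}$, of which the stated $\tfrac{2}{c}$ is a harmless relaxation. The first claim $\inner{w_x - w_{x'}}{x'-x}\geq 0$ also falls out immediately in your approach from gradient monotonicity, whereas in the paper it follows from the same chain of function-value inequalities. Both approaches are valid; yours is a bit cleaner and gives a slightly sharper constant, while the paper's is perhaps more standard in the OMD literature where bounds are typically stated through Bregman divergences of $F$.
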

\begin{proof}
	Note that
	\begin{align*}
		&F_{x'}(w_x) - F_{x'}(w_{x'}) \\
		&= \inner{w_x-w_{x'}}{x'-x} + F_x(w_x) - F_x(w_{x'}) \tag{definition of $F$} \\
		&\leq \inner{w_x-w_{x'}}{x'-x} \tag{optimality of $w_x$} \\
		&\leq \norm{w_x-w_{x'}}_{\nabla^2\psi(w_x)}\norm{x'-x}_{\nabla^{-2}\psi(w_{x})} \tag{H\"{o}lder's inequality}.
	\end{align*}
	Using Taylor expansion, for some $\xi$ on the segment connecting $w_x$ and $w_{x'}$, we have
	\begin{align*}
		F_{x'}(w_x) - F_{x'}(w_{x'}) &= \inner{w_x-w'_x}{\nabla F_{x'}(w_{x'})} + \frac{1}{2}\norm{w_x-w_{x'}}_{\nabla^2\psi(\xi)}^2\\
		&\geq \frac{1}{2}\norm{w_x-w_{x'}}_{\nabla^2\psi(\xi)}^2 \tag{first-order optimality of $w_{x'}$} \\
		&\geq \frac{c}{2}\norm{w_x-w_{x'}}^2_{\nabla^2\psi(w_x)}. \tag{condition of the lemma}
	\end{align*}
	Combining we have, $\inner{w_x-w_{x'}}{x'-x}\geq F_{x'}(w_x)-F_{x'}(w_{x'})\geq c\norm{w_x-w_{x'}}^2_{\nabla^2\psi(w_x)}\geq 0$, and also
$\frac{c}{2}\norm{w_x-w_{x'}}^2_{\nabla^2\psi(w_x)}\leq \norm{w_x-w_{x'}}_{\nabla^2\psi(w_x)}\norm{x'-x}_{\nabla^{-2}\psi(w_{x})}$, which implies 
	\[
		\norm{w_x-w_{x'}}_{\nabla^2\psi(w_x)}\leq \frac{2}{c}\norm{x-x'}_{\nabla^{-2}\psi(w_x)}
	\]
	and finishes the proof.
\end{proof}

\begin{lemma}[Multiplicative Stability]\label{lem: stability for truncated one}
    Let $\Omega=\left\{w\in\Delta_d:~ w_i\geq b_i, \; \forall i \in[d]\right\}$ for some $b_i\in[0,1]$, $w' \in \Omega$ be such that $w'_i > 0$ for all $i \in [d]$, $ w = \argmin_{w\in\Omega}\left\{\langle w, \ell\rangle + D_{\psi}(w, w') \right\} $ where $\psi(w)=\sum_{i=1}^d \frac{1}{\eta_i} w_i \ln w_i$, $|\ell_i|\leq \cmax$, and $\eta_i \cmax\leq \frac{1}{32}$ for all $i$ and some $\cmax >0$.  Then $w_i\in [\frac{1}{\sqrt{2}}w_i', \sqrt{2}w_i']$. 
\end{lemma}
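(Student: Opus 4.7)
The plan is to analyze the KKT conditions of the constrained program defining $w$ and to show that the Lagrange multiplier associated with the simplex constraint has bounded magnitude; the multiplicative stability then drops out by a short case analysis. Let $\lambda\in\fR$ be the multiplier for $\sum_i w_i=1$ and $\mu_i\geq 0$ the multiplier for $w_i\geq b_i$. Since $\nabla_{w_i}D_\psi(w,w')=\tfrac{1}{\eta_i}\ln(w_i/w_i')$, stationarity yields $w_i=w_i'\exp(\eta_i(\lambda+\mu_i-\ell_i))$ together with complementary slackness $\mu_i(w_i-b_i)=0$. I will partition $[d]=A\cup B$ with $A=\{i:w_i>b_i\}$ (so $\mu_i=0$) and $B=\{i:w_i=b_i\}$.

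The heart of the argument is to prove $|\lambda|\leq\cmax$. For the upper bound, if $B\neq\emptyset$ then picking any $i\in B$, the constraint $\mu_i\geq 0$ combined with $b_i\leq w_i'$ gives $\lambda\leq\ell_i+\tfrac{1}{\eta_i}\ln(b_i/w_i')\leq\ell_i\leq\cmax$; if $B=\emptyset$, mass conservation $\sum_{i\in A}w_i=\sum_{i\in A}w_i'=1$ forces some $j$ with $w_j\leq w_j'$, so $\lambda\leq\ell_j\leq\cmax$. For the lower bound, the hypothesis $w'\in\Omega$ gives $w_i'\geq b_i$ for $i\in B$, hence
\[
\sum_{i\in A}w_i \;=\; 1-\sum_{i\in B}b_i \;\geq\; 1-\sum_{i\in B}w_i' \;=\; \sum_{i\in A}w_i'.
\]
If $A\neq\emptyset$, a weighted-average argument produces $j\in A$ with $w_j\geq w_j'$, yielding $\lambda\geq\ell_j\geq-\cmax$; the degenerate case $A=\emptyset$ forces $w=w'$ and the claim is trivial.

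Given $|\lambda|\leq\cmax$ and $\eta_i\cmax\leq\tfrac{1}{32}$, the conclusion splits on $A$ versus $B$. For $i\in A$, $|\eta_i(\lambda-\ell_i)|\leq 2\eta_i\cmax\leq\tfrac{1}{16}<\ln\sqrt{2}$, so $w_i/w_i'=\exp(\eta_i(\lambda-\ell_i))\in[e^{-1/16},e^{1/16}]\subset[1/\sqrt{2},\sqrt{2}]$. For $i\in B$, $w_i/w_i'=b_i/w_i'\leq 1\leq\sqrt{2}$ handles the upper inequality, while the identity $\ln(b_i/w_i')=\eta_i(\lambda+\mu_i-\ell_i)\geq\eta_i(\lambda-\ell_i)\geq-\tfrac{1}{16}$ gives $b_i/w_i'\geq e^{-1/16}>1/\sqrt{2}$, completing the bound.

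The main obstacle is the lower bound $\lambda\geq-\cmax$: a priori, if many constraints bind, one might fear that $\lambda$ must become very negative to shrink the non-binding coordinates enough to compensate for the boosted mass. The resolution is precisely the mass identity $\sum_{i\in A}w_i\geq\sum_{i\in A}w_i'$ displayed above, which holds because $w'\in\Omega$, and which guarantees at least one non-binding coordinate with $w_j/w_j'\geq 1$ no matter how many constraints are active.
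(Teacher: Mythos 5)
Your proof is correct, and while it starts from the same KKT foundation as the paper's argument, it takes a genuinely cleaner route through the middle. The paper bounds $\lambda$ indirectly: it shows $\lambda$ lies strictly between $\min_i(\ell_i-\mu_i)$ and $\max_i(\ell_i-\mu_i)$, then proves (via your ``Claim 2'' analogue, that $\mu_i\neq 0$ implies $\ell_i-\mu_i\geq\lambda$) that $\min_i(\ell_i-\mu_i)=\min_i\ell_i$, so the spread is at most $2\cmax$; the multiplicative bound then follows because both $\lambda$ and every $\ell_i-\mu_i$ live in that interval. You instead go after the sharper and more portable invariant $|\lambda|\leq\cmax$ directly, by partitioning coordinates into binding ($B$) and non-binding ($A$) sets and using the mass identity $\sum_{i\in A}w_i\geq\sum_{i\in A}w_i'$ (a consequence of $w'\in\Omega$) to extract a non-binding coordinate with $w_j\geq w_j'$, then finishing by treating $A$ and $B$ coordinates separately. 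The two proofs buy essentially the same thing, but yours isolates a cleaner intermediate fact and makes the role of the hypothesis $w'\in\Omega$ (used in both your lower bound on $\lambda$ and your bound $b_i\leq w_i'$) more explicit. One small point both proofs gloss over: if some $b_i=0$ and $w_i=0$, the stationarity relation degenerates; but this can only happen when $\Omega$ is the singleton $\{b\}$ and hence $w=w'$ trivially, so it does not affect correctness.
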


\begin{proof}
    Recall that $D_{\psi}(w,w')=\sum_i \frac{1}{\eta_i} \left(w_i \ln \frac{w_i}{w_i'} - w_i + w_i'\right)$. By the KKT condition of the optimization problem, we have for some $\lambda$ and $\mu_i\geq 0$,
    \begin{align*}
        \ell_i + \frac{1}{\eta_i}\ln\frac{w_i}{w_i'} - \lambda - \mu_i=0
    \end{align*}
    and $\mu_i\left(w_i-b_i\right)=0$ for all $i$. The above gives $w_i=w_i'\exp\left(\eta_i\left(-\ell_i+\lambda + \mu_i\right)\right)$.  
    We now separately discuss two cases.
    
    \paragraph{Case 1: $\min_i (\ell_i-\mu_i) \neq \max_i (\ell_i-\mu_i)$. } In this case, we claim that $\min_i (\ell_i-\mu_i) < \lambda < \max_i (\ell_i-\mu_i)$. We prove it by contradiction: 
    If $\lambda \geq \max_i (\ell_i-\mu_i)$, then 
 \[ \sum_{i}w_i = \sum_i w_i'\exp\left(\eta_i\left(-\ell_i+\lambda + \mu_i\right)\right) > \sum_i w_i' = 1 \]
    contradicting with $w\in\Delta_d$ (the strict inequality is because there exists some $j$ such that $\max_{i}(\ell_i-\mu_i) > (\ell_j - \mu_j)$ and $w_j' > 0$). We can derive a similar contradiction if $\lambda \leq \min_i (\ell_i-\mu_i)$. 
    Thus, we conclude $\min_i (\ell_i-\mu_i) < \lambda < \max_i (\ell_i-\mu_i)$.
    
    Our second claim is that for all $i$ with $\mu_i\neq 0$, $\ell_i-\mu_i\geq \lambda$. 
      Indeed, when $\mu_i\neq 0$, we have $b_i=w_i=w_i'\exp\left(\eta_i\left(-\ell_i+\lambda + \mu_i\right)\right)$. Clearly, $\exp\left(\eta_i\left(-\ell_i+\lambda + \mu_i\right)\right)\leq 1$ must hold; otherwise we have $w_i'<b_i$ which is a contradiction with $w' \in \Omega$. Therefore, $-\ell_i+\lambda+\mu_i\leq 0$.

Combining the above two claims, we see that $\min_i(\ell_i-\mu_i)$ must be equal to $\min_i\ell_i$; otherwise, we have $\min_i(\ell_i-\mu_i) < \min_i \ell_i$, which implies that there exists an $j$ such that $\min_i(\ell_i-\mu_i)=\ell_j-\mu_j$ and $\mu_j > 0$. By the first claim, $\lambda > \ell_j - \mu_j$, and this contradicts with the second claim. 

     Thus, $\max_i(\ell_i - \mu_i) - \min_i(\ell_i - \mu_i)=\max_i(\ell_i - \mu_i) - \min_i \ell_i \leq \max_i \ell_i - \min_i \ell_i \leq 2\cmax$ (the inequality is by $\mu_i\geq 0$).
     Since both $\lambda$ and $\ell_i - \mu_i$ are in the range $[\min_i(\ell_i - \mu_i), \max_i(\ell_i - \mu_i)]$, we have 
   $|-\ell_i + \lambda +\mu_i| \leq \max_i(\ell_i - \mu_i) - \min_i(\ell_i - \mu_i) \leq 2\cmax$. By the condition on $\eta_i$, we then have $w_i \in \left[\exp(-\frac{1}{16})w_i', \exp(\frac{1}{16})w_i'\right]\subset \left[\frac{1}{\sqrt{2}}w_i', \sqrt{2}w_i'\right]$. 

    \paragraph{Case 2: $\min_i (\ell_i-\mu_i) = \max_i (\ell_i-\mu_i)$. } In this case, it is clear that $\lambda=\ell_i-\mu_i$ must hold for all $i$ to make $w$ and $w'$ both distributions. Thus, $w_{t,i}=w_{t,i}'$ for all $i$. 
\end{proof}

\section{Omitted Details for \pref{sec:framework}}\label{app:framework}

In this section, we provide the omitted proofs for \pref{sec:framework}.	

\subsection{\pfref{lem:framework}}
\begin{proof}
     By \pref{lem:oomd}, we have (dropping one non-positive term)
     \begin{align}
		&\sumt\inner{w_t-u}{\ell_t+a_t} \nonumber \\
		&\leq \sumt \left(D_{\psi_t}(u, w'_t) - D_{\psi_t}(u, w'_{t+1})\right) + \sumt \rbr{\inner{w_t - w'_{t+1}}{\ell_t - m_t + a_t} - D_{\psi_t}(w'_{t+1}, w_t)}. \label{eq: regret decomp}
	\end{align}
    For the first term, we reorder it and use $D_{\psi_t}(u, v) = \sum_{i=1}^d\frac{1}{\eta_{t,i}}\kl(u_i, v_i)$: 
	\begin{align*}
		&\sumt \left(D_{\psi_t}(u, w'_t) - D_{\psi_t}(u, w'_{t+1})\right) 
		= D_{\psi_1}(u, w'_1) + \sum_{t=2}^{T} \left(D_{\psi_t}(u, w'_t) - D_{\psi_{t-1}}(u, w'_t)\right) - D_{\psi_T}(u, w_{T+1}') \\
		&\leq \sumi\frac{1}{\eta_{1,i}}\kl(u_i, w'_{1,i}) + \sum_{t=2}^T\sumi\rbr{\frac{1}{\eta_{t,i}} - \frac{1}{\eta_{t-1,i}} }\kl(u_i, w'_{t,i}). 
	\end{align*}
	
	For the second term, fix a particular $t$ and define $\wstar=\argmax_{w \in \fR^d_+}\inner{w_t-w}{\ell_t-m_t+a_t} - D_{\psi_t}(w, w_t)$.
	By the optimality of $\wstar$, we have: $\ell_t-m_t+a_t=\nabla\psi_t(w_t)-\nabla\psi_t(\wstar)$ and thus $\wstar_i=w_{t,i}e^{-\eta_{t,i}(\ell_{t,i}-m_{t,i}+a_{t,i})}$.
	Therefore, we have
	\begin{align*}
		&\inner{w_t - w'_{t+1}}{\ell_t - m_t + a_t} - D_{\psi_t}(w'_{t+1}, w_t)\\
		&\leq  \inner{w_t-\wstar}{\ell_t-m_t+a_t} - D_{\psi_t}(\wstar, w_t)\\
		&= \inner{w_t-\wstar}{\nabla\psi_t(w_t)-\nabla\psi_t(\wstar)} - D_{\psi_t}(\wstar, w_t)\\
		&= D_{\psi_t}(w_t, \wstar) = \sumi\frac{1}{\eta_{t,i}}\rbr{ w_{t,i}\ln\frac{w_{t,i}}{\wstar_i} - w_{t,i} + \wstar_i }\\
		&= \sumi \frac{w_{t,i}}{\eta_{t,i}}\rbr{ \eta_{t,i}(\ell_{t,i}-m_{t,i}+a_{t,i}) - 1 + e^{-\eta_{t,i}(\ell_{t,i}-m_{t,i}+a_{t,i})} }\\
		&\leq \sumi \eta_{t,i}w_{t,i}\rbr{\ell_{t,i}-m_{t,i}+a_{t,i}}^2,
	\end{align*}
	where in the last inequality we apply $e^{-x} - 1 + x \leq x^2$ for $x\geq -1$ and the condition of the lemma $\eta_{t,i}|\ell_{t,i}-m_{t,i}|\leq \frac{1}{32}$ such that $\eta_{t,i}|\ell_{t,i}-m_{t,i}+a_{t,i}|\leq \eta_{t,i}|\ell_{t,i}-m_{t,i}| + 32\eta_{t,i}^2(\ell_{t,i}-m_{t,i})^2 \leq \frac{1}{32} + \frac{32}{32^2}\leq \frac{1}{16}$.
	Using the definition of $a_t$ and the condition $\eta_{t,i}|\ell_{t,i}-m_{t,i}|\leq \frac{1}{32}$ again, we also continue with
	\begin{align*}
	\inner{w_t - w'_{t+1}}{\ell_t - m_t + a_t} - D_{\psi_t}(w'_{t+1}, w_t)
	&\leq \sumi \eta_{t,i}w_{t,i}\rbr{\ell_{t,i}-m_{t,i}+32\eta_{t,i}\rbr{\ell_{t,i}-m_{t,i}}^2}^2 \\
	&\leq 4\sumi \eta_{t,i}w_{t,i}\rbr{\ell_{t,i}-m_{t,i}}^2.
	\end{align*}

	To sum up, 
	combining everything, we have,
	\begin{align*}
	     &\sumt\inner{w_t-u}{\ell_t+a_t}\\
	     &\leq \sumi\frac{1}{\eta_{1,i}}\kl(u_i, w'_{1,i}) + \sum_{t=2}^T\sumi\rbr{\frac{1}{\eta_{t,i}} - \frac{1}{\eta_{t-1,i}} }\kl(u_i, w'_{t,i}) +  4\sumt\sumi \eta_{t,i}w_{t,i}(\ell_{t,i}-m_{t,i})^2. 
	\end{align*}
	Finally, moving $\sum_{t=1}^T \inner{w_t-u}{a_t}$ to the right-hand side of the inequality and using the definition of $a_t$ again finishes the proof. 
\end{proof}

\subsection{\pfref{thm:impossible_tuning}}
\begin{proof}
To apply \pref{lem:framework}, we notice that
the condition $32\eta_{t,i}|\ell_{t,i}-m_{t,i}|\leq 1$ of \pref{lem:framework} holds trivially by the definition of $\eta_{t,i}$.
Therefore, applying \pref{eq:general_bound} with $u = (1-\frac{1}{T})e_\istar + \frac{1}{T}w_1' \in \bigcap_{t=1}^T \Omega_t$, 
 we have:
	\begin{align}
	\reg(e_{\istar}) &= \reg(u) + \sumt\inner{u-e_{\istar}}{\ell_t} \notag\\
	&= \reg(u) + \frac{1}{T}\sumt\inner{w'_1-e_{\istar}}{\ell_t}  \notag\\
	&\leq \reg(u) + 2 \notag\\
	&\leq \sumi \frac{1}{\eta_{1,i}} \kl(u_i, w_{1,i}') + \sum_{t=2}^T \sumi \rbr{\frac{1}{\eta_{t,i}}-\frac{1}{\eta_{t-1,i}}}\kl(u_i, w_{t,i}') \notag\\
		&\qquad\qquad + 32 \sum_{t=1}^T\sumi \eta_{t,i} u_i (\ell_{t,i} - m_{t,i})^2 - 16  \sum_{t=1}^T\sumi \eta_{t,i}w_{t,i}(\ell_{t,i} - m_{t,i})^2 + 2. \label{eq:main_bound_of_analysis}
	\end{align}
	For the first term, note that $u_i\leq w'_{1,i}$ when $i\neq \istar$, and $\eta_{1,i}=\frac{1}{64}$. Thus,
	\begin{align*}
		\sumi\frac{1}{\eta_{1,i}}\kl(u_i, w'_{1,i}) &= \sumi \frac{1}{\eta_{1,i}}\rbr{ u_i\ln\frac{u_i}{w'_{1,i}} - u_i + w'_{1,i} } \leq 64 u_{\istar}\ln\frac{u_{\istar}}{w'_{1,\istar}} + \sumi 64 \cdot \frac{1}{d}=\order(\ln d). \\
	\end{align*}
	For the second term, we proceed as 
	\begin{align*}
		&\sum_{t=2}^T \sumi \rbr{\frac{1}{\eta_{t,i}}-\frac{1}{\eta_{t-1,i}}}\kl(u_i, w_{t,i}')\\
		&= \sum_{t=2}^T\sumi\rbr{\frac{1}{\eta_{t,i}}-\frac{1}{\eta_{t-1,i}}}\rbr{ u_{i}\ln\frac{u_{i}}{w'_{t,i}} - u_i + w_{t,i}'} \\
		&\leq \sum_{t=2}^T\rbr{\frac{1}{\eta_{t,\istar}}-\frac{1}{\eta_{t-1,\istar}}}\rbr{ u_{\istar}\ln\frac{u_{\istar}}{w'_{t,\istar}} } + \sum_{t=2}^T \sumi \rbr{\frac{1}{\eta_{t,i}}-\frac{1}{\eta_{t-1,i}}} w'_{t,i}  \tag{$u_i=\frac{1}{dT}\leq w_{t,i}'$ for $i\neq \istar$} \\
		&= \sum_{t=2}^T\rbr{\frac{1}{\eta_{t,\istar}}-\frac{1}{\eta_{t-1,\istar}}}\rbr{ u_{\istar}\ln\frac{u_{\istar}}{w'_{t,\istar}} } + \sum_{t=2}^T \sumi \frac{\frac{1}{\eta_{t,i}^2}-\frac{1}{\eta_{t-1,i}^2}}{\frac{1}{\eta_{t,i}}+\frac{1}{\eta_{t-1,i}}}w'_{t,i}  \\
		&\leq \frac{\ln(dT)}{\eta_{T,\istar}} + \sum_{t=2}^T\sumi\eta_{t-1,i}\rbr{ \frac{1}{\eta_{t,i}^2} - \frac{1}{\eta_{t-1,i}^2} }w'_{t,i} \tag{$u_{\istar}\ln\frac{u_{\istar}}{w'_{t,\istar}} \leq \ln(dT)$} \\
		&\leq 64\ln(dT)+ \sqrt{\ln (dT)\sumt(\ell_{t,\istar}-m_{t,\istar})^2} + \sum_{t=2}^T\sumi \frac{1}{\ln (dT)} \eta_{t-1,i}w'_{t,i}(\ell_{t-1, i} - m_{t-1, i})^2 \tag{by the definition of $\eta_{t,i}$}\\
		&\leq 64\ln(dT) + \sqrt{\ln (dT)\sumt(\ell_{t,\istar}-m_{t,\istar})^2} + \sum_{t=2}^T\sumi 2 \eta_{t-1,i}w_{t-1,i}(\ell_{t-1, i} - m_{t-1, i})^2, 
	\end{align*}
	where the last step uses the fact $w_{t,i}' \leq 2w_{t-1,i}$ according to the multiplicative stability lemma \pref{lem: stability for truncated one} (which asserts $w_{t,i}'\in [\frac{1}{\sqrt{2}}w_{t-1,i}', \sqrt{2}w_{t-1,i}']$ and $w_{t-1,i} \in [\frac{1}{\sqrt{2}}w_{t-1,i}', \sqrt{2}w_{t-1,i}']$).
	Note that the last term is then canceled by the fourth term of \pref{eq:main_bound_of_analysis}.
	For the third term of \pref{eq:main_bound_of_analysis}, we have
	\begin{align*}
		&\sum_{t=1}^T\sumi \eta_{t,i} u_i (\ell_{t,i} - m_{t,i})^2\\
		&\leq \sumt \eta_{t,\istar}(\ell_{t,\istar}-m_{t,\istar})^2 + \frac{1}{dT}\sumt \sumi \eta_{t,i}(\ell_{t,i}-m_{t,i})^2 \\
		&\leq \sumt \sqrt{\frac{\ln(dT)}{\sum_{s<t} (\ell_{s,\istar}-m_{s,\istar})^2}} \cdot (\ell_{t,\istar} - m_{t,\istar})^2 + 1 \\
		&\leq \order\left(\sqrt{\ln(dT)\sumt(\ell_{t, \istar} - m_{t, \istar})^2} + 1\right).
	\end{align*}
	Combining everything then proves
	\begin{align*}
		\reg(e_\istar) = \bigO{\ln(dT) + \sqrt{\ln(dT)\sumt(\ell_{t,\istar} - m_{t,\istar})^2 } }.
	\end{align*}
\end{proof}

\subsection{\pfref{thm:fast_rates}}
The proof largely follows \citep{koolen2016combining}, and thus for simplicity we only
show it for the expectation results.
We start from the regret bound:
\begin{align*}
	\reg(e_{\istar}) = \bigO{\ln(dT) + \sqrt{\ln(dT)\sumt\inner{w_t-e_{\istar}}{\ell_t-m'_t}^2}}.
\end{align*}
For the first result, by the condition, we have
\begin{align*}
	\E[\reg(e_{\istar})] = \E\sbr{\sumt\inner{w_t-e_{\istar}}{\ell_t}} =  \E\sbr{\sumt\sumi w_{t,i}\E_t[\ell_{t,i}-\ell_{t,\istar}]} \geq \Delta\E\sbr{\sumt\sum_{i\neq \istar} w_{t,i}}.
\end{align*}
On the other hand,
\begin{align*}
	\sumt\inner{w_t-e_{\istar}}{\ell_t-m'_t}^2 &= \sumt\rbr{ \sum_{i\neq \istar} w_{t,i}(\ell_{t,i}-m'_{t,i}-\ell_{t,\istar}+m'_{t,\istar}) }^2\\
	&\leq \sumt \sum_{i\neq \istar} w_{t,i}\rbr{ \ell_{t,i}-m'_{t,i}-\ell_{t,\istar}+m'_{t,\istar} }^2 \leq 16\sumt\sum_{i\neq \istar} w_{t,i}.
\end{align*}
Therefore,
\[
	\Delta\E\sbr{\sumt\sum_{i\neq \istar} w_{t,i}} \leq \E[\reg(e_{\istar})] = \bigO{ \sqrt{\ln(dT)\E\sbr{\sumt\sum_{i\neq\istar} w_{t,i}}} + \ln(dT) }.
\] 
Treating $\E\sbr{\sumt\sum_{i\neq \istar} w_{t,i}}$ as a variable and solving the inequality, we get $\E\sbr{\sumt\sum_{i\neq \istar} w_{t,i}}=\bigO{\ln(dT)/\Delta^2}$. Plugging this back we get $\E[\reg(e_{\istar})]=\bigO{\frac{\ln(dT)}{\Delta}}$.

For the second result, first note that by \pref{lem: stability for truncated one} we have
$w_{t,i}\in[\frac{1}{\sqrt{2}}w'_{t,i}, \sqrt{2}w'_{t,i}]$, $w'_{t,i}\in[\frac{1}{\sqrt{2}}w'_{t-1,i},  \sqrt{2}w'_{t-1,i}]$ and $w_{t-1,i}\in[\frac{1}{ \sqrt{2}}w'_{t-1,i},  \sqrt{2}w'_{t-1,i}]$, which implies $w_{t,i}\in [\frac{1}{2\sqrt{2}}w_{t-1,i}, 2\sqrt{2}w_{t-1,i}]$.
We then proceed as follows:
\begin{align*}
	\E[\reg(e_{\istar})] &
	= \bigO{ \E\sbr{\sqrt{\ln(dT)\sumt \inner{w_t-e_{\istar}}{\ell_t-\ell_{t-1}}^2 } } }\\
	&= \bigO{ \E\sbr{\sqrt{\ln(dT)\sumt \inner{w_t-e_{\istar}}{\ell_t}^2 + \inner{w_t-e_{\istar}}{\ell_{t-1}}^2 } } }\\
	&= \bigO{ \E\sbr{\sqrt{\ln(dT)\sumt \rbr{ \sumi w_{t,i}(\ell_{t,i}-\ell_{t,\istar}) }^2 +  \rbr{\sumi w_{t,i}(\ell_{t-1,i}-\ell_{t-1,\istar})}^2 } } }\\
	&= \bigO{ \E\sbr{\sqrt{\ln(dT)\sumt \rbr{ \sumi w_{t,i}(\ell_{t,i}-\ell_{t,\istar}) }^2 }} } \tag{$w_{t,i} \leq 2\sqrt{2}w_{t-1,i}$} \\
	&= \bigO{\sqrt{\ln(dT)\E\sbr{\sumt\sumi w_{t,i}(\ell_{t,i}-\ell_{t,\istar})^2 }}} \tag{Jensen's and 
Cauchy-Schwarz inequality}\\
	&= \bigO{ \sqrt{\frac{\ln(dT)}{\Delta}\E\sbr{\sumt\sumi w_{t,i}\E_t[\ell_{t,i}-\ell_{t,\istar}]^\kappa }} } \tag{by the assumption}\\
	&= \bigO{ \sqrt{ \frac{\ln(dT)}{\Delta}\E\sbr{\sumt\sumi w_{t,i}^{1-\kappa}\E_t[w_{t,i}(\ell_{t,i}-\ell_{t,\istar})]^\kappa }} }\\
	&= \bigO{ \sqrt{\frac{\ln(dT)}{\Delta}\E\sbr{\rbr{\sumt\sumi w_{t,i}}^{1-\kappa}\rbr{ \sumt\sumi\E_t[w_{t,i}(\ell_{t,i}-\ell_{t,\istar})] }^\kappa} } } \tag{H\"older's inequality}\\
	&= \bigO{ \sqrt{ \frac{\ln(dT)}{\Delta}T^{1-\kappa}\E[\reg(e_{\istar})]^\kappa } } \tag{Jensen's inequality}.
\end{align*}
Therefore, $\E[\reg(e_{\istar})]^{1-\kappa/2}=\bigO{\sqrt{\frac{\ln(dT)}{\Delta}T^{1-\kappa}}}$, and $\E[\reg(e_{\istar})]=\bigO{ \rbr{\frac{\ln(dT)}{\Delta}}^{\frac{1}{2-\kappa}}T^{\frac{1-\kappa}{2-\kappa}} }$.

\subsection{\pfref{thm:master}}
\begin{proof}
The regret $\reg(u) = \sumt \inner{w_t - u}{\ell_t}$ can be decomposed as the regret of base algorithm $\kstar$: $\reg_{\Astar}(u) = \sumt \inners{w_t^\kstar - u}{\ell_t}$, plus the regret of the master to this base algorithm:
$\sumt \inner{p_t - e_\kstar}{g_t} = \sumt \inners{w_t - w_t^\kstar}{\ell_t}$ (by the definition of $w_t$ and $g_t$).
It thus remains to apply the regret guarantee of \alg from \pref{lem:framework} (with $u$ in that lemma set to $e_\kstar$),
since the conditions of the lemma hold by the fact $g_{t,k}-h_{t,k} = \inner{w_t^k}{\ell_t-m_t}$.
The first term in \pref{eq:general_bound} becomes $\frac{1}{\etastar}\ln\frac{1}{p_{1, \kstar}'} + \sum_k \frac{p_{1, k}'}{\eta_k}$, which is $\frac{1}{\etastar}\ln\frac{\sum_k \eta_k^2}{\etastar^2} + \frac{\sum_k \eta_k}{\sum_k \eta_k^2}$ by the definition of $p_1'$.
The second term in \pref{eq:general_bound} is simply zero since the learning rate stays the same over time.
The third term equals $32\etastar \sumt \inners{w_t^\kstar}{\ell_t - m_t}^2$.
Dropping the last negative term then finishes the proof.
\end{proof}

\section{Omitted Details for \pref{sec:app_expert}}\label{app:expert}

\subsection{\pfref{thm:KL}}
\begin{proof}
By the construction, for any $u$ there exists $\kstar$ such that $\eta_{\kstar}\leq \min\left\{\frac{1}{64}, \sqrt{\frac{\KL(u, \pi)+\ln V(u)}{V(u)}} \right\} \leq 2\eta_{\kstar}$.
	Therefore, from \pref{eq:KL_base_alg_bound} we have:
	\begin{align}
		&\reg_{\calA_{\kstar}}(u) \leq \frac{\KL(u, \pi)}{2\eta_{\kstar}} + 64\eta_{\kstar}\sumt\sumi u_i(\ell_{t,i}-m_{t,i})^2 - 32\eta_{\kstar}\sumt\sumi w_{t,i}^{\kstar}(\ell_{t,i} - m_{t,i})^2 \notag\\
		&\leq \bigO{\KL(u, \pi) + \sqrt{(\KL(u, \pi)+\ln V(u))V(u)} } - 32\eta_{\kstar}\sumt\sumi w_{t,i}^{\kstar}(\ell_{t,i} - m_{t,i})^2. \label{eq:KL_base_alg_bound2}
	\end{align}
	Further note that $32\eta_k\left|  \inner{w^k_t}{\ell_t-m_t} \right|\leq 32\eta_k\norm{\ell_t-m_t}_{\infty}\leq 1$.
	Hence, we apply \pref{thm:master} with $\sum_k\eta_k=\Theta(1), \sum_k\eta_k^2=\Theta(1), \frac{\sum_k\eta_k^2}{\eta^2_\kstar}=\bigO{ 1/\eta^2_{\kstar} }=\bigO{ \frac{V(u)}{\KL(u, \pi)+\ln V(u)} }=\bigO{V(u)}$,
	and cancel the last term in \pref{eq:master_bound} by the last negative term in \pref{eq:KL_base_alg_bound2} via Cauchy-Schwarz inequality, arriving at
	\begin{align*}
		\REG(u) &\leq \bigO{\KL(u, \pi) + \sqrt{(\KL(u, \pi)+\ln V(u))V(u)} } + \frac{1}{\eta_{\kstar}}\ln\frac{\sum_k\eta^2_k}{\eta^2_\kstar}\\
		&= \bigO{ \KL(u, \pi) + \ln V(u) + \sqrt{(\KL(u, \pi)+\ln V(u))V(u)} }
	\end{align*}
	and finishing the proof.
\end{proof}

\subsection{\pfref{thm:MS}}
\begin{proof}
	By the definition of $\calS$, it is clear that $|\calS|$ is at most $\bigO{d\log_2T}$ so our algorithm is efficient.
	For any $\istar\in[d]$, there exists a $\kstar$ such that $\eta_{\kstar} \leq \min\left\{\frac{1}{128c_{\istar}}, \sqrt{\frac{\Gamma_{\istar}}{\sumt(\ell_{t,\istar}-m_{t,\istar})^2}}\right\} \leq 2\eta_{\kstar}$.
	Moreover, $32\cdot \eta_{t,i}| \ell_{t,i}-m_{t,i} |\leq 128\eta_kc_i\leq 1$ for all $i \in \calZ(k)$.
	Hence, the conditions of \pref{lem:framework} hold, and with $|\calZ(k)|\leq d$ we have
	\begin{align*}
		\reg_{\calA_{\kstar}}(e_{\istar}) &\leq \bigO{\frac{\ln d}{\eta_{\kstar}}} + 64\eta_{\kstar}\sumt(\ell_{t,\istar}-m_{t,\istar})^2 - 32\eta_{\kstar}\sumt\sumi w_{t,i}(\ell_{t,i}-m_{t,i})^2\\
		&= \bigO{ c_{\istar}\Gamma_{\istar} + \sqrt{\Gamma_{\istar}\sumt(\ell_{t,\istar}-m_{t,\istar})^2} } - 32\eta_{\kstar}\sumt\sumi w_{t,i}(\ell_{t,i}-m_{t,i})^2.
	\end{align*}
	Next, also note that the conditions of \pref{thm:master} hold since \[32\eta_k |\inner{w_t^k}{\ell_t - m_t}| \leq 64\eta_k \max_{i\in\calZ(k)}c_i \leq 1.\]
	Thus, with the last negative term from the bound for $\reg_{\calA_{\kstar}}(e_{\istar})$ above canceling the last term of  \pref{eq:master_bound}, and $\sum_k\eta_k=\Theta(1/\cmin)$, $\sum_k\eta_k^2=\Theta(1/\cmin^2)$, and $\frac{\sum_k\eta_k^2}{\eta_{\kstar}^2}=\bigO{\frac{c_{\istar}^2T}{\cmin^2}}$, we obtain:
	\begin{align*}
		\reg(e_{\istar}) &= \bigO{ c_{\istar}\Gamma_{\istar} + \sqrt{\Gamma_{\istar}\sumt(\ell_{t,\istar}-m_{t,\istar})^2} + \frac{1}{\eta_{\kstar}}\Gamma_{\istar} + \cmin }\\
		&= \bigO{ c_{\istar}\Gamma_{\istar} + \sqrt{\Gamma_{\istar}\sumt(\ell_{t,\istar}-m_{t,\istar})^2} },
	\end{align*}
	which completes the proof.
\end{proof}

\subsection{\pfref{thm:switch}}
\begin{proof}
     We first focus on a specific $j$ and bound the regret within $\calI_j$. The regret in this interval can be decomposed as 
     \begin{align*}
          \sum_{t\in\calI_j} \inner{w_t - u_j}{\ell_t} 
          &= \sum_{t\in\calI_j} \inner{w_t - w_t^{r}}{\ell_t} + \sum_{t\in\calI_j} \inner{w_t^{r} - u_j}{\ell_t} \\
          &= \sum_{t\in\calI_j} \inner{p_t - e_{r}}{g_t} + \sum_{t\in\calI_j} \inner{w_t^{r} - u_j}{\ell_t}\\
          &\leq \sum_{t\in\calI_j} \inner{p_t - \overline{e}_{r}}{g_t} + \sum_{t\in\calI_j} \inner{w_t^{r} - u_j}{\ell_t} + \order(1)    \tag{define $\overline{e}_r=(1-\frac{1}{T})e_r + \frac{\one}{\lceil\log_2 T\rceil T}$}
     \end{align*}
     for any $r\in [\lgT]$. 
     
     The term $\sum_{t\in\calI_j} \inner{w_t^{r} - u_j}{\ell_t}$ corresponds to the regret of the $r$-th base algorithm in the interval $\calI_j$. Let $s_j$ be the first time index in $\calI_j$, and recall that the $r$-th expert is an \alg with a fixed learning rate $2\eta_r$, and a feasible set $\Omega_t=\{w\in\Delta_d: w_i\geq \frac{1}{dT}\}$. To upper bound it, we follow the exact same arguments as in the proof of \pref{lem:framework}, except for replacing the summation range $[1, T]$ with $\calI_j$. 
     This leads to:
	\begin{align*}
		&\sum_{t\in\calI_j}\inner{w_t^r-u_j}{\ell_t} \\
		&\leq \frac{1}{2\eta_r} \sumi \kl(u_{j,i}, w_{s_j, i}^{r\prime})
		 +  32\sum_{t\in\calI_j}\sumi 2 \eta_r u_{j,i}(\ell_{t,i}-m_{t,i})^2 - 16\sum_{t\in\calI_j}\sumi 2\eta_r w^r_{t,i}(\ell_{t,i}-m_{t,i})^2 \\
		 &= \frac{1}{2\eta_r}\sumi u_{j,i}\ln\frac{u_{j,i}}{w_{s_j,i}^{r\prime}} +  32\sum_{t\in\calI_j}\sumi 2 \eta_r u_{j,i}(\ell_{t,i}-m_{t,i})^2 - 16\sum_{t\in\calI_j}\sumi 2\eta_r w^r_{t,i}(\ell_{t,i}-m_{t,i})^2 \\
		 &\leq  \frac{1}{2\eta_r}\ln(dT)	      +  32\sum_{t\in\calI_j}\sumi 2 \eta_r u_{j,i}(\ell_{t,i}-m_{t,i})^2 - 16\sum_{t\in\calI_j}\sumi 2\eta_r w^r_{t,i}(\ell_{t,i}-m_{t,i})^2.
	\end{align*}

	Next, we deal with $\sum_{t\in\calI_j} \inner{p_t - \overline{e}_{r}}{g_t}$. Recall that \master uses a regularizer $\psi(p)=\sum_{k=1}^{\lgT} \frac{1}{\eta_k}p_k \ln p_k$. Again, similarly to the proof of \pref{lem:framework}, considering the regret only in $\calI_j$ and dropping the negative term, we have
	\begin{align*}
	     \sum_{t\in\calI_j} \inner{p_t - \overline{e}_{r}}{g_t} 
	     &\leq \sum_{t\in\calI_j} \left( D_{\psi}(\overline{e}_r, p_t') -  D_{\psi}(\overline{e}_r, p_{t+1}')\right) + 32\sum_{t\in\calI_j}\sum_{k=1}^{\lgT}\eta_{k}\overline{e}_{r,k}(g_{t,k}-h_{t,k})^2  
	     \\
	     &\leq D_{\psi}(\overline{e}_r, p_{s_j}') -  D_{\psi}(\overline{e}_r, p_{s_{j+1}}') + 32\eta_{r}\sum_{t\in\calI_j}\inner{w^r_t}{\ell_t-m_t}^2  + \order(1),
	\end{align*}
	where $s_{j+1}$ is defined as $T+1$ if $j$ is the last interval.
	We further deal with the first term above:  
	\begin{align*}
	     &D_{\psi}(\overline{e}_r, p_{s_j}') -  D_{\psi}(\overline{e}_r, p_{s_{j+1}}') \\
	     &= \sum_{k=1}^{\lgT}\frac{1}{\eta_k}\left( \overline{e}_{r,k}\ln \frac{p_{s_{j+1},k}'}{p_{s_j,k}'} + p_{s_j,k}' - p_{s_{j+1},k}' \right) \\
	     &\leq \frac{\ln(\lgT T)}{\eta_r} + \sum_{k=1}^{\lgT}\frac{1}{\eta_k}\left( p_{s_j,k}' - p_{s_{j+1},k}' \right) + \order(\ln(dT)). 
	\end{align*}
	Combining all bounds above, we get that for any $r\in\lgT$: 
	\begin{align*}
	      &\sum_{t\in\calI_j} \inner{w_t - u_j}{\ell_t} \\
	      &\leq  \frac{1}{2\eta_r} \ln(dT)
		 +  32\sum_{t\in\calI_j}\sumi 2 \eta_r u_{j,i}(\ell_{t,i}-m_{t,i})^2 - 16\sum_{t\in\calI_j}\sumi 2\eta_r w^r_{t,i}(\ell_{t,i}-m_{t,i})^2 \\
		 &\qquad \qquad +\frac{\ln(\lgT T)}{\eta_r} + \sum_{k=1}^{\lgT}\frac{1}{\eta_k}\left( p_{s_j,k}' - p_{s_{j+1},k}' \right) +  32\eta_{r}\sum_{t\in\calI_j}\inner{w^r_t}{\ell_t-m_t}^2 + \order(\ln(dT))\\
		 &\leq \order\left(\frac{\ln(dT)}{\eta_r} + \eta_r \sum_{t\in\calI_j}\sumi u_{j,i}(\ell_{t,i}-m_{t,i})^2 \right) + \order(\ln(dT)) + \sum_{k=1}^{\lgT}\frac{1}{\eta_k}\left( p_{s_j,k}' - p_{s_{j+1},k}' \right)
	\end{align*}
	where we use Jenson's inequality: $\inner{w^r_t}{\ell_t-m_t}^2\leq \sumi w^r_{t,i}(\ell_{t,i}-m_{t,i})^2$.  Specifically, applying the above bound with the $r$ such that 
	\begin{align*}
	     \eta_{r}\leq \min\left\{\frac{1}{64}, \sqrt{\frac{\ln(dT)}{\sum_{t\in\calI_j} \sum_{i=1}^d u_{j,i}(\ell_{t,i}-m_{t,i})^2}} \right\} \leq 2\eta_{r}, 
	\end{align*}
	we get 
	\begin{align}
	    \sum_{t\in\calI_j} \inner{w_t - u_j}{\ell_t} = \order\left(\sqrt{\ln(dT) \sum_{t\in\calI_j} \sum_{i=1}^d u_{j,i}(\ell_{t,i}-m_{t,i})^2} + \ln(dT)\right) +  \sum_{k=1}^{\lgT}\frac{1}{\eta_k}\left( p_{s_j,k}' - p_{s_{j+1},k}' \right) \label{eq:interval_regret_telescoping}. 
	\end{align}
	Finally, summing the above bound over $j=1,2,\ldots, S$ and telescoping, we get 
	\begin{align*}
	     \sum_{j=1}^S \sum_{t\in\calI_j} \inner{w_t - u_j}{\ell_t} 
	     &=  \order\left(\sum_{j=1}^S \sqrt{\ln(dT) \sum_{t\in\calI_j} \sum_{i=1}^d u_{j,i}(\ell_{t,i}-m_{t,i})^2} + S\ln(dT)\right) +  \sum_{k=1}^{\lgT}\frac{ p_{1,k}'}{\eta_k} \\
	     &=  \order\left(\sum_{j=1}^S \sqrt{\ln(dT) \sum_{t\in\calI_j} \sum_{i=1}^d u_{j,i}(\ell_{t,i}-m_{t,i})^2} + S\ln(dT)\right), 
	\end{align*}
	finishing the proof. 
\end{proof}

Note that importantly, the last term in \pref{eq:interval_regret_telescoping} only disappears (mostly) after summed over all intervals.
As mentioned, getting an interval regret bound like \pref{eq:interval_regret_telescoping} but without the last term is impossible, proven in the next section.

\subsection{Impossible results for interval regret}\label{app:impossible_interval}
\begin{theorem}
     For a two-expert problem with loss range $[-1, 1]$, it is impossible to achieve the following regret bound for all interval $\calI\subseteq [1, T]$ and all comparators $i\in\{1,2\}$ simultaneously: 
     \begin{align*}
          \sum_{t\in\calI} \inner{p_t-e_i}{\ell_t} = \tilde{\order}\left(\sqrt{\sum_{t\in\calI} |\ell_{t,i}|} +1\right). 
     \end{align*}
\end{theorem}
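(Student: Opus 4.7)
The plan is to derive a contradiction by showing that the proposed interval bound forces two incompatible behaviors on any algorithm. The key ingredient is an adaptive adversary that, at each round $t$, sets $\ell_{t,j}=1$ for $j=\argmax_i p_{t,i}$ and $\ell_{t,3-j}=0$ (putting the unit loss on whichever expert the algorithm currently favors, breaking ties consistently). Writing $W=\{t: p_{t,1}\ge 1/2\}$, on $W$ the loss vector is $(1,0)$ with expert $2$ serving as a zero-loss comparator, while on $W^c$ it is $(0,1)$ with expert $1$ as a zero-loss comparator.

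First I would observe that on any maximal contiguous run $\calI \subseteq W$ the purported bound against expert $2$ forces $\sum_{t\in\calI} p_{t,1} = \tilO{1}$, and since $p_{t,1}\ge 1/2$ on $W$ this gives $|\calI|=\tilO{1}$; symmetrically for $W^c$. So $W$ and $W^c$ must alternate on an $\tilO{1}$ timescale, and $p_{t,1}$ must cross $1/2$ every few rounds. Next, applying the purported bound on all of $[1,T]$ against both experts and adding the resulting inequalities (using $|W|+|W^c|=T$ and the pointwise signs of $p_{t,1}-1/2$ on $W$ vs.\ $W^c$) gives $\sum_{t=1}^{T}|p_{t,1}-1/2|=\tilO{\sqrt{T}}$, i.e.\ on average $|p_{t,1}-1/2|=\tilO{1/\sqrt{T}}$.

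To close the argument I would consider a hybrid adversary that plays the self-confounding strategy above on $[1,T/2]$ and then switches to the non-adaptive $\ell_t=(1,0)$ on $[T/2+1,T]$. Just before the switch the previous analysis pins $p_{T/2+1,1}$ within $\tilO{1/\sqrt{T}}$ of $1/2$, yet the interval bound on $[T/2+1,T]$ against expert $2$ demands $\sum_{t=T/2+1}^T p_{t,1}=\tilO{1}$, which would require $p_{t,1}$ to collapse to roughly $1/T$ within $\tilO{1}$ rounds after the switch. The averaged small-step behavior from the earlier phase prevents this: extending the $L_1$ control to a bound on per-round update magnitudes yields $p_{T/2+k,1}\ge 1/2-k\cdot \tilO{1/\sqrt{T}}$, and summing gives $\sum_{t=T/2+1}^T p_{t,1}=\tilde{\Omega}(\sqrt{T})$, contradicting the $\tilO{1}$ bound. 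The main obstacle is precisely turning the $L_1$ control into a history-wise Lipschitz statement for arbitrary (possibly non-Markovian) algorithms; I expect the cleanest route is an indistinguishability/coupling argument that produces two loss histories agreeing on the first phase and diverging afterwards, and uses the interval bounds on both to simultaneously constrain the algorithm's post-switch updates.
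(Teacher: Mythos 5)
Your first-phase calculations are legitimate: with the self-confounding adversary you correctly derive that contiguous runs of $W$ and $W^c$ must have length $\tilO{1}$, and combining the full-horizon bounds against both experts indeed gives $\sum_t|p_{t,1}-1/2|=\tilO{\sqrt T}$. But these are only constraints forced \emph{on} a hypothetical successful algorithm; they do not yet produce a contradiction, and the step that is supposed to close the argument has a genuine hole that you yourself flag. There is no hope of ``turning the $L_1$ control into a history-wise Lipschitz statement'': an adversary algorithm is an arbitrary function of the observed history, and with fully deterministic losses a switch at $t=T/2$ is perfectly detectable (the algorithm knows the round index and the exact distinguishing signature of the post-switch stream, e.g.\ seeing $(1,0)$ more than $\tilO{1}$ times in a row, which never occurs in your phase 1). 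Such an algorithm can collapse $p_{t,1}$ to $\approx 1/T$ in a single round after detecting the switch, killing the lower bound $\sum_{t>T/2}p_{t,1}=\tilde\Omega(\sqrt T)$ you need. The ``$L_1$ control'' on past averages says nothing about one-step movements of an adaptive algorithm in the future.

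The indistinguishability/coupling route you gesture at is the right idea, and is essentially what the paper does, but making it work requires two moves you do not have. First, the switch must be information-theoretically hidden, which is impossible with a deterministic adversary: the paper makes the losses of expert 2 i.i.d.\ $\pm 1$ with a tiny bias $\epsilon=T^{-1/5}$ and uses Pinsker/KL to show that on a window of length $L=T^{3/10}$ the algorithm cannot distinguish bias $-\epsilon$ from bias $+\epsilon$. Second, the paper makes expert 1's loss \emph{identically zero}, so that the alleged interval bound against expert 1 reduces to $\tilO{1}$ on \emph{every} interval with no coupling needed, and then selects the switch window adaptively as the $L$-block on which the algorithm loads least weight on expert 1 (rather than a fixed $T/2$). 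That selection, combined with the full-horizon regret bound against expert 2 (which forces $\sum_t p_{t,1}$ to be small) and the TV bound on the block, is what yields the clean two-sided contradiction. In short, your construction is a materially different and, as written, incomplete route; the missing ingredient is precisely a stochastic adversary with a KL-controlled switch, which cannot be bootstrapped from your deterministic $L_1$ estimate.
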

\begin{proof}
     Consider an envinronment where the losses of Expert 1 is a deterministic value $\ell_{t,1}=0$, and the losses of Expert 2 are i.i.d. chosen in each round according to the following: 
     \begin{align*}
           \ell_{t,2} = \begin{cases}
                 1   &\text{with probability\ } \frac{1}{2} - \epsilon \\
                 -1  &\text{with probability\ } \frac{1}{2} + \epsilon
           \end{cases}
     \end{align*}
     where $\epsilon=T^{-\frac{1}{5}}$. We assume that $\epsilon\leq \frac{1}{4}$ (which is equivalent to assuming $T\geq 4^5$). For simplicity, we call this distribution $\calD$. 
     Note that the expected loss of Expert 2 is $-2\epsilon$, smaller than that of Expert 1. Therefore, in this environment, the expected regret of the learner during $[1, T]$ would be 
     \begin{align*}
         \E[\reg^{[1,T]}(e_2)] = 2\epsilon \E\left[\sum_{t=1}^T p_{t,1}\right]. 
     \end{align*}
     Define $L=T^{\frac{3}{10}}$, and divide the whole horizon into $\frac{T}{L}=T^{\frac{7}{10}}$ intervals. Denote them as $\calI_k=\{(k-1)L+1, \ldots, kL\}$ for $k=1,2,\ldots, \frac{T}{L}$. Let 
     \begin{align*}
           k^\star = \argmin_k \E\left[\sum_{t\in\calI_k} p_{t,1}\right]. 
     \end{align*}
     That is, $k^\star$ is the interval where the learner would put least weight on Expert 1 in expectation. We then create another environment, where the loss of Expert 2 is same as the previous environment in interval $1, 2, \ldots, k^\star-1$, but change to the following starting from interval $k^\star$: 
     \begin{align*}
           \ell_{t,2} = \begin{cases}
                 1   &\text{with probability\ } \frac{1}{2} + \epsilon \\
                 -1  &\text{with probability\ } \frac{1}{2} - \epsilon
           \end{cases}
     \end{align*} 
     We call this distribution $\calD'$. 
     In this alternative environment, starting from interval $k^\star$, the best expert becomes Expert 1, and the expected interval regret of the learner is 
     \begin{align}
          \E'[\reg^{\calI_{k^\star}}(e_1)] = 2\epsilon \E'\left[\sum_{t\in\calI_{k^\star}} p_{t,2}\right]=2\epsilon L - 2\epsilon \E'\left[\sum_{t\in\calI_{k^\star}} p_{t,1}\right]. \label{eq: tmp100}
     \end{align}
     where we use $\E'[\cdot]$ to denote the expectation under this alternative environment. 
     
     Below we denote the probability measure under the two environments as $\calP$ and $\calP'$ respectively. Since $p_{t,1}$ is a function of $\{\ell_{\tau}\}_{\tau=1}^{t-1}$, by standard arguments, 
     \begin{align*}
          & \E'\left[\sum_{t\in\calI_{k^\star}} p_{t,1}\right]  - \E\left[\sum_{t\in\calI_{k^\star}} p_{t,1} \right]  \\
          &\leq L\left\|\calP(\{\ell_\tau\}_{\tau=1,\ldots, k^\star L}) -  \calP'(\{\ell_\tau\}_{\tau=1,\ldots, k^\star L}) \right\|_{\text{TV}}  \tag{$\|\cdot\|_{\text{TV}}$ is the total variation}\\
          &\leq \frac{L}{2}\sqrt{\KL(\calP(\{\ell_\tau\}_{\tau=1,\ldots, k^\star L}),~ \calP'(\{\ell_\tau\}_{\tau=1,\ldots, k^\star L}))}    \tag{Pinsker's inequality} \\
          &= \frac{L}{2}\sqrt{L\KL(\calD,~ \calD')} \\
          &= \frac{L^{\frac{3}{2}}}{2}\sqrt{\left(\frac{1}{2}+\epsilon\right) \ln \frac{\frac{1}{2}+\epsilon}{\frac{1}{2}-\epsilon} + \left(\frac{1}{2}-\epsilon\right)\ln \frac{\frac{1}{2}-\epsilon}{\frac{1}{2}+\epsilon}} \\
          &\leq \frac{L^{\frac{3}{2}}}{2}\sqrt{2\epsilon \ln \frac{\frac{1}{2}+\epsilon}{\frac{1}{2}-\epsilon}} \leq \frac{L^{\frac{3}{2}}}{2}\sqrt{\frac{4\epsilon^2}{\frac{1}{2}-\epsilon}}\leq 2L^{\frac{3}{2}}\epsilon, 
     \end{align*}
     where we use $\ln(1+\alpha)\leq \alpha$ and $\epsilon\leq \frac{1}{4}$. Notice that $\frac{L}{T}\frac{1}{2\epsilon}\E[\reg^{[1,T]}(e_2)]= \frac{L}{T}\E\left[\sum_{t=1}^T p_{t,1}\right]\geq  \E\left[\sum_{t\in\calI_{k^\star}} p_{t,1}\right]$ by the definition of $k^\star$, and $\E'\left[ \sum_{t\in\calI_{k^\star}} p_{t,1} \right] = L - \frac{\E'[\reg^{\calI_{k^\star}}(e_1)]}{2\epsilon}$ by \pref{eq: tmp100}. Using them in the above inequality, we get 
     \begin{align*}
          L - \frac{\E'[\reg^{\calI_{k^\star}}(e_1)]}{2\epsilon} - \frac{L}{2\epsilon T}\E[\reg^{[1,T]}(e_2)] \leq 2L^{\frac{3}{2}}\epsilon.    
     \end{align*}
     Using the values we choose, this is equivalent to 
     \begin{align*}
          T^{\frac{3}{10}} - \frac{T^{\frac{2}{10}}}{2}\E'[\reg^{\calI_{k^\star}}(e_1)] - \frac{1}{2T^{\frac{5}{10}}} \E[\reg^{[1,T]}(e_2)]  \leq 2T^{\frac{1}{4}}. 
     \end{align*}
     When $T$ is large enough, we see that either $\E[\reg^{[1,T]}(e_2)] \geq \Omega(T^{\frac{8}{10}})$ or $\E'[\reg^{\calI_{k^\star}}(e_1)] \geq \Omega(T^{\frac{1}{10}})$. However, the desired bound $\sqrt{\sum_{t\in\calI}|\ell_{t,i}|}$ is $\order(\sqrt{T})$ and $\order(1)$ in the two cases respectively. One of them must be violated, thus the desired bound is impossible.
\end{proof}

\subsection{Omitted details for \pref{subsec:unknown_range}}\label{app:unknown_range}

Since some of the results in this section will be used later for OLO as well, we use $\dnorm{\cdot}$ to denote $L_\infty$ norm in the context of an expert problem and $L_2$ norm in the context of an OLO problem.

We apply a variant of the techniques introduced in \cite{cutkosky2019artificial} to deal with unknown loss range.
We start with an initial guess $B_0$ on the range of $\max_t\dnorm{\ell_t-m_t}$.
Denote by $B_t=\max_{0\leq s\leq t}\dnorm{\ell_s-m_s}$ the range of predicted error up to episode $t$, and $B=B_T$. 
We feed the following truncated loss to the algorithm in each episode:
\begin{equation}
	\bar{\ell}_t=m_t + \frac{B_{t-1}}{B_t}(\ell_t-m_t).\label{eq:truncated_loss}
\end{equation}
Note that $\dnorm{\bar{\ell}_t-m_t}\leq B_{t-1}$.
Thus, the truncated loss allows the learner to assume that the range of predicted error in episode $t$ is known at the beginning of this episode.
Doing so already gives an algorithm that can deal with unknown loss range when $B_T/B_0$ is not too big.
To further deal with arbitrary ratio $B_t/B_0$, we also incorporate a restarting scheme which is a simplified version of that in \cite{mhammedi2019lipschitz}.
The restarting scheme makes sure the learning rate can always be properly tuned and replace the potential $\ln(B_T/B_0)$ dependency by $\ln T$.
We summarize ideas above as a new master algorithm in \pref{alg:unknown_range}, which requires an expert set generator $\calE$ as input.
The expert set generator $\calE$ is a function that maps any initial guess $B_0$ to a set of (learning rate, base algorithm) pairs $\calE(B_0)$.

To obtain data-dependent bound in expert problem with unknown range, it suffices to run \pref{alg:unknown_range} with the following expert set generator:
\begin{equation}
\begin{split}\label{eq:expert_unknown_range}
	\URbase(B_0) &= \Big\{ (\eta_k, \calA_k): \forall k =1, \ldots, N, \eta_k = \tfrac{1}{32B_0\cdot 2^{k}}, \text{$\calA_k$ is \alg with $w_1' = \pi$,} \\
	&\qquad\qquad \text{$\Omega_t = \Delta_d$, and $\eta_{t,i}=2\eta_k$ for all $t$ and $i$} \Big\},
\end{split}
\end{equation}
and $\Lambda_t=\Delta_{\calS_{\text{\rm UR}}(t)}$, where $\calS_{\text{\rm UR}}(t)=\{k: \tfrac{1}{32B_0\cdot 2^{k}}\leq\frac{1}{64B_{t-1}}\}$, and $N=\lceil\log_2 (2T^2) \rceil$.

\setcounter{AlgoLine}{0}
\begin{algorithm}[t]
    \caption{\master with unknown loss range}
    \label{alg:unknown_range}
    \textbf{Input:} An expert set generator $\calE$, initial scale $B_0$.
    	
	\textbf{Initialize:} $\tilB=B_0$, $\calA$ as an instance of \pref{alg:master} with input $\calE(\tilB)$.
	
	\For{$t=1,\ldots, T$}{
		Obtain decision $w_t$ from $\calA$, play $w_t$.
	
		Receive loss $\ell_t$, and feed $\bar{\ell}_t=m_t + \frac{B_{t-1}}{B_t}(\ell_t-m_t)$ to $\calA$, where $B_t=\max_{0\leq s\leq t}\dnorm{\ell_s-m_s}$.
		
		\If{$B_t/\tilB > T$}{
			$\tilB = B_t$.
		
			Reset $\calA$ as a new instance of \pref{alg:master} with input $\calE(\tilB)$.
		}
	}
\end{algorithm}

%

\begin{proof}[of \pref{thm:unknown_range}]
	Define $\bar{V}(u) = \max\cbr{3, \sumt\sumi u_i(\bar{\ell}_{t,i}-m_{t,i})^2}$.
	We first show that the desired bound holds when there is no restart before episode $T$, that is, $\frac{B_{T-1}}{B_0}\leq T$.
	In this case, $\frac{1}{\max\{64, T\}B_{T-1}}\geq\frac{1}{32B_0\cdot 2^N}$.
	Hence, there exists $k_\star\in\URbase(B_0)$ such that
	\[
	 	\eta_{k_\star}\leq \min\left\{\frac{1}{64B_{T-1}}, \sqrt{\frac{\KL(u, \pi) + \ln T}{\bar{V}(u)}}\right\} \leq 2\eta_{k_\star}.
	 \]
	The conditions of \pref{lem:framework} hold since $32\cdot 2\eta_{k_\star}\norm{\bar{\ell}_t - m_t}_{\infty}\leq 64\eta_{k_\star} B_{T-1} \leq 1$ for any $t\leq T$. We thus have
	\begin{align*}
		&\sumt\inner{w^{k_\star}_t - u}{\bar{\ell}_t} = \frac{\KL(u, \pi)}{2\eta_{k_\star}} + 64\eta_{k_\star}\sumt\sum_{i=1}^du_i(\bar{\ell}_{t, i}-m_{t, i})^2 - 32\eta_{k_\star}\sumt\sum_{i=1}^dw^{\kstar}_{t, i}(\bar{\ell}_{t, i}-m_{t, i})^2\\
		&= \bigO{\sqrt{ (\KL(u, \pi) + \ln T)\bar{V}(u) } + B\KL(u, \pi) } - 32\eta_{k_\star}\sumt\sumi w^{\kstar}_{t, i}(\bar{\ell}_{t, i}-m_{t, i})^2.
	\end{align*}
	Note that $k_\star\in\calS_{\text{\rm UR}}(T)$, and for any $k\in\calS_{\text{\rm UR}}(t)$,
	\begin{align*}
		32\eta_k\left| \inner{w^k_t}{\bar{\ell}_t-m_t} \right| \leq 32\eta_k\norm{\bar{\ell_t}-m_t}_{\infty} \leq 1.
	\end{align*}
	Hence, the conditions of \pref{thm:master} also hold, and with $\sum_k\eta_k = \Theta(\frac{1}{B_0}), \sum_k\eta^2_k=\Theta(\frac{1}{B_0^2})$, $\frac{\sum_k\eta_k^2}{\eta^2_{k_\star}}=\bigO{(\eta_1/\eta_{k_\star})^2} = \bigO{\ln T}$ by $2^N=\bigo{T^2}$ and $\inner{w^{\kstar}_t}{\bar{\ell}_t-m_t}^2\leq \sumi w^{\kstar}_{t, i}(\bar{\ell}_{t, i}-m_{t, i})^2$, we have:
	\begin{align*}
		\sumt\inner{w_t - u}{\bar{\ell_t}} &\leq \sumt\inner{w^{k_\star}_t - u}{\bar{\ell}_t} + \bigO{\frac{1}{\eta_{k_\star}}\ln T + B_0} + 32\eta_{k_\star}\sumt\inner{w^{k_\star}_t}{\bar{\ell}_t - m_t}^2\\
		&\leq \bigO{ \sqrt{ (\KL(u, \pi) + \ln T)\bar{V}(u) } } + B(\KL(u, \pi) + \ln T).
	\end{align*}
	Moreover, since $\ell_t-\bar{\ell}_t=\frac{B_t-B_{t-1}}{B_t}(\ell_t-m_t)$, the difference between the regret measured with $\ell_t$ and that with $\bar{\ell}_t$ is
	\begin{align*}
		\sumt\inner{w_t-u}{\ell_t - \bar{\ell}_t} &\leq 2\sumt\norm{\ell_t-\bar{\ell_t}}_{\infty} \leq 2\sumt\frac{B_t-B_{t-1}}{B_t}\norm{\ell_t-m_t}_{\infty}\\
		&\leq 2\sumt (B_t - B_{t-1}) \leq 2B.
	\end{align*}
	Therefore, noticing $\bar{V}(u)\leq V(u)$, we prove the desired result when there is no restart:
	\begin{align*}
		\sumt\inner{w_t - u}{\ell_t} &= \sumt\inner{w_t-u}{\bar{\ell}_t} + \sumt\inner{w_t-u}{\ell_t-\bar{\ell}_t}\\
		&= \bigO{ \sqrt{ (\KL(u, \pi) + \ln T)\bar{V}(u) }  + B(\KL(u, \pi) + \ln T) }.
	\end{align*}
	Next, we show that the desired bound also holds when there is at least one restart before episode $T$.
	Denote by $\tau_2$ the episode of the last restart, and by $\tau_1$ the episode of the second last restart ($\tau_1=0$ if the algorithm only restarts once).
	We consider the regret in the following three intervals: $[1, \tau_1], (\tau_1, \tau_2], (\tau_2, T]$.
	Denote $y_t=\norm{\ell_t-m_t}_{\infty}$.
	For regret in $[1, \tau_1]$, we have:
	\begin{align}
		\label{eq:first restart interval}
		\reg^{[1, \tau_1]}(u) \leq \sum_{t=1}^{\tau_1}y_t = B_{\tau_1}\sum_{t=1}^{\tau_1}\frac{y_t}{B_{\tau_1}} \leq B_{\tau_1}T < B_{\tau_1}\frac{B_{\tau_2}}{B_{\tau_1}} \leq B_T,
	\end{align}
	where we apply $T<B_{\tau_2}/B_{\tau_1}$ due to the restart condition.
	Within intervals $(\tau_1, \tau_2]$ and $(\tau_2, T]$, we have $B_{\tau_2-1}/B_{\tau_1}\leq T, B_{T-1}/B_{\tau_2} \leq T$. Thus, by the regret guarantee with restart only at the end of an interval,
	\begin{align*}
		\reg^{(\tau_1, \tau_2]}(u) &= \bigO{B_{\tau_2}(\KL(u, \pi)+\ln T ) + \sqrt{(\KL(u, \pi)+\ln T) V^{(\tau_1, \tau_2]}(u)}}\\
		\reg^{(\tau_2, T]}(u) &= \bigO{B_T(\KL(u, \pi)+\ln T) + \sqrt{(\KL(u, \pi)+\ln T) V^{(\tau_2, T]}(u)}},
	\end{align*}
	where $V^{\calI}(u) = \max\cbr{3, \sum_{t\in\calI}\sumi u_i(\ell_{t,i}-m_{t,i})^2}$.
	Summing all three regret bounds together and applying the Cauchy-Schwarz inequality, we get the desired result.
\end{proof}

\section{Omitted Details for \pref{sec:app_OLO}}\label{app:OLO}


In this section, when presenting the base algorithms, we sometimes use $\Omega$ as its decision set, which should be seen as a subset of $\calK$ (thus its size is bounded by $D$ as well) and will be set appropriately by the master.

\subsection{Combining Online Newton Step}\label{app:ONS}



We first introduce a variant of the ONS algorithm (\pref{alg:ONS}) and present its regret guarantee.
To make the algorithm general enough to deal with unknown loss range in \pref{app:unconstrained},
we consider a slightly more general setup where the algorithm receives a \textit{range hint} $z_t$ at the beginning of round $t$, which is guaranteed to satisfy $\norm{\ell_t-m_t} \leq z_t$.
For this section and the result of \pref{thm:ONS}, it suffices to set $z_t=1$ for all $t$.
The guarantee of this ONS variant is as follows.

\setcounter{AlgoLine}{0}
\begin{algorithm}[t]
    \caption{A Variant of Online Newton Step}
    \label{alg:ONS}
    \textbf{Parameters:} learning rate $\eta>0$, $w'_1=\vec{0}$.
    
    \textbf{Define:} $c_t(w) = \inner{w}{\ell_t}+16\eta\inner{w}{\ell_t - m_t}^2$ and
    $\nabla_t =\nabla c_t(w_t)=\ell_t + 32\eta\inner{w_t}{\ell_t-m_t}(\ell_t-m_t)$.

    \For{$t=1$,\ldots, T}{
		Receive prediction $m_t$ and range hint $z_t$.
    
    		Update $w_t = \argmin_{w\in\Omega}\left\{\inner{w}{m_t} + D_{\psi_{t}}(w,w_t')\right\}$ where $\psi_t(w) = \frac{1}{2}\norm{w}_{A_t}^2$ and
      \begin{align*}
      A_t &=  \eta\rbr{4z_1^2\cdot I + \sum_{s=1}^{t-1} (\nabla_s-m_s)(\nabla_s-m_s)^\top + 4z_t^2\cdot I }.
    \end{align*}
    
    		Receive $\ell_t$.
    
        Update $w_{t+1}' = \argmin_{w\in\Omega}\left\{\inner{w}{\nabla_t} + D_{\psi_t}(w, w_t')\right\}$.
    }
\end{algorithm}

\begin{lemma}\label{lem:ONS}
	Suppose $\norm{\ell_t-m_t}_2\leq z_t$, $z_t$ is non-decreasing in $t$, and $64\eta Dz_T \leq 1$. Then \pref{alg:ONS} ensures for any $u\in\Omega$ (with $r$ being the rank of $\calL_T =\sumt(\ell_t-m_t)(\ell_t-m_t)^\top$)
    \begin{align*}
        &\sum_{t=1}^T \inner{w_t-u}{\ell_t}\\
        &\leq \bigO{\frac{r\ln(Tz_T/z_1)}{\eta} + z_1\norm{u}_2 + D(z_T-z_1) + \eta\sum_{t=1}^T \inner{u}{\ell_t-m_t}^2 } - 11\eta\sum_{t=1}^T \inner{w_t}{\ell_t-m_t}^2.
    \end{align*}
\end{lemma}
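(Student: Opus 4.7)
The plan is a classical ONS argument routed through the optimistic-OMD lemma \pref{lem:oomd}, using the quadratic correction built into $c_t$ to extract the required $-11\eta\sum_t\inner{w_t}{\ell_t-m_t}^2$ term in the final bound.

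First, I would pass from the $\ell_t$-regret to a surrogate regret under $c_t$. Since $c_t$ is a convex quadratic with Hessian $32\eta(\ell_t-m_t)(\ell_t-m_t)^\top$, a direct expansion of $c_t$ yields the identity
\[
\inner{w_t-u}{\ell_t} = c_t(w_t) - c_t(u) + 16\eta\inner{u}{\ell_t-m_t}^2 - 16\eta\inner{w_t}{\ell_t-m_t}^2,
\]
and convexity of $c_t$ gives $c_t(w_t)-c_t(u)\leq \inner{\nabla_t}{w_t-u}$. The two quadratic terms already match the positive $\eta\sum_t\inner{u}{\ell_t-m_t}^2$ piece of the target bound and leave a surplus of $5\eta\inner{w_t}{\ell_t-m_t}^2$ of negative budget (only $-11\eta$ must survive), which will be spent absorbing residual $w_t$-variance coming from later steps.

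Next, I would apply \pref{lem:oomd} with regularizer $\psi_t(w)=\tfrac{1}{2}\norm{w}_{A_t}^2$ and cancel the per-round stability contribution $D_{\psi_t}(w_{t+1}',w_t)$ by AM-GM on $\inner{w_t-w_{t+1}'}{\nabla_t-m_t}$. Summing and telescoping the time-varying Bregman divergences gives
\[
\sum_t\inner{\nabla_t}{w_t-u} \leq \tfrac{1}{2}\sum_t\norm{\nabla_t-m_t}_{A_t^{-1}}^2 + \tfrac{1}{2}\norm{u}_{A_1}^2 + \tfrac{1}{2}\sum_{t\geq 2}\norm{u-w_t'}_{A_t-A_{t-1}}^2.
\]
Because $64\eta Dz_T\leq 1$, the factor $1 + 32\eta\inner{w_t}{\ell_t-m_t}$ lies in $[1/2,3/2]$, so $\norm{\nabla_t-m_t}\leq \tfrac{3}{2}z_t$; combined with $A_t\succeq 4\eta z_t^2 I$, the standard elliptical-potential/log-det bound gives $\sum_t\norm{\nabla_t-m_t}_{A_t^{-1}}^2 = O(r\ln(Tz_T/z_1)/\eta)$, using that at most $r$ eigenvalues grow. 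The initial term is $\tfrac{1}{2}\norm{u}_{A_1}^2 = 4\eta z_1^2\norm{u}^2 \leq 4\eta z_1^2 D\norm{u} \leq z_1\norm{u}/16$ by the tuning. Splitting $A_t-A_{t-1} = \eta(\nabla_{t-1}-m_{t-1})(\nabla_{t-1}-m_{t-1})^\top + 4\eta(z_t^2-z_{t-1}^2)I$, the identity part contributes at most $8\eta D^2(z_T^2-z_1^2) = O(D(z_T-z_1))$, and for the rank-one part I would use $\inner{u-w_t'}{v}^2\leq 2\inner{u}{v}^2 + 2\inner{w_t'}{v}^2$ with $v=\nabla_{t-1}-m_{t-1}$: the $u$-summand is absorbed into $O(\eta)\sum_t\inner{u}{\ell_t-m_t}^2$ via the factor-$3/2$ relation between $\nabla_t-m_t$ and $\ell_t-m_t$.

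The main obstacle is the leftover $\eta\sum_t\inner{w_t'}{\ell_{t-1}-m_{t-1}}^2$ from the $w_t'$-summand above. I would transfer $w_t'$ to $w_{t-1}$ using stability of the OMD update, specifically by reinstating the dropped negative terms $-D_{\psi_{t-1}}(w_t',w_{t-1}')$ and $-D_{\psi_{t-1}}(w_{t-1},w_{t-1}')$ from \pref{lem:oomd} (or invoking \pref{lem:stability}), showing $\inner{w_t'}{\ell_{t-1}-m_{t-1}}^2 \leq 2\inner{w_{t-1}}{\ell_{t-1}-m_{t-1}}^2 + (\text{absorbable lower order})$. Choosing constants so that the total positive $w_{t-1}$-contribution does not exceed $5\eta\sum_t\inner{w_t}{\ell_t-m_t}^2$ then leaves the claimed $-11\eta\sum_t\inner{w_t}{\ell_t-m_t}^2$ in the final bound.
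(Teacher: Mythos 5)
Your proposal is correct and follows essentially the same argument as the paper: pass to the surrogate loss $c_t$ via convexity and the exact quadratic identity, apply the optimistic-OMD lemma with the time-varying regularizer $\psi_t=\tfrac12\norm{\cdot}_{A_t}^2$, control the stability term by the elliptical potential/log-determinant bound (using that $\nabla_t-m_t$ is a scalar multiple, in $[\tfrac12,\tfrac32]$, of $\ell_t-m_t$), and absorb the telescoping $\norm{u-w_t'}_{A_t-A_{t-1}}^2$ remainder into the $\pm\eta\inner{\cdot}{\ell_t-m_t}^2$ quadratics. The only place you route slightly differently is in that last absorption: the paper writes $u-w_{t+1}'=(u-w_t)+(w_t-w_{t+1}')$ so that the $w_t$-variance appears directly at the matching round $t$ and $(w_t-w_{t+1}')$ is already controlled by the stability term; you split $u-w_t'$ as $u$ plus $w_t'$ and therefore incur an extra bridging step from $w_t'$ back to $w_{t-1}$. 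Your plan for that bridge (via \pref{lem:stability} with constant Hessian $A_{t-1}$, or by reinstating the dropped negative Bregman terms from \pref{lem:oomd}) is valid and the constants still close to give the $-11\eta$ coefficient, so the argument goes through; it just costs one more estimate that the paper's pivot around $w_t$ avoids.
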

\begin{proof}
	By \pref{lem:oomd} and \pref{lem:olo stabability}, we have:
	\begin{align*}
		\sum_{t=1}^T\inner{w_t - u}{\nabla_t} &\leq \sum_{t=1}^T\inner{w_t - w'_{t+1}}{\nabla_t - m_t} + D_{\psi_t}(u, w'_t) - D_{\psi_t}(u, w'_{t+1})\\
		&\leq 2\sum_{t=1}^T\norm{\nabla_t -m_t}_{A_t^{-1}}^2 + D_{\psi_1}(u, w'_1) + \sum_{t=1}^{T-1}D_{\psi_{t+1}}(u, w'_{t+1}) - D_{\psi_t}(u, w'_{t+1})\\
		&\leq \bigO{\frac{r\ln(Tz_T/z_1)}{\eta} + \eta z_1^2\norm{u}_2^2} + \sum_{t=1}^{T-1}D_{\psi_{t+1}}(u, w'_{t+1}) - D_{\psi_t}(u, w'_{t+1}).
	\end{align*}
	Note that $\eta z_1^2\norm{u}_2^2\leq \eta Dz_T z_1\norm{u}_2=\bigO{z_1\norm{u}_2}$. Moreover,
	\begin{align*}
		&\sum_{t=1}^{T-1}D_{\psi_{t+1}}(u, w'_{t+1}) - D_{\psi_t}(u, w'_{t+1})\\
		&\leq \frac{\eta}{2}\sum_{t=1}^{T-1}\inner{u-w'_{t+1}}{\nabla_t - m_t}^2 + \bigO{\eta D^2\sum_{t=1}^{T-1}(z_{t+1}^2 - z_t^2)} \\
		&\leq \eta\sum_{t=1}^{T-1}\inner{u-w_t}{\nabla_t-m_t}^2 + \eta\sum_{t=1}^{T-1}\inner{w_t-w'_{t+1}}{\nabla_t-m_t}^2 + \bigO{\eta D^2 z_T\sum_{t=1}^{T-1}(z_{t+1} - z_t)}\\
		&\leq 2\eta\sumt\inner{u}{\nabla_t-m_t}^2 + 2\eta\sumt\inner{w_t}{\nabla_t-m_t}^2 + \bigO{\frac{r\ln(Tz_T/z_1)}{\eta}} + \bigO{D(z_T-z_1)} \tag{by $0\leq\eta\inner{w_t-w'_{t+1}}{\nabla_t-m_t}\leq 3\eta Dz_t = \order(1)$ and \pref{lem:olo stabability}}\\
		&\leq 5\eta\sum_{t=1}^{T}\inner{u}{\ell_t-m_t}^2 + 5\eta\sum_{t=1}^{T}\inner{w_t}{\ell_t-m_t}^2 + \bigO{\frac{r\ln(Tz_T/z_1)}{\eta}} + \bigO{D(z_T-z_1)}. \tag{by definition of $\nabla_t$ and $32\eta\abs{\inner{w_t}{\ell_t-m_t}}\leq 32\eta D z_t \leq \frac{1}{2}$}
	\end{align*}
	Since $c_t(w)$ is convex in $w$, we have
	\begin{align*}
		\sumt c_t(w_t) - c_t(u) &= \sumt\inner{w_t-u}{\ell_t} + 16\eta\sumt\inner{w_t}{\ell_t-m_t}^2-\inner{u}{\ell_t-m_t}^2 \leq \sum_{t=1}^T\inner{w_t - u}{\nabla_t}.
	\end{align*}
	Reorganizing terms then finishes the proof.
\end{proof}

\begin{lemma}\label{lem:olo stabability}
In \pref{alg:ONS}, we have
	$0\leq\inner{w_t-w'_{t+1}}{\nabla_t-m_t}\leq 2\norm{\nabla_t-m_t}_{A_t^{-1}}^2$ and also $\sumt \norm{\nabla_t-m_t}_{A_t^{-1}}^2= \bigO{\frac{r\ln(Tz_T/z_1)}{\eta}}$.
\end{lemma}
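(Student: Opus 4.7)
}
The plan is to split the lemma into two independent parts and attack each with standard OMD/elliptical-potential tools.

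\textbf{Part 1 (the inequality $0\le \inner{w_t-w_{t+1}'}{\nabla_t-m_t}\le 2\norm{\nabla_t-m_t}_{A_t^{-1}}^2$).} I would first rewrite both OMD updates as unconstrained quadratic minimizations on $\Omega$ with the \emph{same} quadratic regularizer $\psi_t$: by absorbing $-\inner{\nabla\psi_t(w_t')}{w}$ into a linear term, $w_t=\argmin_{w\in\Omega}\{\inner{w}{m_t-\nabla\psi_t(w_t')} + \psi_t(w)\}$ and $w_{t+1}'=\argmin_{w\in\Omega}\{\inner{w}{\nabla_t-\nabla\psi_t(w_t')} + \psi_t(w)\}$. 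Now I can invoke \pref{lem:stability} with $x=m_t-\nabla\psi_t(w_t')$ and $x'=\nabla_t-\nabla\psi_t(w_t')$ (so $x'-x=\nabla_t-m_t$) and note $\nabla^2\psi_t=A_t$ is constant, so the condition holds with $c=1$. This yields both $\inner{w_t-w_{t+1}'}{\nabla_t-m_t}\ge 0$ and $\norm{w_t-w_{t+1}'}_{A_t}\le 2\norm{\nabla_t-m_t}_{A_t^{-1}}$. Combining the latter with Cauchy–Schwarz (in the $A_t,A_t^{-1}$ dual pair) gives the upper bound.

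\textbf{Part 2 (summation bound).} Write $\Delta_t:=\nabla_t-m_t$. From the definition of $\nabla_t$ and the hypothesis $64\eta Dz_T\le 1$ of \pref{lem:ONS} (applied here), $\Delta_t=(1+32\eta\inner{w_t}{\ell_t-m_t})(\ell_t-m_t)$ with the scalar in $[\tfrac12,\tfrac32]$, so $\Delta_t\Delta_t^\top\preceq \tfrac{9}{4}z_t^2 I\preceq 4z_t^2 I$ and $\Delta_t$ lies in the column space of $\ell_t-m_t$. Consequently $A_t\succeq V_t$ where $V_t:=\eta\bigl(4z_1^2 I+\sum_{s\le t}\Delta_s\Delta_s^\top\bigr)$, and therefore $\norm{\Delta_t}_{A_t^{-1}}^2\le \norm{\Delta_t}_{V_t^{-1}}^2$. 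The next step is the standard elliptical-potential identity $V_t^{-1}=V_{t-1}^{-1}-\frac{\eta V_{t-1}^{-1}\Delta_t\Delta_t^\top V_{t-1}^{-1}}{1+\eta\Delta_t^\top V_{t-1}^{-1}\Delta_t}$ (Sherman–Morrison), which gives $\eta\Delta_t^\top V_t^{-1}\Delta_t=\tfrac{\eta\Delta_t^\top V_{t-1}^{-1}\Delta_t}{1+\eta\Delta_t^\top V_{t-1}^{-1}\Delta_t}\le \ln\!\bigl(1+\eta\Delta_t^\top V_{t-1}^{-1}\Delta_t\bigr)=\ln\det(V_t)-\ln\det(V_{t-1})$. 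Telescoping,
\[
\eta\sum_{t=1}^T\norm{\Delta_t}_{A_t^{-1}}^2\le \eta\sum_{t=1}^T \Delta_t^\top V_t^{-1}\Delta_t\le \ln\bigl(\det(V_T)/\det(V_0)\bigr).
\]

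\textbf{Part 3 (rank-$r$ log-determinant bound).} Since each $\Delta_s$ is proportional to $\ell_s-m_s$, the matrix $M:=\sum_{s\le T}\Delta_s\Delta_s^\top$ has the same column space as $\calL_T$ and so has rank at most $r$. Writing $V_0=4\eta z_1^2 I$, we get $\det(V_T)/\det(V_0)=\det(I+V_0^{-1}\eta M)$; only $r$ eigenvalues of $V_0^{-1}\eta M$ are nonzero, so by AM–GM
\[
\ln\det\bigl(I+V_0^{-1}\eta M\bigr)\le r\ln\!\Bigl(1+\tfrac{1}{r}\,\text{\rm tr}(V_0^{-1}\eta M)\Bigr)\le r\ln\!\Bigl(1+\tfrac{1}{4rz_1^2}\sum_t\|\Delta_t\|^2\Bigr),
\]
and $\sum_t\|\Delta_t\|^2\le \tfrac{9}{4}Tz_T^2$ yields $\order(r\ln(Tz_T/z_1))$. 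Dividing by $\eta$ finishes the bound.

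\textbf{Anticipated obstacle.} The only delicate point is getting the dependence on $r$ rather than the ambient dimension $d$: one must notice that $\Delta_t$ is collinear with $\ell_t-m_t$ so that $M$ inherits the rank bound of $\calL_T$, and then use the AM–GM step for $\det(I+V_0^{-1}\eta M)$ restricted to the nonzero-eigenvalue directions. Everything else (Sherman–Morrison telescoping, the Cauchy–Schwarz step in Part 1, and the PSD comparison $A_t\succeq V_t$) is routine once the setup is right.
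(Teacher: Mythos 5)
Your proposal is correct and follows essentially the same route as the paper. Part~1 is identical in substance: the paper passes $F_x(w)=\inner{w}{x}+D_{\psi_t}(w,w_t')$ to \pref{lem:stability} with $c=1$ (since $\nabla^2 D_{\psi_t}(\cdot,w_t')=A_t$ is constant), while you absorb $-\nabla\psi_t(w_t')$ into the linear term, which is the same computation written slightly differently. Part~2 matches the paper's choice of $A_t'=\eta(4z_1^2 I+\sum_{s\le t}\Delta_s\Delta_s^\top)$ and the comparison $A_t\succeq A_t'$ (from $\norm{\Delta_t}_2^2\le 4z_t^2$, which holds under $64\eta D z_T\le 1$); you prove the elliptical-potential/log-det inequality from scratch via Sherman--Morrison and the matrix determinant lemma, whereas the paper invokes the trace form $\trace{X^{-1}(X-Y)}\le\ln(|X|/|Y|)$ via the argument of Koren et al.\ --- these are the same bound. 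Part~3 is also the paper's rank-$r$ AM--GM step for $\ln\det(I+V_0^{-1}\eta M)$, relying on $\Delta_t$ being collinear with $\ell_t-m_t$ so that $M$ inherits rank $\le r$. In short, the decomposition, the key lemma, and the final estimates all coincide with the paper's proof; the only difference is that you expand the Koren-style log-det inequality into its proof rather than citing it.
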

\begin{proof}
	For any $t$, define $F_x(w)=\inner{w}{x} + D_{\psi_t}(w, w'_t)$.
	Then, we have 
	$$w_t=\argmin_{w\in\calK}F_{m_t}(w) \quad\text{and}\quad w'_{t+1}=\argmin_{w\in\calK}F_{\nabla_t}(w).$$
	Moreover, $\nabla^2_wD_{\psi_t}(w, w'_t)=A_t$ is a constant matrix.
	Hence, by \pref{lem:stability} with $c=1$, $0\leq\inner{w_t-w'_{t+1}}{\nabla_t-m_t}\leq 2\norm{\nabla_t-m_t}_{A_t^{-1}}^2$.
	
	Define $A_t' = \eta\left(4z_1^2\cdot I+\sum_{s=1}^t (\nabla_s-m_s)(\nabla_s-m_s)^\top\right)$.
	Note that $\norm{\nabla_t-m_t}_2^2 \leq 4\norm{\ell_t-m_t}_2^2 \leq 4z_t^2$.
	Thus, $A_t \mgeq A'_t$.
	By similar arguments in \cite[Lemma 6]{koren2017affine}, we have
	\begin{align*}
		\sum_{t=1}^T\norm{\nabla_t - m_t}_{A_t^{-1}}^2 &=  \frac{1}{\eta}\sum_{t=1}^T \trace{A^{-1}_t(A'_t-A'_{t-1})} \leq \frac{1}{\eta}\sum_{t=1}^T \trace{(A'_t)^{-1}(A'_t-A'_{t-1})} \\
		&\leq \frac{1}{\eta}\ln\frac{|A'_T|}{|A'_0|} = \frac{1}{\eta}\ln\abr{I + \sumt\frac{(\nabla_t-m_t)(\nabla_t-m_t)^\top}{4z_1^2}} \\
		&= \bigO{\frac{r\ln\rbr{1+\sumt\frac{\norm{\ell_t-m_t}_2^2}{rz_1^2}}}{\eta}} = \bigO{\frac{r\ln (Tz_T/z_1)}{\eta}},
	\end{align*}
	where $r$ is rank of $\sumt(\ell_t-m_t)(\ell_t-m_t)^\top$. 
\end{proof}

To obtain the regret bound in \pref{thm:ONS}, we instantiate \master with the following set of experts:

\begin{equation}
\label{eq:ONS_base_alg}
\begin{split}
	\ONSbase &= \Big\{ (\eta_k, \calA_k): \forall k=(d_k, s_k) \in \left\{-\ceil{\log_2(dT)}, \ldots,  \ceil{\log_2D} \right\} \times[\ceil{\log_2T}], \eta_k = \tfrac{1}{64\cdot 2^{d_k+s_k}}, \\
	&\qquad\text{$\calA_k$ is \pref{alg:ONS} with $z_t=1$ for all $t$, }  \text{$\Omega = \calK\cap\{w:\norm{w}_2\leq 2^{d_k}\}$, and $\eta=3\eta_k$} \Big\}
\end{split}
\end{equation}

\begin{proof}[of \pref{thm:ONS}]
	We first assume $\norm{u}_2>\frac{1}{dT}$, and thus there exists $k_{\star}$ such that $\eta_{\kstar}\leq \min\left\{\frac{1}{192\cdot 2^{d_\kstar}}, \sqrt{\frac{r\ln T}{\sumt\inner{u}{\ell_t-m_t}^2}}\right\}\leq 2\eta_{\kstar}$, and $2^{d_{\kstar}-1}\leq \norm{u}_2\leq 2^{d_{\kstar}}$.
	Then by \pref{lem:ONS} with $64\cdot 3\eta_{\kstar}\cdot 2^{d_\kstar}\leq 1$:
	\begin{align*}
		\sumt \inner{w^{\kstar}_t-u}{\ell_t} &\leq \bigO{ \frac{r\ln T}{\eta_{\kstar}} + \norm{u}_2 + \eta_{\kstar}\sumt\inner{u}{\ell_t-m_t}^2 } - 33\eta_{\kstar}\sumt\inner{w^{\kstar}_t}{\ell_t-m_t}^2\\
		&= \tilO{ r\norm{u}_2 + \sqrt{r\sumt\inner{u}{\ell_t-m_t}^2} } - 33\eta_{\kstar}\sumt\inner{w^{\kstar}_t}{\ell_t-m_t}^2.
	\end{align*}
	Next, by \pref{thm:master} with $32\eta_k\left|\inner{w^k_t}{\ell_t-m_t}\right|\leq 32\eta_k\norm{w^k_t}_2 \leq 1$, $\sum_k\eta_k=\Theta(dT)$, $\sum_k\eta^2_k=\Theta(d^2T^2)$, and $\frac{\sum_k\eta_k^2}{\eta_{\kstar}^2}=\bigo{d^2T^2/\eta_{\kstar}^2}=\bigo{ d^2D^2T^4 }$, we have:
	\begin{align*}
		\sumt\inner{w_t-u}{\ell_t} &= \tilO{ r\norm{u}_2 + \sqrt{r\sumt\inner{u}{\ell_t-m_t}^2} + \frac{1}{\eta_{\kstar}} }\\
		&= \tilO{ r\norm{u}_2 + \sqrt{r\sumt\inner{u}{\ell_t-m_t}^2} }.
	\end{align*}
	When $\norm{u}_2 \leq \frac{1}{dT}\leq D$ (if $D<\frac{1}{dT}$, we achieve constant regret by picking $w_t$ arbitrarily), pick any $u'\in\calK$ such that $\norm{u'}_2 = \frac{1}{dT}$ (this is possible since $\bf{0}\in\calK$). Then:
	\begin{align*}
		\sumt\inner{w_t-u}{\ell_t} &= \sumt\inner{w_t-u'}{\ell_t} + \sumt\inner{u'-u}{\ell_t}\\
		&\leq \tilO{ r\norm{u'}_2 + \sqrt{r\sumt\inner{u'}{\ell_t-m_t}^2} + \norm{u'}_2T } = \tilO{1}.
	\end{align*}
	This finishes the proof.
\end{proof}

\subsection{Combining Gradient Descent}\label{app:GD}

For gradient descent type of bound, we use the optimistic gradient descent algorithm (OptGD) as the base algorithm, which achieves the following regret bound with learning rate $\eta$ (see~\citep[Lemma 3]{rakhlin2013optimization}):
\begin{align*}
	\sumt\inner{w_t-u}{\ell_t} \leq \frac{\norm{u}_2^2}{\eta} + \eta\sumt\norm{\ell_t-m_t}_2^2.
\end{align*}
To obtain the regret bound in \pref{thm:GD}, it suffices to instantiate \master with the following set of experts:
 
\begin{equation}
\label{eq:GD_base_alg}
\begin{split}
	\GDbase &= \Big\{ (\eta_k, \calA_k): \forall k=(d_k, s_k) \in \left\{-\ceil{\log_2T} ,\ldots, \ceil{\log_2D}\right\}\times[\ceil{\log_2T}], \eta_k = \tfrac{1}{32\cdot 2^{d_k+s_k}}, \\
	&\qquad\qquad\text{$\calA_k$ is OptGD with decision set }  \text{$\Omega = \calK\cap\{w:\norm{w}_2\leq 2^{d_k}\}$, and $\eta=4^{d_k}\eta_k$} \Big\}.
\end{split}
\end{equation}

\begin{proof}[of \pref{thm:GD}]
	We first assume $\norm{u}_2>\frac{1}{T}$, so that there exists $\kstar$ such that 
	\[
		\eta_{\kstar} \leq \min\left\{\frac{1}{64\cdot 2^{d_{\kstar}}}, \frac{1}{\norm{u}_2\sqrt{\sumt\norm{\ell_t-m_t}_2^2}} \right\} \leq 2\eta_{\kstar}, 
	\]
	and $2^{d_{\kstar}-1}\leq \norm{u}_2 \leq 2^{d_{\kstar}}$.
	By the regret guarantee of OptGD, we have:
	\begin{align*}
		\sumt\inner{w^{\kstar}_t - u}{\ell_t} \leq \frac{\norm{u}_2^2}{4^{d_{\kstar}}\eta_{\kstar}} + 4^{d_{\kstar}}\eta_{\kstar}\sumt\norm{\ell_t-m_t}_2^2 = \bigO{ \norm{u}_2 + \norm{u}_2\sqrt{\sumt\norm{\ell_t-m_t}_2^2} }.
	\end{align*}
	Next, by \pref{thm:master} with $32\eta_k\left|\inner{w^k_t}{\ell_t-m_t}\right|\leq 32\eta_k\norm{w^k_t}_2 \leq 1$, $\sum_k\eta_k=\Theta(T)$, $\sum_k\eta^2_k=\Theta(T^2)$, and $\frac{\sum_k\eta_k^2}{\eta_{\kstar}^2}=\bigo{ D^2T^3 }$, we have:
	\begin{align*}
		\sumt\inner{w_t-u}{\ell_t} &= \tilO{\norm{u}_2 + \norm{u}_2\sqrt{\sumt\norm{\ell_t-m_t}_2^2} + \eta_{\kstar}\sumt\inner{w^{\kstar}_t}{\ell_t-m_t}^2 }\\
		&= \tilO{ \norm{u}_2 + \norm{u}_2\sqrt{\sumt\norm{\ell_t-m_t}_2^2} }.
	\end{align*}
	When $\norm{u}_2\leq\frac{1}{T}$, pick any $u'\in\calK$ such that $\norm{u'}_2=\frac{1}{T}$, then:
	\begin{align*}
		\sumt\inner{w_t-u}{\ell_t} &= \sumt\inner{w_t-u'}{\ell_t} + \sumt\inner{u'-u}{\ell_t}\\
		&\leq \tilO{ \norm{u}_2 + \norm{u}_2\sqrt{\sumt\norm{\ell_t-m_t}_2^2} + \norm{u}_2 } = \tilO{1}.
	\end{align*}
	This finishes the proof.
\end{proof}
 
\subsection{Combining AdaGrad}\label{app:AdaGrad}

We first introduce the base algorithm \pref{alg:AdaGrad}, which is a variant of the AdaGrad algorithm with predictor $m_t$ incorporated. It guarantees the following.

\setcounter{AlgoLine}{0}
\begin{algorithm}[t]
    \caption{Optimistic AdaGrad}
    \label{alg:AdaGrad}
    \textbf{Parameters:} learning rate $\eta, \eta'>0$, $w'_1=\vec{0}$.
    
    \textbf{Define:}
    \begin{align*}
        c_t(w)& = \inner{w}{\ell_t}+16\eta'\inner{w}{\ell_t - m_t}^2 \\
        \nabla_t &= \nabla c_t(w_t) = \ell_t + 32\eta'\inner{w_t}{\ell_t-m_t}(\ell_t-m_t)\\
        \psi_t(w) &= \frac{1}{2}\norm{w}_{A_t}^2, \qquad \text{where\ }A_t\triangleq \frac{1}{\eta}(I + \calG_t)^{1/2},\quad \calG_t = \sum_{s=1}^t(\nabla_t-m_t)(\nabla_t-m_t)^\top.
    \end{align*}
    
    \For{$t=1$,\ldots, T}{
		Receive prediction $m_t$.
    
    		Compute $w_t = \argmin_{w\in\Omega}\left\{\inner{w}{\sum_{s=1}^{t-1}\nabla_s+m_t} + \psi_{t-1}(w)\right\}$.
    
    		Play $w_t$ and receive $\ell_t$.
    }
\end{algorithm}

\begin{theorem}
	Define $A'_t=(I+ \sum_{s=1}^t(\ell_s-m_s)(\ell_s-m_s)^\top)^{1/2}$.
	Assume $64\eta'|\inner{w_t}{\ell_t-m_t}|\leq 1$ for all $t$, and $\eta'\leq \eta/\norm{u}^2_{A'_T}$.
	\pref{alg:AdaGrad} ensures for any $u\in\Omega$,
	\begin{align*}
		\sumt\inner{w_t-u}{\ell_t} = \bigO{\eta\trace{\calL_T^{1/2}} + \frac{u^{\top}(I+\calL_T)^{1/2}u}{\eta} } - 16\eta'\sumt\inner{w_t}{\ell_t-m_t}^2.
	\end{align*}
\end{theorem}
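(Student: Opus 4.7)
The plan is to linearize via convexity, apply a standard optimistic-FTRL analysis for the quadratic regularizer $\psi_t$, and then carefully balance the extra quadratic-in-$u$ term using the assumption $\eta'\le \eta/\norm{u}^2_{A'_T}$. Specifically, since $c_t$ is convex with $\nabla c_t(w_t)=\nabla_t$, I first write
\[
\sumt \inner{w_t-u}{\ell_t} \;\le\; \sumt\inner{w_t-u}{\nabla_t} \;+\; 16\eta'\,u^\top \calL_T u \;-\; 16\eta'\sumt\inner{w_t}{\ell_t-m_t}^2 ,
\]
which already produces the required negative term on the right, and reduces the problem to bounding $\sumt\inner{w_t-u}{\nabla_t}$ and $16\eta'\,u^\top\calL_T u$.

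For the linearized FTRL regret, I will invoke the standard optimistic-FTRL inequality for a quadratic regularizer (analogous to the OMD stability argument used for ONS via \pref{lem:stability}): with $\psi_t(w)=\tfrac12\norm{w}_{A_t}^2$ and $w_1'=\vec{0}$,
\[
\sumt\inner{w_t-u}{\nabla_t} \;\le\; \psi_T(u) \;+\; 2\sumt \norm{\nabla_t-m_t}_{A_t^{-1}}^2 .
\]
The first term is $\tfrac{1}{2\eta}u^\top(I+\calG_T)^{1/2}u$. The crucial matrix trace inequality, proved by the standard AdaGrad telescoping on $\trace(G_t^{1/2})-\trace(G_{t-1}^{1/2})\ge \tfrac12 g_t^\top G_t^{-1/2} g_t$ applied to $G_t=I+\calG_t$, gives
\[
\sumt (\nabla_t-m_t)^\top (I+\calG_t)^{-1/2}(\nabla_t-m_t) \;\le\; 2\trace\!\bigl((I+\calG_T)^{1/2}\bigr) \;\le\; 2d + 2\trace(\calG_T^{1/2}),
\]
so $\sumt\norm{\nabla_t-m_t}_{A_t^{-1}}^2 = \order\!\bigl(\eta\,\trace(\calG_T^{1/2})\bigr)$ up to a harmless $\eta d$ additive term.

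Next, I compare $\calG_T$ and $\calL_T$: since $\nabla_t - m_t = \alpha_t(\ell_t-m_t)$ with $\alpha_t = 1+32\eta'\inner{w_t}{\ell_t-m_t}\in[\tfrac12,\tfrac32]$ by assumption, we obtain $\calG_T\preceq \tfrac94\calL_T$, hence by operator monotonicity of the square root, $(I+\calG_T)^{1/2}\preceq \tfrac32(I+\calL_T)^{1/2}$ and $\trace(\calG_T^{1/2})\le \tfrac32\trace(\calL_T^{1/2})$. Thus $\psi_T(u)=\order(u^\top A'_T u/\eta)$ and the stability sum is $\order(\eta\,\trace(\calL_T^{1/2}))$. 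For the remaining term $16\eta'\,u^\top\calL_T u$, I diagonalize $\calL_T = Q\Lambda Q^\top$, let $z=Q^\top u$, and estimate
\[
u^\top \calL_T u = \sum_i \lambda_i z_i^2 \;\le\; \max_i\sqrt{\lambda_i}\,\sum_i \sqrt{\lambda_i} z_i^2 \;\le\; \trace(\calL_T^{1/2})\cdot u^\top \calL_T^{1/2} u \;\le\; \trace(\calL_T^{1/2})\,\norm{u}_{A'_T}^2 .
\]
Then the hypothesis $\eta'\le \eta/\norm{u}_{A'_T}^2$ yields $16\eta'\,u^\top\calL_T u \le 16\eta\,\trace(\calL_T^{1/2})$. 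Assembling the three contributions gives the stated bound.

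The main obstacle is the matrix trace lemma, since it requires a careful induction using concavity of $X\mapsto \trace(X^{1/2})$ on PSD matrices under rank-one updates; all the other steps are either bookkeeping or routine applications of monotonicity of the square root on the PSD cone. The other subtlety is to route the extra quadratic term $16\eta' u^\top\calL_T u$ into a quantity controlled by $\trace(\calL_T^{1/2})$, which is what the spectral estimate above does and is precisely where the tuning condition $\eta'\le \eta/\norm{u}_{A'_T}^2$ is consumed.
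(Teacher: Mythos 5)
Your high-level decomposition matches the paper's (linearize via convexity of $c_t$, bound the linearized FTRL regret, then absorb the extra $16\eta' u^\top \calL_T u$ using the hypothesis $\eta'\le \eta/\norm{u}^2_{A'_T}$), and your spectral estimate for the extra term is correct. But the proposal leans on two claims you label ``standard'' that are precisely where the paper does the real work.

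First, the claimed optimistic-FTRL inequality
$\sumt\inner{w_t-u}{\nabla_t} \le \psi_T(u) + 2\sumt \norm{\nabla_t-m_t}_{A_t^{-1}}^2$
is not an off-the-shelf lemma for this update rule. Note the algorithm plays $w_t = \argmin_{w\in\Omega}\{\inner{w}{\sum_{s<t}\nabla_s + m_t} + \psi_{t-1}(w)\}$ with the \emph{previous} regularizer $\psi_{t-1}$, and is constrained to $\Omega$. The paper establishes the analogous bound from scratch: it writes $\inner{w_t-u}{\nabla_t} = \inner{w_t-w_t'}{\nabla_t-m_t} + \inner{w_t-w_t'}{m_t} + \inner{w_t'-u}{\nabla_t}$ (where $w_t'=\argmin_{w\in\Omega}\{\inner{w}{\sum_{s\le t}\nabla_s} + \psi_{t-1}(w)\}$), bounds the first piece by $2\norm{\nabla_t-m_t}_{A_{t-1}^{-1}}^2$ via \pref{lem:stability}, and handles the remaining two pieces by a separate induction showing $\sum_{t\le\tau}\bigl(\inner{w_t-w_t'}{m_t}+\inner{w_t'}{\nabla_t}\bigr)\le \inner{u}{\sum_{t\le\tau}\nabla_t}+\psi_{\tau-1}(u)$. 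That induction is the core technical lemma of the proof; invoking ``standard optimistic FTRL'' glosses over it. Also note the stability term comes out with $A_{t-1}^{-1}$, not $A_t^{-1}$; the conversion from $\sumt\norm{\nabla_t-m_t}_{A_{t-1}^{-1}}^2$ to $\bigO{\eta\trace{\calG_T^{1/2}}}$ is exactly what the paper delegates to \citep[Theorem 7]{cutkosky2020better}, and it is not the plain AdaGrad telescoping (which only handles $A_t^{-1}$-type sums directly).

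Second, your treatment of the extra quadratic term is a genuine (and mildly nicer) alternative to the paper's. You prove directly that $u^\top\calL_T u \le \trace(\calL_T^{1/2})\cdot u^\top\calL_T^{1/2}u \le \trace(\calL_T^{1/2})\norm{u}^2_{A'_T}$ via a spectral decomposition, and then the tuning hypothesis finishes the job. The paper instead applies Cauchy--Schwarz $\inner{u}{\ell_t-m_t}^2\le \norm{u}^2_{A'_T}\norm{\ell_t-m_t}^2_{(A'_{t-1})^{-1}}$ and re-invokes the matrix telescoping bound to get $\sumt\norm{\ell_t-m_t}^2_{(A'_{t-1})^{-1}}=\bigO{\trace{\calL_T^{1/2}}}$. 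Your one-shot spectral estimate avoids this second telescoping and is arguably more transparent; both give the same $\bigO{\eta\trace{\calL_T^{1/2}}}$. The remaining comparison $\calG_T\preceq\frac{9}{4}\calL_T$ and use of operator monotonicity of $\sqrt{\cdot}$ is the same as the paper's. In summary: correct strategy, but you should actually carry out the FTRL induction (or cite an appropriate exact lemma) rather than calling it standard, since that step is the bulk of the proof.
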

\begin{proof}
	For any $t$, define $F_x(w)=\inner{w}{x} + \psi_{t-1}(w)$.
	Note that $w_t=\argmin_{w\in\calK}F_{\sum_{s=1}^{t-1}\nabla_s+m_t}(w)$, and denote $w'_t=\argmin_{w\in\calK}F_{\sum_{s=1}^t\nabla_s}(w)$.
	Moreover, $\nabla^2\psi_{t-1}(w)=A_{t-1}$ is a constant matrix.
	Hence, by \pref{lem:stability} with $c=1$, $\inner{w_t-w'_t}{\nabla_t-m_t}\leq 2\norm{\nabla_t-m_t}_{A_{t-1}^{-1}}^2$, and for any $u\in\Omega$ we have:
	\begin{align*}
		\sumt\inner{w_t-u}{\nabla_t} &= \sumt\inner{w_t - w'_t}{\nabla_t-m_t} + \inner{w_t-w'_t}{m_t} + \inner{w'_t-u}{\nabla_t}\\
		&\leq 2\sumt \norm{\nabla_t-m_t}_{A_{t-1}^{-1}}^2 + \sumt\inner{w_t-w'_t}{m_t} + \inner{w'_t-u}{\nabla_t}.
	\end{align*}
	We prove by induction that for any $\tau, u\in\Omega$:
	\begin{align*}
		\sum_{t=1}^{\tau}\inner{w_t-w'_t}{m_t} + \inner{w'_t}{\nabla_t} \leq \sum_{t=1}^{\tau}\inner{u}{\nabla_t} + \psi_{\tau-1}(u).
	\end{align*}
When $\tau=1$, it suffices to show:
	\begin{align*}
		\inner{w_1-w'_1}{m_1} + \inner{w'_1}{\nabla_1} \leq \inner{w'_1}{\nabla_1} + \psi_0(w'_1).
	\end{align*}
	This is clearly true since $\inner{w_1}{m_1} \leq \inner{w_1}{m_1}+\psi_0(w_1)\leq \inner{w'_1}{m_1}+\psi_0(w'_1)$. Now suppose the result is true for $\tau=T$, then for $\tau=T+1$:
	\begin{align*}
		&\sum_{t=1}^{T+1}\inner{w_t-w'_t}{m_t} + \inner{w'_t}{\nabla_t}\\
		&\leq \inner{w_{T+1}}{\sumt\nabla_t} + \psi_{T-1}(w_{T+1}) + \inner{w_{T+1}-w'_{T+1}}{m_{T+1}} + \inner{w'_{T+1}}{\nabla_{T+1}} \tag{induction step for $\tau=T$ with $u=w_{T+1}$}\\
		&\leq \inner{w'_{T+1}}{\sumt\nabla_t+m_{T+1}} + \psi_T(w'_{T+1}) - \inner{w'_{T+1}}{m_{T+1}} + \inner{w'_{T+1}}{\nabla_{T+1}} \tag{by $\psi_{T-1}(w)\leq\psi_T(w)$, and $F_{\sum_{t=1}^T\nabla_t+m_{T+1}}(w_{T+1})\leq F_{\sum_{t=1}^T\nabla_t+m_{T+1}}(w'_{T+1})$}\\
		&= \inner{w'_{T+1}}{\sum_{t=1}^{T+1}\nabla_t} + \psi_T(w'_{T+1}) \leq \inner{u}{\sum_{t=1}^{T+1}\nabla_t} + \psi_T(u),
	\end{align*}
	for any $u\in\Omega$ by the definition of $w'_{T+1}$. Therefore, by \citep[Theorem 7]{cutkosky2020better}, we have:
	\begin{align*}
		\sumt\inner{w_t-u}{\nabla_t} &\leq 2\sumt\norm{\nabla_t-m_t}_{A_{t-1}^{-1}}^2 + \psi_{T-1}(u) \leq 2\sumt\norm{\nabla_t-m_t}_{A_{t-1}^{-1}}^2 + \frac{u^{\top}(I + \calG_T)^{1/2}u}{\eta}\\
		&= \bigO{ \eta\trace{\calG_T^{1/2}} + \frac{u^{\top}(I+\calG_T)^{1/2}u}{\eta} } = \bigO{ \eta\trace{\calL_T^{1/2}} + \frac{u^{\top}(I+\calL_T)^{1/2}u}{\eta} }.
	\end{align*}
	The reasoning of the last equality is as follows: note that $\nabla_t-m_t=(1+32\eta'\inner{w_t}{\ell_t-m_t})(\ell_t-m_t)$ has the same direction as $\ell_t-m_t$.
	Thus by assumption on $\eta'$, $\calG_t \mleq \frac{3}{2}\calL_t$.
	Finally, note that $c_t$ is a convex function. Therefore, $\sumt c_t(w_t)-c_t(u)\leq \sumt\inner{w_t-u}{\nabla_t}$. Reorganizing terms, we get:
	\begin{align*}
		&\sumt\inner{w_t-u}{\ell_t}\\
		&\leq \bigO{ \eta\trace{\calL_T^{1/2}} + \frac{u^{\top}(I+\calL_T)^{1/2}u}{\eta} } - 16\eta'\sumt\inner{w_t}{\ell_t-m_t}^2 + 16\eta'\sumt\inner{u}{\ell_t-m_t}^2.
	\end{align*}
	By $\eta'\leq \eta/\norm{u}^2_{A'_T}$ (note that $\norm{u}^2_{A'_T}=u^\top(I+\calL_T)^{1/2}u$), we have:
	\begin{align*}
		\eta'\sumt\inner{u}{\ell_t-m_t}^2 \leq \eta'\sumt\norm{u}_{A'_T}^2\norm{\ell_t-m_t}_{(A'_{t-1})^{-1}}^2 = \bigO{ \eta\trace{\calL_T^{1/2}} }.
	\end{align*}
	Therefore,
	\begin{align*}
		\sumt\inner{w_t-u}{\ell_t} = \bigO{\eta\trace{\calL_T^{1/2}} + \frac{u^{\top}(I+\calL_T)^{1/2}u}{\eta} } - 16\eta'\sumt\inner{w_t}{\ell_t-m_t}^2.
	\end{align*}
\end{proof}

Now we instantiate \master with the following set of experts to obtain the desired bound in \pref{thm:AdaGrad}.

\begin{equation}
\label{eq:AdaGrad_base_alg}
\begin{split}
	\AdaGradbase &= \Big\{ (\eta_k, \calA_k): \forall k=(d_k, t_k, l_k) \in \calS_{\text{\rm AG}}, \\
	&\qquad\qquad\eta_k = \tfrac{1}{64\cdot 2^{d_k+t_k}}, \text{$\calA_k$ is \pref{alg:AdaGrad} with decision set }  \text{$\Omega = \calK\cap\{w:\norm{w}_2\leq 2^{d_k}\}$, }\\
	&\qquad\qquad\text{$\eta'=2\eta_k$ and $\eta= 2^{l_k+1}\eta_k$} \Big\},
\end{split}
\end{equation}
where $\calS_{\text{\rm AG}}=\{-\ceil{\log_2T} ,\ldots, \ceil{\log_2D}\}\times[\ceil{\log_2(dT)}]\times\{-\ceil{\log_2T},\ldots,\ceil{\log_2(2D^2T)}\}$. \\

\begin{proof}[of \pref{thm:AdaGrad}]
	First assume $\norm{u}_2>\frac{1}{T}$, so that there exists $\kstar$ such that:
	\begin{align*}
		2^{d_{\kstar}-1} &\leq \norm{u}_2 \leq 2^{d_{\kstar}}, \eta_{\kstar} \leq \min\left\{\frac{1}{128\cdot 2^{d_\kstar}}, \frac{1}{\sqrt{\norm{u}_{(I+\calL_T)^{1/2}}^2\trace{\calL_T^{1/2}}}} \right\} \leq 2\eta_{\kstar},
	\end{align*}
	and $2^{l_k-1} \leq u^\top(I+\calL_T)^{1/2}u \leq 2^{l_k}$.
	Note that $64\eta'|\inner{w^{\kstar}_t}{\ell_t-m_t}|\leq 64\eta'\norm{w^{\kstar}_t}_2\leq 1$, and $\norm{u}_{A'_T}^2\eta'\leq 2^{l_{\kstar}}\cdot 2\eta_{\kstar}= \eta$.
	Hence, by the regret guarantee of \pref{alg:AdaGrad}, we have:
	\begin{align*}
		\sumt\inner{w^{\kstar}_t-u}{\ell_t} &\leq \bigO{2^{l_k+1}\eta_{\kstar}\trace{\calL_T^{1/2}} + \frac{u^{\top}(I+\calL_T)^{1/2}u}{2^{l_k+1}\eta_{\kstar}} } - 32\eta_{\kstar}\sumt\inner{w^{\kstar}_t}{\ell_t-m_t}^2\\
		&\leq \bigO{ \norm{u} + \sqrt{(u^\top(I+\calL_T)^{1/2}u)\trace{\calL_T^{1/2}}} } - 32\eta_{\kstar}\sumt\inner{w^{\kstar}_t}{\ell_t-m_t}^2.
	\end{align*}
	Next, by \pref{thm:master} with $32\eta_k\left|\inner{w^k_t}{\ell_t-m_t}\right|\leq 32\eta_k\norm{w^k_t}_2 \leq 1$, $\sum_k\eta_k=\Theta(T)$, $\sum_k\eta^2_k=\Theta(T^2)$, and $\frac{\sum_k\eta_k^2}{\eta_{\kstar}^2}=\bigo{ d^2D^2T^4 }$, we have:
	\begin{align*}
		\sumt\inner{w_t-u}{\ell_t} = \tilO{ \norm{u}_2 + \sqrt{(u^\top(I+\calL_T)^{1/2}u)\trace{\calL_T^{1/2}}} }.
	\end{align*}
	When $\norm{u}_2\leq\frac{1}{T}$, pick any $u'\in\calK$ such that $\norm{u'}_2=\frac{1}{T}$, then:
	\begin{align*}
		\sumt\inner{w_t-u}{\ell_t} &= \sumt\inner{w_t-u'}{\ell_t} + \sumt\inner{u'-u}{\ell_t}\\
		&\leq \tilO{ \norm{u'}_2 + \sqrt{(u'^\top(I+\calL_T)^{1/2}u')\trace{\calL_T^{1/2}}} + \norm{u'}_2 } = \tilO{1}.
	\end{align*} 
	This finishes the proof.
\end{proof}
 
\subsection{Combining MetaGrad's base algorithm}\label{app:MetaGrad}
\setcounter{AlgoLine}{0}
\begin{algorithm}[t]
    \caption{MetaGrad}
    \label{alg:MetaGrad}
    \textbf{Parameters:} learning rate $\eta>0$, $w'_1=\vec{0}$.
    
    \textbf{Define:}
    \begin{align*}
        c_t(w)& = \inner{w}{\ell_t}+16\eta\inner{w-\bar{w}_t}{\ell_t - m_t}^2 \\
        \nabla_t&=\nabla c_t(w_t)=\ell_t + 32\eta\inner{w_t-\bar{w}_t}{\ell_t-m_t}(\ell_t-m_t) \\
        \psi_t(w) &= \frac{1}{2}\norm{w}_{A_t}^2, \qquad \text{where\ }A_t\triangleq \eta\rbr{ 8I + \sum_{s=1}^{t-1} (\nabla_s-m_s)(\nabla_s-m_s)^\top }.
    \end{align*}
    
    \For{$t=1$,\ldots, T}{
		Receive prediction $m_t$.
    
    		Play $w_t = \argmin_{w\in\calK}\left\{\inner{w}{m_t} + D_{\psi_{t}}(w,w_t')\right\}$.
    
    		Receive $\ell_t$ and $\bar{w}_t$.
    
        Compute $w_{t+1}' = \argmin_{w\in\calK}\left\{\inner{w}{\nabla_t} + D_{\psi_t}(w, w_t')\right\}$.
    }
\end{algorithm}

We first present the MetaGrad base algorithm (\pref{alg:MetaGrad}) and its regret guarantee below (note that the algorithm receives $\bar{w}_t$ at the end of round $t$, which will eventually be set to the master's prediction in our construction).
\begin{lemma}
	\label{lem:MetaGrad}
	Assume $64\eta D\leq 1$.
	\pref{alg:MetaGrad} ensures:
	\begin{align*}
		\sumt\inner{w_t-u}{\ell_t} \leq \bigO{ \norm{u}_2 + \frac{r\ln T}{\eta} + \eta\sumt\inner{u-\bar{w}_t}{\ell_t-m_t}^2 } - 10\eta\sumt\inner{w_t-\bar{w}_t}{\ell_t-m_t}^2.
	\end{align*}
\end{lemma}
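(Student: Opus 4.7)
The plan is to mirror the analysis of \pref{lem:ONS}, with the single change that the surrogate cost $c_t$ is shifted by $\bar w_t$. First, I would apply \pref{lem:oomd} to the OMD iterates with regularizer $\psi_t$ and linearized loss $\nabla_t$:
\[
\sumt \inner{w_t - u}{\nabla_t} \leq \sumt \inner{w_t - w'_{t+1}}{\nabla_t - m_t} + \sumt \rbr{D_{\psi_t}(u, w'_t) - D_{\psi_t}(u, w'_{t+1})}.
\]
Since $\nabla^2 \psi_t = A_t$ is constant in $w$, \pref{lem:stability} applies with $c=1$ and gives $\inner{w_t - w'_{t+1}}{\nabla_t - m_t} \leq 2\norm{\nabla_t - m_t}^2_{A_t^{-1}}$; a log-determinant argument identical to \pref{lem:olo stabability} (noting $\norm{\nabla_t - m_t}_2 \leq 2\norm{\ell_t - m_t}_2$, which follows from $64\eta D \leq 1$ and $\norm{\ell_t-m_t}_2 \leq 1$) then yields $\sumt \norm{\nabla_t - m_t}^2_{A_t^{-1}} = \bigO{r\ln T / \eta}$. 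The time-varying Bregman piece telescopes with boundary term $D_{\psi_1}(u, w'_1) = 4\eta\norm{u}_2^2 \leq \norm{u}_2 / 16$ (using $\eta D \leq 1/64$ and $\norm{u}_2 \leq D$) and increment $D_{\psi_{t+1}}(u, w'_{t+1}) - D_{\psi_t}(u, w'_{t+1}) = \tfrac{\eta}{2}\inner{u - w'_{t+1}}{\nabla_t - m_t}^2$.

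Second, I would convert from $\nabla_t$ to $\ell_t$ using convexity of $c_t$: $c_t(w_t) - c_t(u) \leq \inner{w_t - u}{\nabla_t}$. Expanding the left-hand side yields
\[
\inner{w_t - u}{\ell_t} + 16\eta\inner{w_t - \bar w_t}{\ell_t - m_t}^2 - 16\eta\inner{u - \bar w_t}{\ell_t - m_t}^2,
\]
so rearranging puts the negative $-16\eta\inner{w_t - \bar w_t}{\ell_t - m_t}^2$ on the regret side (the source of the $-10\eta$ slack) while contributing the $+16\eta\inner{u - \bar w_t}{\ell_t - m_t}^2$ term on the right as desired.

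The main obstacle is controlling the residual $\eta\sumt\inner{u - w'_{t+1}}{\nabla_t - m_t}^2$ from the Bregman telescoping without introducing an unwanted $\inner{u - w_t}{\nabla_t - m_t}^2$ dependence. I would first swap $w'_{t+1}$ for $w_t$ via $(a+b)^2 \leq 2a^2 + 2b^2$, paying $2\eta\inner{w_t - w'_{t+1}}{\nabla_t - m_t}^2$; this cost is $\order(\norm{\nabla_t - m_t}^2_{A_t^{-1}})$ because $\eta\abs{\inner{w_t - w'_{t+1}}{\nabla_t - m_t}} = \order(1)$ (from \pref{lem:stability} and $\eta D \leq 1/64$) collapses one factor, and thus sums to $\bigO{r\ln T/\eta}$. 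Then split $u - w_t = (u - \bar w_t) - (w_t - \bar w_t)$ and apply $(a+b)^2 \leq 2a^2 + 2b^2$ again. Since $\nabla_t - m_t = (1 + 32\eta\inner{w_t - \bar w_t}{\ell_t - m_t})(\ell_t - m_t)$ is collinear with $\ell_t - m_t$ with scalar in $[0, 2]$ (by $64\eta D \leq 1$), each of the two resulting squared inner products is at most $4$ times its $\ell_t - m_t$ counterpart. This bounds the residual by $\bigO{\eta\inner{u - \bar w_t}{\ell_t - m_t}^2 + \eta\inner{w_t - \bar w_t}{\ell_t - m_t}^2}$; the second piece is absorbed into the $-16\eta\inner{w_t - \bar w_t}{\ell_t - m_t}^2$ from convexity, leaving the stated $-10\eta$ negative term once constants in the AM-GM splits are chosen carefully. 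Combining with the $\bigO{r\ln T/\eta + \norm{u}_2}$ overhead completes the proof.
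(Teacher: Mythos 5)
Your proposal is correct and mirrors the paper's proof essentially step for step: apply \pref{lem:oomd} and \pref{lem:olo stabability} to get the $\bigO{r\ln T/\eta}$ stability term, telescope the time-varying Bregman divergences and handle the residual $\tfrac{\eta}{2}\inner{u-w'_{t+1}}{\nabla_t-m_t}^2$ by swapping $w'_{t+1}\to w_t$ (absorbing that cost into the log-determinant bound), then use convexity of $c_t$ to switch from $\nabla_t$ to $\ell_t$ and decompose $u-w_t=(u-\bar w_t)-(w_t-\bar w_t)$ so the $(w_t-\bar w_t)$ piece is eaten by the $-16\eta$ term from convexity. The only cosmetic deviation is that you decompose $u-w_t$ before replacing $\nabla_t-m_t$ with its $\ell_t-m_t$ counterpart whereas the paper does it in the other order, which changes nothing substantive.
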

\begin{proof}
	By \pref{lem:oomd} and \pref{lem:olo stabability} with $z_t=1$ for all $t$, we have:
	\begin{align*}
		&\sum_{t=1}^T\inner{w_t - u}{\nabla_t}\\
		&\leq \sum_{t=1}^T\inner{w_t - w'_{t+1}}{\nabla_t - m_t} + D_{\psi_t}(u, w'_t) - D_{\psi_t}(u, w'_{t+1})\\
		&\leq 2\sum_{t=1}^T\norm{\nabla_t -m_t}_{A_t^{-1}}^2 + D_{\psi_1}(u, w'_1) + \sum_{t=1}^{T-1}D_{\psi_{t+1}}(u, w'_{t+1}) - D_{\psi_t}(u, w'_{t+1})\\
		&\leq \bigO{\frac{r\ln T}{\eta} + \eta \norm{u}_2^2} + \sum_{t=1}^{T-1}D_{\psi_{t+1}}(u, w'_{t+1}) - D_{\psi_t}(u, w'_{t+1}).
	\end{align*}
	Note that $\eta\norm{u}_2^2=\bigO{\norm{u}_2}$.
	Moreover,
	\begin{align*}
		&\sum_{t=1}^{T-1}D_{\psi_{t+1}}(u, w'_{t+1}) - D_{\psi_t}(u, w'_{t+1})\\
		&= \frac{\eta}{2}\sum_{t=1}^{T-1}\inner{u-w'_{t+1}}{\nabla_t - m_t}^2\\
		&\leq \eta\sum_{t=1}^{T-1}\inner{u-w_t}{\nabla_t-m_t}^2 + \eta\sum_{t=1}^{T-1}\inner{w_t-w'_{t+1}}{\nabla_t-m_t}^2\\
		&\leq 3\eta\sum_{t=1}^{T-1}\inner{u-w_t}{\ell_t-m_t}^2 + \bigO{\frac{r\ln T}{\eta}},
	\end{align*}
	where the last step is by $0\leq \eta\inner{w_t-w'_{t+1}}{\nabla_t-m_t}\leq 3\eta D= \order(1)$ and \pref{lem:olo stabability}.
	Since $c_t(w)$ is convex in $w$, we have $\sum_{t=1}^Tc_t(w_t) - c_t(u) \leq \sum_{t=1}^T\inner{w_t - u}{\nabla_t}$. Re-organzing terms, we have:
	\begin{align*}
		\sumt\inner{w_t-u}{\ell_t}&\leq \bigO{\frac{r\ln T}{\eta}+\norm{u}_2} + 3\eta\sumt\inner{u-w_t}{\ell_t-m_t}^2 \\
		&\qquad\qquad + 16\eta\sumt\inner{u-\bar{w}_t}{\ell_t-m_t}^2 - 16\eta\sumt\inner{w_t-\bar{w}_t}{\ell_t-m_t}^2\\
		&\leq \bigO{\frac{r\ln T}{\eta} + \norm{u}_2 + \eta\sumt\inner{u-\bar{w}_t}{\ell_t-m_t}^2 } - 10\eta\sumt\inner{w_t-\bar{w}_t}{\ell_t-m_t}^2.
	\end{align*}
\end{proof}

Then, we instantiate \master with the following set of experts to obtain the desired bound in \pref{thm:MetaGrad}.
\begin{equation}
\label{eq:MetaGrad_base_alg}
\begin{split}
	\MetaGradbase &= \Big\{ (\eta_k, \calA_k): \forall k \in [\ceil{\log_2(2DT)}], \eta_k = \tfrac{1}{64D\cdot 2^{k}}, \\
	&\qquad\qquad\text{$\calA_k$ is \pref{alg:MetaGrad} with $\bar{w}_t=w_t$ for all $t$ and $\eta=4\eta_k$} \Big\}.
\end{split}
\end{equation}

\begin{proof}[of \pref{thm:MetaGrad}]
	There exists $k_{\star}$ such that $\eta_{\kstar}\leq \min\left\{\frac{1}{256D}, \sqrt{\frac{r\ln T}{\sumt\inner{u-w_t}{\ell_t-m_t}^2}}\right\}\leq 2\eta_{\kstar}$. Then by \pref{lem:MetaGrad} with $64\cdot 4\eta_{\kstar}D\leq 1$:
	\begin{align*}
		&\sumt \inner{w^{\kstar}_t-u}{\ell_t}\\
		&\leq \bigO{ \frac{r\ln T}{\eta_{\kstar}} + \norm{u}_2 + \eta_{\kstar}\sumt\inner{u-w_t}{\ell_t-m_t}^2 } - 40\eta_{\kstar}\sumt\inner{w^{\kstar}_t-w_t}{\ell_t-m_t}^2\\
		&= \tilO{ rD + \sqrt{r\sumt\inner{u-w_t}{\ell_t-m_t}^2} } - 40\eta_{\kstar}\sumt\inner{w^{\kstar}_t-w_t}{\ell_t-m_t}^2.
	\end{align*}
	Next, by \pref{thm:master} with $32\eta_k|g_{t,k}-h_{t,k}|=32\eta_k\left|\inner{w^k_t-w_t}{\ell_t-m_t}\right|\leq 64\eta_kD \leq 1$, $\sum_k\eta_k=\Theta(1/D)$, $\sum_k\eta^2_k=\Theta(1/D^2)$, and $\frac{\sum_k\eta_k^2}{\eta_{\kstar}^2}=\bigo{ D^4T^2 }$, we have:
	\begin{align*}
		\sumt\inner{w_t-u}{\ell_t} &= \tilO{ rD + \sqrt{r\sumt\inner{u-w_t}{\ell_t-m_t}^2} + \frac{1}{\eta_{\kstar}} }\\
		&= \tilO{ rD + \sqrt{r\sumt\inner{u-w_t}{\ell_t-m_t}^2} }.
	\end{align*}
	This completes the proof.
\end{proof}

\subsection{Extensions to unconstrained learning and unknown Lipschitzness}
\label{app:unconstrained}





In this subsection, we present general ideas on extending our OLO results to the setting with an unconstrained decision set, unknown Lipschitzness, or both. We focus on $\sqrt{r\sum_t \inner{u}{\ell_t - m_t}^2}$ type of bound  and omit the details for the others for simplicity.

\subsubsection{Unconstrained learning with known Lipschitzness}
We first consider the case where $D=\infty$ and $\max_t \max\{\norm{\ell_t}_2, \norm{\ell_t-m_t}_2\} \leq 1$.
We argue that in this case we can simply assume that $\norm{u}_2\leq 2^T$, so that we only need to maintain $\bigO{T}$ experts.
Suppose the assumption does not hold and $T < \log_2 \norm{u}_2$.
Then, by constraining $\norm{w_t}_2\leq  2^T$, we have: $\sumt\inner{w_t-u}{\ell_t}\leq 2T\norm{u}_2 < 2\norm{u}_2\log_2\norm{u}_2=\tilO{\norm{u}_2}$.
Therefore, running the algorithm in \pref{thm:ONS} assuming the diameter is $2^T$, we obtain the same bound as before:
\begin{align*}
	\reg(u) = \tilO{r\norm{u}_2 + \sqrt{r\sumt\inner{u}{\ell_t-m_t}^2}}.
\end{align*}
Note that when $m_t=0$, the bound we obtained has the same order as that in \citep[Theorem 8]{cutkosky2018black}.

\subsubsection{Constrained learning with unknown Lipschitzness}
Next, we consider the case where $D<\infty$ and $\max_t\norm{\ell_t-m_t}_2$ is unknown.
We can handle this by simply applying our master with unknown loss range (\pref{alg:unknown_range}) with the following expert set generator:

\begin{equation*}
\begin{split}
	\calE_{\text{\rm ONSUL}}(B_0) &= \Big\{ (\eta_k, \calA_k): \forall k \in [N], \eta_k = \tfrac{1}{192DB_0 2^k}, \text{$\calA_k$ is \pref{alg:ONS} } \\
	& \text{with $z_t=B_{t-1} = \max_{0\leq s<t}\norm{\ell_s-m_s}$ for all $t$, $\Omega = \calK$, and $\eta=3\eta_k$} \Big\}.
\end{split}
\end{equation*}
and $\Lambda_t=\Delta_{ \calS_{\text{\rm ONSUL}}(t) }$, where $N=\lceil\log_2T^2 \rceil, \calS_{\text{\rm ONSUL}}(t)=\left\{k\in[N]: \tfrac{1}{192DB_0 2^k}\leq \frac{1}{192DB_{t-1}} \right\}$.

\begin{theorem}
	\label{thm:known_d_unknown_range}
	Let $\max_t \norm{\ell_t - m_t}_2$ be unknown, $r \leq d$ be the rank of $\calL_T =\sumt(\ell_t-m_t)(\ell_t-m_t)^\top$.
	\pref{alg:unknown_range} with expert set generator $\calE_{\text{\rm ONSUL}}$ and $\Lambda_t=\Delta_{ \calS_{\text{\rm ONSUL}}(t) }$ ensures for all $u\in\simplex$,
	\[
		\forall u \in \calK, \;\;  \reg(u) = \tilO{rDB + \sqrt{r\sumt\inner{u}{\ell_t-m_t}^2}}.
	\]
\end{theorem}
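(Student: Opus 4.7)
The plan is to adapt the argument of \pref{thm:unknown_range} to the OLO setting, using \pref{alg:ONS} (with monotonically non-decreasing range hint $z_t = B_{t-1}$) as the base algorithm in place of \alg, and \pref{lem:ONS} in place of \pref{lem:framework}. As in \pref{thm:unknown_range}, truncated losses $\bar\ell_t$ from \pref{eq:truncated_loss} are fed to both the master and the bases, ensuring $\norm{\bar\ell_t - m_t}_2 \leq B_{t-1} = z_t$, so that the range hint available at the start of round $t$ is truly valid. The analysis proceeds in two parts: first I would bound the regret within any single ``no-restart'' epoch, then I would use the geometric doubling enforced by the restart condition $B_t/\tilB > T$ to show that all epochs preceding the last contribute only $\bigO{DB}$.

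Within a no-restart epoch ending at round $T$, the restart rule ensures $B_{T-1}/\tilB \leq T$, so $N = \lceil \log_2 T^2 \rceil$ learning rates suffice, and one can pick $\kstar$ with
\[
\eta_{\kstar} \leq \min\cbr{(192 D B_{T-1})^{-1},\; \sqrt{r\ln T / V_T}} \leq 2\eta_{\kstar},
\]
where $V_T = \sumt \inner{u}{\ell_t - m_t}^2$. By the definition of $\calS_{\text{\rm ONSUL}}(t)$, $\kstar$ remains admissible at every round of the epoch, and together with $\eta = 3\eta_{\kstar}$ and $z_T \leq B_{T-1}$ the condition $64\eta D z_T \leq 1$ of \pref{lem:ONS} is satisfied. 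Applied to the truncated losses, \pref{lem:ONS} produces a regret bound of the form
\[
\tilO{\tfrac{r}{\eta_{\kstar}} + DB + \eta_{\kstar} V_T} - 33 \eta_{\kstar} \sumt \inner{w_t^{\kstar}}{\bar\ell_t - m_t}^2,
\]
using $z_1 \norm{u}_2 \leq DB$ and $D(z_T - z_1) \leq DB$. Then I would invoke \pref{thm:master}: its required condition $32\eta_k |\inner{w_t^k}{\bar\ell_t - m_t}| \leq 1$ reduces to $32\eta_k D B_{t-1} \leq 1$ and is guaranteed by membership in $\calS_{\text{\rm ONSUL}}(t)$. The master's quadratic overhead $+32\eta_{\kstar} \sumt \inner{w_t^{\kstar}}{\bar\ell_t - m_t}^2$ is fully absorbed by the $-33\eta_{\kstar}$ term above, giving a per-epoch bound of $\tilO{rDB + \sqrt{rV_T}}$ against the $\bar\ell_t$'s. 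The gap between $\bar\ell_t$ and $\ell_t$ contributes at most
\[
\sumt \inner{w_t - u}{\ell_t - \bar\ell_t} \leq 2D \sumt \norm{\ell_t - \bar\ell_t}_2 \leq 2D \sumt (B_t - B_{t-1}) \leq 2DB,
\]
which is absorbed into the $rDB$ term.

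For the general restart case, I would partition $[1,T]$ into $[1,\tau_1]$, $(\tau_1, \tau_2]$, $(\tau_2, T]$ exactly as in the proof of \pref{thm:unknown_range}, where $\tau_2$ is the last restart time and $\tau_1$ the second-to-last (or $0$ if only one restart occurs). The middle and last intervals satisfy $B_{t-1}/\tilB \leq T$ throughout and are handled by the single-epoch argument above, contributing $\tilO{rDB_{\tau_2} + \sqrt{r V^{(\tau_1,\tau_2]}}}$ and $\tilO{rDB + \sqrt{r V^{(\tau_2,T]}}}$ respectively. The initial interval $[1,\tau_1]$ is bounded by $\bigO{DB}$ using the geometric doubling property $B_{\tau_2} > T B_{\tau_1}$, analogously to \pref{eq:first restart interval}. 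Summing the three contributions and applying Cauchy--Schwarz to combine the at-most-two square-root terms yields the claimed $\tilO{rDB + \sqrt{rV_T}}$ bound.

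The main technical obstacle is aligning the constants so that the $-33\eta_{\kstar}$ term from \pref{lem:ONS} precisely absorbs the $+32\eta_{\kstar}$ master overhead, which fixes the choice $\eta = 3\eta_{\kstar}$ in the base algorithm. A secondary concern is verifying that $\kstar \in \calS_{\text{\rm ONSUL}}(t)$ uniformly over the epoch, which requires setting the upper bound on $\eta_{\kstar}$ to $(192 D B_{T-1})^{-1}$ rather than $(192 D B_0)^{-1}$; this is precisely the role of the restart scheme, since without it the ratio $B_{T-1}/\tilB$ could exceed the $2^N$ range covered by the experts. The monotonicity $z_t \leq z_{t+1}$ required by \pref{lem:ONS} is automatic from $B_{t-1}$ being a running maximum.
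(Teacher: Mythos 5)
Your proposal is correct and follows essentially the same approach as the paper: feed truncated losses $\bar\ell_t$ to \pref{alg:unknown_range}, pick $\kstar$ with $\eta_{\kstar} \leq \min\{(192DB_{T-1})^{-1}, \sqrt{r\ln T/V_T}\} \leq 2\eta_{\kstar}$, invoke \pref{lem:ONS} with $\eta = 3\eta_{\kstar}$ so the $-33\eta_{\kstar}$ term absorbs the master's $+32\eta_{\kstar}$ overhead from \pref{thm:master}, bound the true-versus-truncated discrepancy by $O(DB)$, and handle restarts via the three-interval decomposition and Cauchy--Schwarz. The only minor discrepancy is that the paper quotes the $[1,\tau_1]$ contribution as $B$ (citing \pref{eq:first restart interval}) whereas for OLO it should carry the diameter factor $D$, which your $O(DB)$ correctly reflects.
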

\begin{proof}
	We first show that when there is no restart before episode $t$, we obtain the desired regret bound. The assumption implies that $\frac{B_{T-1}}{B_0}\leq T$, and thus $\frac{1}{\max\{192, T\}DB_{T-1}} \geq \frac{1}{192DB_02^N}$. Therefore, there exists $\kstar$ such that
	\begin{align*}
		\eta_{\kstar} \leq \min\left\{\frac{1}{192DB_{T-1}}, \sqrt{\frac{r\ln(TB_{T-1}/B_0)}{\sumt\inner{u}{\bar{\ell}_t-m_t}^2}} \right\} \leq 2\eta_{\kstar}.
	\end{align*}
	Hence, by \pref{lem:ONS} with $64\cdot 3\eta_{\kstar}DB_{T-1}\leq 1$, we have:
	\begin{align*}
		\sumt\inner{w^{\kstar}_t-u}{\bar{\ell}_t} &= \bigO{ \frac{r\ln T}{\eta_{\kstar}} + DB + \eta_{\kstar}\sumt\inner{u}{\bar{\ell}_t-m_t}^2 } - 33\eta_{\kstar}\sumt\inner{w^{\kstar}_t}{\ell_t-m_t}^2\\
		&= \tilO{ rDB + \sqrt{r\sumt\inner{u}{\bar{\ell}_t-m_t}^2} } - 33\eta_{\kstar}\sumt\inner{w^{\kstar}_t}{\ell_t-m_t}^2.
	\end{align*}
	By \pref{thm:master} with $32\eta_k\left|\inner{w^k_t}{\bar{\ell}_t-m_t}\right|\leq 32\eta_kDB_{t-1} \leq 1$ for any $k\in \calS_{\text{\rm ONSUL}}(t)$, $\sum_k\eta_k=\Theta(\frac{1}{DB_0})$, $\sum_k\eta^2_k=\Theta(\frac{1}{D^2B_0^2})$, and $\frac{\sum_k\eta_k^2}{\eta_{\kstar}^2}=\bigo{\eta_1^2/\eta_{\kstar}^2}=\bigo{ T^4 }$, we have:
	\begin{align*}
		\sumt\inner{w_t-u}{\bar{\ell}_t} = \tilO{ rDB + \sqrt{r\sumt\inner{u}{\bar{\ell}_t-m_t}^2} }.
	\end{align*}
	Moreover, note that,
	\begin{align*}
		\sumt\inner{w_t-u}{\ell_t - \bar{\ell}_t} &\leq 2\sumt D\norm{\ell_t-\bar{\ell_t}}_2 \leq 2\sumt D\frac{B_t-B_{t-1}}{B_t}\norm{\ell_t-m_t}_2\\
		&\leq 2D\sumt (B_t - B_{t-1}) \leq 2DB.
	\end{align*}
	Therefore, by $\inner{u}{\bar{\ell_t}-m_t}^2 = (1-\frac{B_{t-1}}{B_t})^2\inner{u}{\ell_t-m_t}^2 \leq \inner{u}{\ell_t-m_t}^2$,
	\begin{align*}
		\sumt\inner{w_t - u}{\ell_t} &= \sumt\inner{w_t-u}{\bar{\ell}_t} + \sumt\inner{w_t-u}{\ell_t-\bar{\ell}_t}\\
		&= \tilO{ rDB + \sqrt{r\sumt\inner{u}{\ell_t-m_t}^2} }.
	\end{align*}
	Finally, we assume there are at least one restarts.
	Following similar analysis in the proof of \pref{thm:unknown_range}, we consider regret in the following three intervals: $[1,\tau_1], (\tau_1,\tau_2]$, and $(\tau_2, T]$.
	The regret in $[1, \tau_1]$ is bounded by $B$ according to \pref{eq:first restart interval}.
	By $B_{\tau_2-1}/B_{\tau_1}\leq T, B_{T-1}/B_{\tau_2}\leq T$, we have:
	\begin{align*}
		\reg^{(\tau_1, \tau_2]}(u) &= \bigO{ rDB + \sqrt{r\sum_{t\in (\tau_1, \tau_2]}\inner{u}{\bar{\ell}_t-m_t}^2} }\\
		\reg^{(\tau_2, T]}(u) &= \bigO{ rDB + \sqrt{r\sum_{t\in (\tau_2, T]}\inner{u}{\bar{\ell}_t-m_t}^2} }.
	\end{align*}
	Summing the regret in three intervals and applying the Cauchy-Schwarz inequality, we get the desired result.
\end{proof}

\subsubsection{Unconstrained learning with unknown Lipschitzness}
Finally, we consider the case where $D=\infty$ and $\max_t\norm{\ell_t-m_t}_2$ is unknown. \citet{cutkosky2019artificial,mhammedi2020lipschitz} show that to obtain $\tilO{\sqrt{T}}$ regret, it is sufficient to control the diameter of decision set to be of order $\tilO{\sqrt{T}}$.
Specifically, they set the size of the decision set to be $\sqrt{\max_{s\leq t}\sum_{s'=1}^{s}\norm{\ell_{s'}}_2/G_s}$ in episode $t$, where $G_t=\max_{s\leq t}\norm{\ell_s}_2$.
To bound the regret when the comparator is not in the decision set, they make use of a reduction to constrained domain~\citep{cutkosky2018black}.
However, their reduction is not directly applicable in our case, since the reduction modifies the loss function and ruins the data-dependent bound.
There is a follow up work~\citep{cutkosky2020better} achieving the bound $\tilO{\sqrt{r\sumt\inner{u}{\ell_t}^2}}$ under constrained domain by adapting to time-dependent norms.
However, it is not obvious how to incorporate predictor $m_t$ into their algorithm.

Here, we take a different route: we search over the appropriate constraint of the decision set with doubling trick: if in episode $t$ we find that $\sqrt{\sum_{s=1}^t\norm{\ell_s}_2/G_t}>D_t$, where $D_t$ is the diameter of decision set in episode $t$,
then, we let $D_{t+1}=2\sqrt{\sum_{s=1}^t\norm{\ell_s}_2/G_t}$, and restart the algorithm with the new decision set.
Otherwise we let $D_{t+1}=D_t$.
The number of restart is $\bigO{\log_2T}$ since $\max_{s\leq t}\sum_{s'=1}^{s}\norm{\ell_{s'}}_2/G_s \leq T$.
We summarize our algorithm as a new variant of \master in \pref{alg:unknown-D}.

\setcounter{AlgoLine}{0}
\begin{algorithm}
	\caption{\master with unknown loss range and unbounded diameter}
	\label{alg:unknown-D}
	\textbf{Input:} An expert set generator $\calE$ that takes diameter and initial scale as input, initial scale $B_0$.
	
	\textbf{Initialization:} $D_1=1$. Initialize $\calA$ as an instance of \pref{alg:unknown_range} with input $\calE(D_1, \cdot)$ and $B_0$.
	
	\For{$t=1,\ldots,T$}{
		Execute $\calA$ for episode $t$.
		
		\If{$D_t < \sqrt{\sum_{s=1}^t\frac{\norm{\ell_s}_2}{G_t}}$}{
			$D_{t+1}=2\sqrt{\sum_{s=1}^t\frac{\norm{\ell_s}_2}{G_t}}$.
		
			Initialize $\calA$ as an instance of \pref{alg:unknown_range} with input $\calE(D_{t+1}, \cdot)$ and $B_t$.
		}
		\Else{
			$D_{t+1}=D_t$.
		}
	}
\end{algorithm}

Now we show how to extend the regret bound of ONS to the setting with unconstrained diameter and unknown Lipschitzness.
\begin{theorem}
	Define the expert set generator:
	\begin{equation*}
	\begin{split}
		\calE_{\text{\rm ONSULD}}(D, B_0) &= \Big\{ (\eta_k, \calA_k): \forall k \in [N], \eta_k = \tfrac{1}{192DB_0 2^k}, \text{$\calA_k$ is \pref{alg:ONS} } \\
		&\qquad\qquad \text{with $z_t=B_{t-1} = \max_{0\leq s<t}\norm{\ell_s-m_s}$ for all $t$, $\Omega = \calK\cap\{w: \norm{w}_2\leq D\}$, }\\
		&\qquad\qquad\text{and $\eta=3\eta_k$} \Big\}.
	\end{split}
	\end{equation*}
	
	Then, \pref{alg:unknown-D} with input $\calE_{\text{\rm ONSULD}}, B_0$, ensures
	\begin{align*}
		\reg(u)=\tilO{ \sqrt{r\sumt\inner{u}{\ell_t-m_t}^2} + rB\sqrt{\max_{t\leq T}\sum_{s=1}^{t}\norm{\ell_{s}}_2/G_t} + G_T\norm{u}_2^3 },
	\end{align*}
	where $G_T = \max_{t\leq T}\norm{\ell_t}$, $B = \max\{B_0, \max_{t\leq T}\norm{\ell_t-m_t}\}$.
\end{theorem}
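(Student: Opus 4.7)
The plan is to decompose the horizon into phases determined by restarts of the outer doubling scheme, apply the known-diameter result \pref{thm:known_d_unknown_range} inside each phase, and pay a separate approximation error for comparing against the true $u$ using a scaled surrogate that fits in the constrained domain of each phase. Let $0 = \tau_0 < \tau_1 < \cdots < \tau_{J-1} < \tau_J = T$ denote the phase boundaries, so that phase $j$ uses diameter $D_j$ and decision subset $\Omega_j = \calK \cap \{w : \norm{w}_2 \leq D_j\}$ over $t \in (\tau_{j-1}, \tau_j]$. Since the restart rule enforces $D_{j+1} > 2D_j$, we will have $J = \bigO{\log T}$ and $\sum_j D_j = \bigO{D_J}$ via a geometric sum.

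For each phase $j$ I would define the surrogate $u'_j = \min\{1, D_j/\norm{u}_2\} \cdot u$, which lies in $\Omega_j$ because $\calK$ is convex with $0, u \in \calK$. Applying \pref{thm:known_d_unknown_range} to the fresh instance of \pref{alg:unknown_range} running in that phase should give
\begin{align*}
\sum_{t \in (\tau_{j-1}, \tau_j]} \inner{w_t - u'_j}{\ell_t} = \tilO{r D_j B + \sqrt{r \sum_{t \in (\tau_{j-1}, \tau_j]} \inner{u'_j}{\ell_t - m_t}^2}}.
\end{align*}
Summing over $j$, majorizing $\inner{u'_j}{\ell_t - m_t}^2 \leq \inner{u}{\ell_t - m_t}^2$ via $\norm{u'_j}_2 \leq \norm{u}_2$, applying Cauchy--Schwarz across the $J$ phases, and using $D_J \leq 2\sqrt{\max_{t\leq T}\sum_{s\leq t}\norm{\ell_s}_2/G_t}$ from the doubling rule would produce the first two target terms $\sqrt{r\sum_t\inner{u}{\ell_t-m_t}^2}$ and $rB\sqrt{\max_{t\leq T}\sum_{s\leq t}\norm{\ell_s}_2/G_t}$ up to polylogarithmic factors.

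The remaining task is to bound the approximation error $\sum_j \sum_{t \in (\tau_{j-1}, \tau_j]} \inner{u'_j - u}{\ell_t}$, which is nonzero only in phases with $\norm{u}_2 > D_j$; in those phases $\norm{u'_j - u}_2 = \norm{u}_2 - D_j \leq \norm{u}_2$. For $j < J$, the phase-ending condition $D_j^2 G_{\tau_j} < \sum_{s \leq \tau_j}\norm{\ell_s}_2$ combined with $D_{j+1} = 2\sqrt{\sum_{s \leq \tau_j}\norm{\ell_s}_2/G_{\tau_j}}$ will give $\sum_{t \in (\tau_{j-1}, \tau_j]}\norm{\ell_t}_2 \leq D_{j+1}^2 G_T / 4$, and for the final phase the no-restart condition similarly gives $\sum_{t \in (\tau_{J-1}, T]}\norm{\ell_t}_2 \leq D_J^2 G_T$. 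Hence the total approximation error should be at most $\bigO{\norm{u}_2 G_T \sum_j D_j^2} = \bigO{\norm{u}_2 G_T D_J^2}$ by another geometric telescoping. In the only regime where this is dominant---namely $\norm{u}_2 > D_J$---we have $D_J^2 < \norm{u}_2^2$, which produces exactly the $G_T \norm{u}_2^3$ term; otherwise it is absorbed by the middle term of the claimed bound.

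The main obstacle will be the careful bookkeeping of the two geometric telescopes coupled with the case split on whether $\norm{u}_2 > D_j$, and verifying that chaining two nested doubling schemes (inner for unknown Lipschitzness via \pref{alg:unknown_range}, outer for unknown diameter via \pref{alg:unknown-D}) does not introduce extra polynomial factors beyond the $\log T$ already hidden in $\tilO{\cdot}$. A subsidiary check is that the per-phase Lipschitz constant governing the $rDB$ factor is bounded by the global $B = \max\{B_0, \max_t \norm{\ell_t - m_t}_2\}$ rather than by $G_T$, so the middle term remains multiplicative in $B$ as stated.
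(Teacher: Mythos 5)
Your phase structure, projected surrogate $u'_j = \min\{1, D_j/\norm{u}_2\}\,u$, majorization $\inner{u'_j}{\ell_t-m_t}^2 \le \inner{u}{\ell_t-m_t}^2$, and per-phase application of \pref{thm:known_d_unknown_range} all match the paper's proof. The gap is in how you account for the approximation error $\sum_j\sum_{t}\inner{u'_j-u}{\ell_t}$.

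You bound the per-phase gradient mass via the doubling \emph{update}, i.e., $D_{j+1}=2\sqrt{\sum_{s\le\tau_j}\norm{\ell_s}_2/G_{\tau_j}}$ gives $\sum_{s\le\tau_j}\norm{\ell_s}_2 = D_{j+1}^2 G_{\tau_j}/4$, and then sum over offending phases to get $\bigO{\norm{u}_2 G_T D_J^2}$. That inequality is true but points the wrong way: in the regime $\norm{u}_2 \le D_J$ it yields something of size up to $G_T D_J^3$, and there is no reason this is $\bigO{r B D_J}$ (your ``absorbed by the middle term'' step). The quantities $G_T D_J^2$ and $rB$ are not comparable in general, so the case split does not close.

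The right move is to use the other side of the doubling, the \emph{invariant} that holds while the algorithm has not restarted: at the start of episode $t$ one has $D_t \ge \sqrt{\sum_{s\le t-1}\norm{\ell_s}_2/G_{t-1}}$. Pick $t_\star$ to be the last episode with $u \ne u'_{j(t_\star)}$, i.e., $\norm{u}_2 > D_{t_\star}$. Then $\sum_{s<t_\star}\norm{\ell_s}_2 \le D_{t_\star}^2 G_{t_\star-1} < \norm{u}_2^2 G_T$, and since the approximation error vanishes for $t>t_\star$,
\begin{align*}
\sum_{t=1}^T \inner{u'_{j(t)}-u}{\ell_t} \;\le\; 2\norm{u}_2 \sum_{t=1}^{t_\star}\norm{\ell_t}_2 \;\le\; 2\norm{u}_2\bigl(\norm{u}_2^2 G_T + G_T\bigr) \;=\; \bigO{G_T\norm{u}_2^3 + G_T\norm{u}_2},
\end{align*}
giving the $G_T\norm{u}_2^3$ term directly and uniformly, with no case split and no dependence on $D_J$. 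This is exactly what the paper does. Everything else in your outline is fine, including the check that the $rDB$ terms stay multiplicative in $B$ (not in $G_T$): that falls out of \pref{thm:known_d_unknown_range} applied per phase with $M=\bigO{\log T}$.
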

\begin{proof}
	We split $T$ episodes into $M$ intervals $I_{1:M}$, where the last episode of $I_m$ (denote by $t_m$) either equals to $T$ or $D_{t_m+1}\neq D_{t_m}$.
	Define projection function $f(u, D) = \min\left\{1, \frac{D}{\norm{u}_2}\right\}u$.
	Then, the regret is bounded as follows (note that $D_t=D_{t_m}$ for all $t\in I_m$):
	\begin{align*}
		\sumt\inner{w_t-u}{\ell_t} &= \sum_{m=1}^M\sum_{t\in I_m}\inner{w_t-f(u, D_{t_m})}{\ell_t} + \sumt\inner{f(u, D_t)-u}{\ell_t}.
	\end{align*}
	For the first term, by \pref{thm:known_d_unknown_range} with $\inner{f(u, D)}{\ell_t-m_t}^2\leq \inner{u}{\ell_t-m_t}^2$ for any $D>0$, and $M=\bigO{\log_2T}$, we obtain
	\begin{align*}
		\sum_{m=1}^M\sum_{t\in I_m}\inner{w_t-f(u, D_{t_m})}{\ell_t} &=\bigO{ \sum_{m=1}^M rD_{t_m}B + \sqrt{r\sum_{t\in I_m}\inner{u}{\ell_t-m_t}^2} }\\
		&= \tilO{rD_TB + \sqrt{r\sumt\inner{u}{\ell_t-m_t}^2} }.
	\end{align*}
	For the second term, denote by $t_\star$ the last episode such that $u\neq f(u, D_{t_\star})$.
	Then, $\norm{u}_2\geq \sqrt{\sum_{t=1}^{t_\star-1}\norm{\ell_t}_2/G_{t_\star-1}}$, $\norm{u}_2 \geq \norm{f(u, D_t)}_2$ for $t\leq t_\star$, and
	\begin{align*}
		\sumt\inner{f(u, D_t)-u}{\ell_t} &= \sum_{t=1}^{t_\star}\inner{f(u, D_t)-u}{\ell_t} \leq 2\sum_{t=1}^{t_\star-1}\norm{u}_2\norm{\ell_t}_2 + 2\norm{u}_2G_T\\
		&\leq 2\norm{u}_2G_T\sum_{t=1}^{t_\star-1}\frac{\norm{\ell_t}_2}{G_{t_\star-1}} + 2\norm{u}_2G_T \leq 2G_T\norm{u}_2^3+2\norm{u}_2G_T.
	\end{align*}
\end{proof}

\end{document}